\tikzset{cross/.style={cross out, draw, 
         minimum size=4,%
         inner sep=0pt, outer sep=0pt}}
     \definecolor{cadmiumgreen}{rgb}{0.0, 0.42, 0.24}
\pgfplotsset{compat=1.11}
\tikzset{
    right angle quadrant/.code={
        \pgfmathsetmacro\quadranta{{1,1,-1,-1}[#1-1]}     %
        \pgfmathsetmacro\quadrantb{{1,-1,-1,1}[#1-1]}},
    right angle quadrant=1, %
    right angle length/.code={\def\rightanglelength{#1}},   %
    right angle length=2ex, %
    right angle symbol/.style n args={3}{
        insert path={
            let \p0 = ($(#1)!(#3)!(#2)$) in     %
                let \p1 = ($(\p0)!\quadranta*\rightanglelength!(#3)$), %
                \p2 = ($(\p0)!\quadrantb*\rightanglelength!(#2)$) in %
                let \p3 = ($(\p1)+(\p2)-(\p0)$) in  %
            (\p1) -- (\p3) -- (\p2)
        }
    }
}
\newcommand{\AT}{AT\xspace}
\title{How benign is benign overfitting?}
\author{}
\date{}
\author[1,2]{Amartya Sanyal\thanks{amartya.sanyal@cs.ox.ac.uk.}}%
\author[3,4]{Puneet K. Dokania\thanks{puneet@robots.ox.ac.uk}}
\author[1,2]{Varun Kanade\thanks{varunk@cs.ox.ac.uk.}}%
\author[3]{Philip H.S. Torr\thanks{phst@robots.ox.ac.uk}}
\affil[1]{Department of Computer Science, University of Oxford}
\affil[2]{The Alan Turing Institute}
\affil[3]{Department of Engineering Science, University of Oxford}
\affil[4]{Five AI Ltd., UK}
\begin{document}
\maketitle
\begin{abstract}

We investigate two causes for adversarial vulnerability in deep neural
networks: bad data and (poorly) trained models. When trained with SGD, deep
neural networks essentially achieve zero training error, even in the presence
of label noise, while also exhibiting good generalization on natural
test data, something referred to as benign
overfitting~\cite{Bartlett2020,Chatterji2020}. However, these models
are vulnerable to \emph{adversarial attacks}. We identify 
\emph{label noise} as one of the causes for adversarial vulnerability, and
provide theoretical and empirical evidence in support of this. Surprisingly, we
find several instances of label noise in datasets such as MNIST and CIFAR, and
that robustly trained models incur training error on some of these, i.e. they
don't fit the noise. However, removing noisy labels alone does not suffice to
achieve adversarial robustness. Standard training procedures bias neural
networks towards learning ``simple'' classification boundaries, which may be
less robust than more complex ones. We observe that adversarial
training does produce more complex decision boundaries. We conjecture that in part the need
for complex decision boundaries arises from sub-optimal \emph{representation
learning}. By means of simple toy examples, we show theoretically how the
choice of representation can drastically affect adversarial robustness.
\end{abstract}
\section{Introduction}
\label{sec:introduction}

Modern machine learning methods achieve a very high accuracy on wide range of
tasks, e.g. in computer vision, natural language
processing, etc.~\citep{KSH:2012,graves2013speech,he2015delving, Zagoruyko_2016,vaswani2017attention,ren2015faster}, but especially in vision tasks, they have
been shown to be highly vulnerable to small adversarial 
perturbations that are imperceptible to the human
eye~\citep{Dalvi2004,Biggio2018,szegedy2013intriguing,goodfellow2014explaining,Carlini2017,Papernot2016,mosaavi2016}. This
vulnerability poses serious security concerns when these models are deployed in real-world tasks
(cf.~\citep{kurakin2016adversarial,kurakin2016,Papernot2017,Schoenherr2018,Hendrycks2019,li2019adversarial}). A
  large body of research has  been devoted to crafting
defences to protect neural networks from adversarial
attacks~(e.g.~\citep{goodfellow2014explaining,Papernot2015,cisse17a,Xu2017,he2017adversarial,Carlini2017a,tramer2018ensemble,madry2018towards,Zhang2019}).  However, such defences have usually been broken by future attacks~\citep{athalye2018obfuscated, Tramer2020}. This arms race between
attacks and defences suggests that to create a truly robust model
would require a deeper understanding of the source of this vulnerability. 

Our goal in this paper is not to propose new defences, but to provide better
answers to the question: what causes adversarial vulnerability? In doing so, we
also seek to understand how existing methods designed to achieve adversarial
robustness overcome some of the hurdles pointed out by our work.  We identify
two sources of vulnerability that, to the best of our knowledge, have not been
properly studied before: a) memorization of label noise, and b) the implicit
bias in the decision boundaries of neural networks trained with stochastic
gradient descent (SGD). 
\begin{figure}[!htb]
	\centering
  	\begin{subfigure}[b]{0.45\linewidth}
		\def\svgwidth{0.99\linewidth}
    	\input{./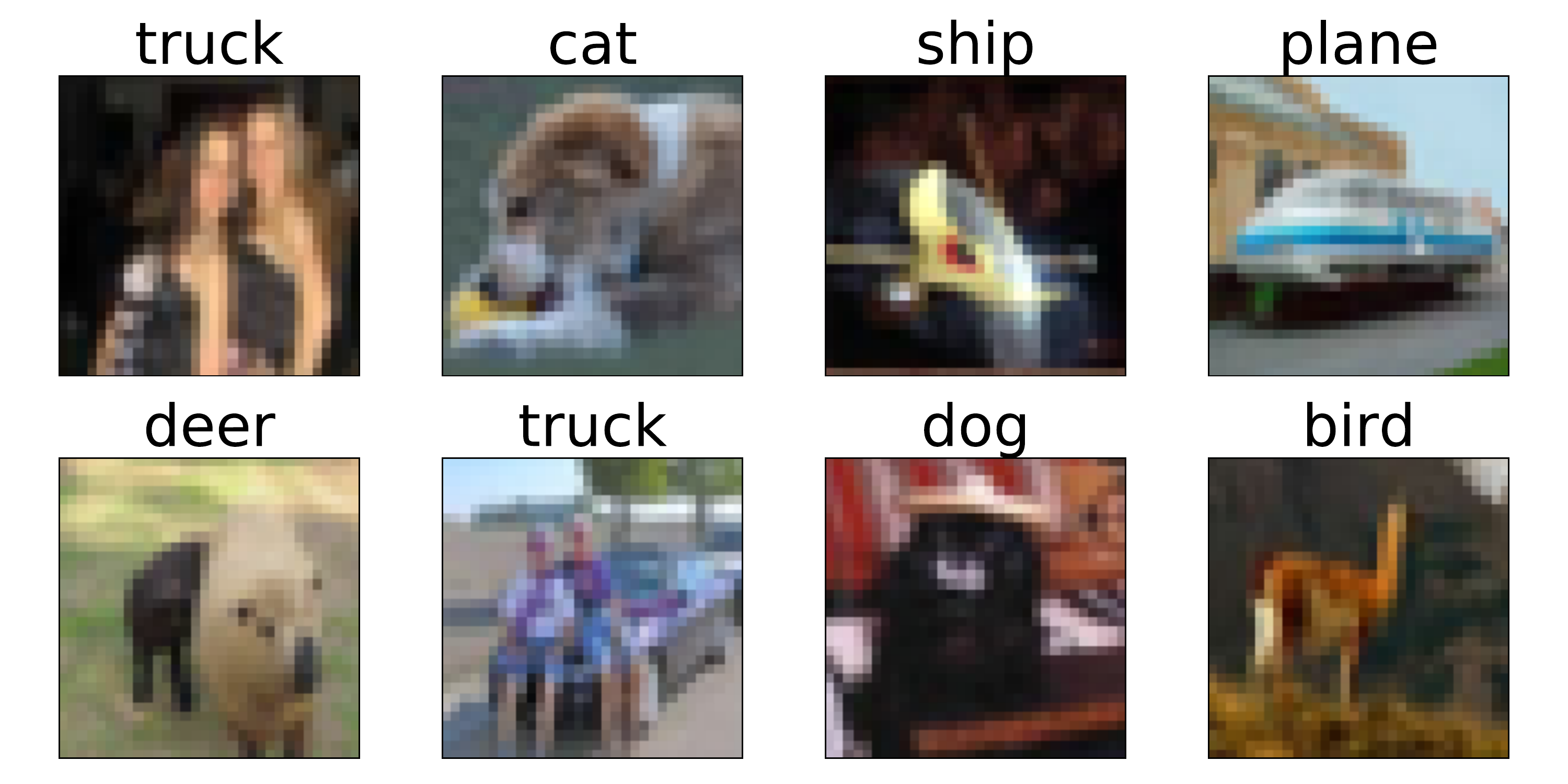_tex}
    	\caption*{CIFAR10}
  	\end{subfigure}
  	\begin{subfigure}[b]{0.45\linewidth}
   	\def\svgwidth{0.99\linewidth}
    	\input{./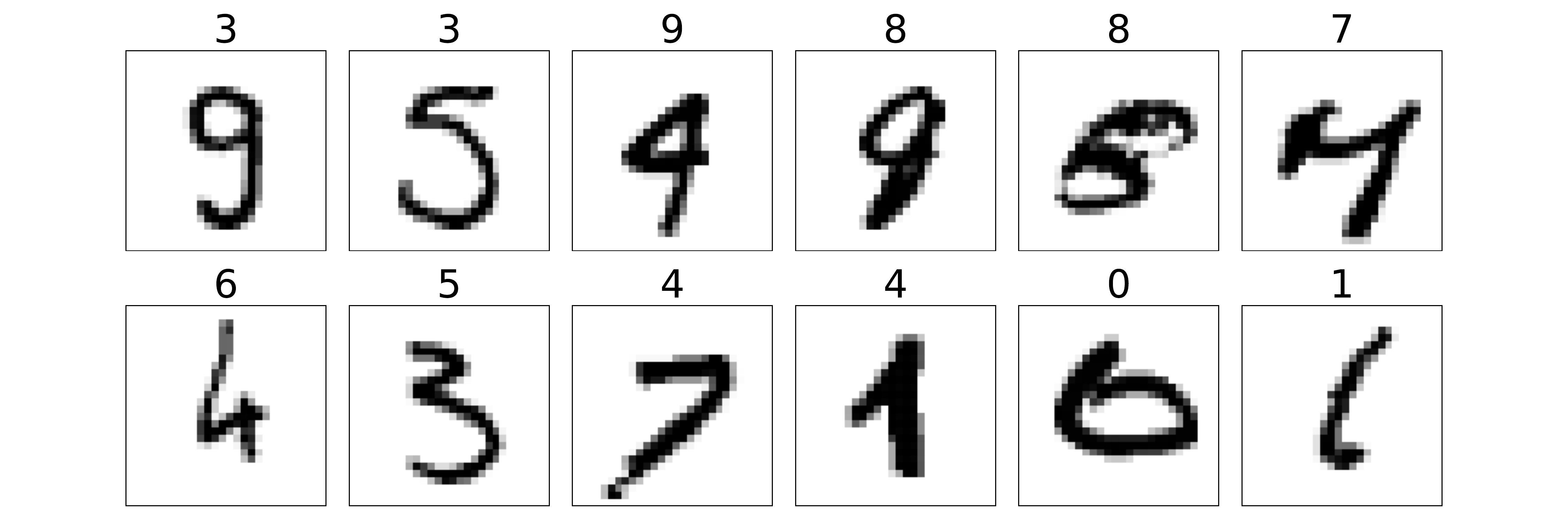_tex}
    	\caption*{MNIST}
  	\end{subfigure}
	 \caption{\label{fig:mislabelled_ds}\footnotesize Label Noise in CIFAR10 and MNIST. Text above the image indicates the training set label.}
     \end{figure}
     
First, in the case of label noise, starting with the celebrated work
of~\citet{Zhang2016} it has been observed that neural networks trained with SGD
are capable of memorizing large amounts of label noise. Recent theoretical
work~(e.g.~\citep{Liang2018,belkin18akernel,Belkin2018,Hastie2019,Belkin2019,belkin19a,Bartlett2020,Muthukumar2020,Chatterji2020})
has also sought to explain why fitting training data perfectly (also referred
to as \emph{memorization} or \emph{interpolation}) does not lead to a large
drop in test accuracy, as the classical notion of overfitting might suggest.
We show through simple theoretical models, as well as experiments, that there
are scenarios where label noise does cause significant adversarial
vulnerability, even when high natural~(test) accuracy can be achieved. Surprisingly,
we find that label noise is not at all uncommon in datasets such as MNIST and
CIFAR-10 (see~\cref{fig:mislabelled_ds}). Our experiments show that robust
training methods like Adversarial training~(\AT)~\citep{madry2018towards} and
TRADES~\citep{Zhang2019} produce models that incur training error on at least
some of the noisy examples,\footnote{We manually inspected all training set
errors of these models.} but also on atypical examples from the classes.
Viewed differently, robust training methods are unable
to differentiate between atypical correctly labelled examples~(rare dog) and a
mislabelled example~(cat labelled as dog) and end up not memorizing either;
interestingly, the lack of memorizing these atypical examples has been pointed
out as an explanation for slight drops in test accuracy, as the test set often
contains similarly atypical (or even identical) examples in some
cases~\cite{Feldman2019,Zhang2020}.

Second, the fact that adversarial learning may require more ``complex''
decision boundaries, and as a result may require more data has been pointed out
in some prior
work~\citep{schmidt2018adversarially,pmlr-v97-yin19b,Nakkiran2019,madry2018towards}.
However, the question of decision boundaries in neural networks is subtle as
the network learns a \emph{feature representation} as well as a decision
boundary on top of it. We develop theoretical examples that establish that
choosing one feature representation over another may lead to
\emph{visually} more complex decision boundaries
on the input space, though these are not necessarily more complex in terms of
statistical learning theoretic concepts such as VC dimension. One way
to evaluate whether more meaningful representations lead to better
robust accuracy is to use training data with more fine-grained labels
(e.g. subclasses of a class); for example, one would expect that if
different breeds of dogs are labelled differently the network will
learn features that are relevant to that extra information. We show
both using synthetic data and CIFAR100 that training on fine-grained
labels does increase robust accuracy.

\citet{tsipras2018robustness} and \citet{Zhang2019} have argued that the
trade-off between robustness and accuracy might be unavoidable. However, their
setting involves a distribution that is not robustly separable  by any
classifier. In such a situation there is indeed a trade-off 
between robustness and accuracy. In this paper, we focus on settings where
robust classifiers exist, which is a more realistic scenario for real-world
data. At least for vision, one may well argue that ``humans''
	are robust classifiers, and as a result we would expect that classes are
	well-separated at least in some representation space. In fact, \citet{Yang2020} show that classes are already well-separated in the input space. In
	such situations, there is no need for robustness to be at odds with
	accuracy. A more plausible scenario which we posit, and provide theoretical
	examples in support of, is that the trained models may not be using the
	``right'' representations. Recent empirical work has also established that
	modifying the training objective to favour certain properties in the learned
	representations can automatically lead to improved
robustness~\cite{Sanyal2020LR}. %

\textbf{Summary of Theoretical Contributions} \vspace{-1.3em} \\
\begin{enumerate}[leftmargin=0.5cm,itemsep=-0.3em]
	\item We provide simple sufficient conditions on the data distribution under which any classifier that fits the training data with label noise perfectly is adversarially vulnerable.
	\item The choice of the representation~(and hence the shape of the decision
		boundary) can be important for adversarial accuracy even when it doesn't
		affect natural test accuracy.
	\item There exists data distributions and training algorithms, which when
		trained with (some fraction of) random label noise have the following
		property: (i) using one representation, it is possible to have high
		natural and robust test accuracies but at the cost of having training error;
		(ii) using another representation, it is possible to have no training
		error (including fitting noise) and high test accuracy, but low robust
		accuracy. Furthermore, any classifier that has no training error must
		have low robust accuracy.
\end{enumerate}

\nocite{krizhevsky2009learning}\nocite{LBBH:1998}
The last example shows that the choice of representation matters significantly when it comes to adversarial accuracy, and that memorizing label noise directly leads to loss of robust accuracy. The proofs of the results are not technically complicated and are included in the supplementary material. We have focused on making conceptually clear statements rather than optimize the parameters to get the best possible bounds. We also perform experiments on synthetic data (motivated by the theory), as well as MNIST, CIFAR10/100 to test these hypotheses.

\textbf{Summary of Experimental Contributions} \vspace{-1em} \\
\begin{enumerate}[leftmargin=0.5cm,itemsep=0ex]
	\item  As predicted theoretically, neural nets trained to convergence with
		label noise have greater adversarial vulnerability.
	\item Robust training methods, such as \AT and TRADES that have higher
		robust accuracy, avoid overfitting (some) label noise. This behaviour is
		also partly responsible for their decrease in natural test accuracy.
	\item  Even in the absence of any label noise, methods like \AT and TRADES
		have higher robust accuracy due to more complex decision
		boundaries.
	\item When trained with more fine-grained labels, subclasses
          within each class, leads to higher robust accuracy.
\end{enumerate}

\section{Theoretical Setting}
\label{sec:theoretical-setting}

We develop a simple theoretical framework to demonstrate how overfitting, even
very minimal, label noise causes significant adversarial vulnerability. We also
show how the choice of representation can significantly affect robust accuracy.
Although we state the results for binary classification, they can easily be
generalized to multi-class problems. We formally define the notions of natural
(test) error and adversarial error.

\begin{defn}[Natural and Adversarial Error]\label{defn:gen_risk} 
  For any distribution $\cD$ defined over
  $\br{\vec{x},y}\in\reals^d\times\bc{0,1}$ and any binary classifier
  $f:\reals^d\rightarrow\bc{0,1}$,
  \begin{itemize}
	  \item the \emph{natural} error is 
		  \begin{equation} 
			  \risk{\cD}{f}=\bP_{\br{\vec{x},y}\sim\cD}\bs{f\br{\vec{x}}\neq y},
		  \end{equation} 
	  \item if $\cB_\gamma\br{\vec{x}}$ is a ball of radius
            $\gamma \ge 0$ around $\vec{x}$ under some
            norm\footnote{Throughout, we will mostly use the (most commonly used) $\ell_\infty$ norm, but the results hold for other norms.},
the $\gamma$-\emph{adversarial} error is 
		  \begin{equation} 
			  \radv{\gamma}{f;\cD}=\bP_{\br{\vec{x},y}\sim\cD}\bs{\exists \vec{z}\in\cB_{\gamma}\br{\vec{x}};f\br{\vec{z}}\neq y},
		  \end{equation} 
  \end{itemize}
\end{defn}
In the rest of the section, we provide theoretical results to show the effect of overfitting label
noise and choice of representations~(and hence simplicity of decision
boundaries)  on the robustness of classifiers.

\subsection{Overfitting Label Noise}
\label{sec:overf-label-noise}

The following result provides a sufficient condition under which even a
small amount of label noise causes any classifier that fits the training data
perfectly to have significant adversarial error. Informally,~\Cref{thm:inf-label}
states that if the data distribution has significant probability mass in a
union of (a relatively small number of, and possibly overlapping) balls, each of which has roughly the
same probability mass (cf. Eq.~\eqref{eq:balls_density}), then even a small
amount of label noise renders this entire region vulnerable to adversarial
attacks to classifiers that fit the training data perfectly.

\begin{restatable}{thm}{infectedballs}\label{thm:inf-label}
  Let $c$ be the target classifier, and let $\cD$ be a distribution over
  $\br{\vec{x},y}$, such that $y=c\br{\vec{x}}$ in its support. Using the
  notation $\bP_D[A]$ to denote $\bP_{(\vec{x}, y) \sim \cD}[ \vec{x} \in A]$
  for any measurable subset $A \subseteq \reals^d$, suppose that there
  exist $c_1 \geq c_2 > 0$, $\rho>0$, and a finite set $\zeta \subset
  \reals^d$ satisfying
  \begin{equation}
    \label{eq:balls_density}
	 \bP_\cD\bs{\bigcup_{\vec{s}\in\zeta}\cB_\rho^p\br{\vec{s}}}\ge c_1\quad\text{and}\quad\forall \vec{s}\in\zeta,~\bP_\cD\bs{\cB_{\rho}^p\br{\vec{s}}}\ge\frac{c_2}{\abs{\zeta}}
  \end{equation}
  where $\cB_\rho^p\br{\vec{s}}$ represents a $\ell_p$-ball of radius $\rho$
  around $\vec{s}$. Further, suppose that each of these balls contain points from a single class
  i.e. for all $\vec{s}\in\zeta$, for all
  $\vec{x},\vec{z}\in\cB_{\rho}^p\br{\vec{s}}:
  c\br{\vec{x}}=c\br{\vec{z}}$.

  Let $\cS_m$ be a dataset of $m$ i.i.d. samples  drawn from
  $\cD$, which subsequently has each label flipped independently with probability $\eta$. 
  For any classifier $f$ that \emph{perfectly} fits the training data $\cS_m$
  i.e. $\forall~\vec{x},y\in\cS_m, f\br{\vec{x}}=y$, 
  $\forall \delta>0$ and $m\ge\frac{\abs{\zeta}}{\eta
    c_2}\log\br{\frac{\abs{\zeta}}{\delta}}$, with probability at least $1-\delta$, $\radv{2\rho}{f;\cD}\ge c_1$.
\end{restatable}

The goal is to find a relatively small set $\zeta$ that satisfies the condition as this will mean that even for modest sample sizes, the trained models have significant adversarial error. We remark that it
is easy to construct concrete instantiations of problems that satisfy the
conditions of the theorem, e.g. each class represented by a spherical (truncated) Gaussian
with radius $\rho$, with the classes being well-separated satisfies Eq.~\cref{eq:balls_density}. The main idea of the
proof is that there is sufficient probability mass for points which are within
distance $2\rho$ of a training datum that was mislabelled. We note that the
generality of the result, namely that \emph{any} classifier~(including
neural networks) that fits the training
data must be vulnerable irrespective of its structure, requires a
result like~\Cref{thm:inf-label}. For instance, one could
construct the classifier $h$, where $h(\vec{x}) = c(\vec{x})$, if $(\vec{x}, b)
\not\in \cS_m$ for $b = 0, 1$, and $h(\vec{x}) = y$ if $(\vec{x}, y) \in \cS_m$. Note
that the classifier $h$ agrees with the target $c$ on \emph{every} point of
$\reals^d$ except the mislabelled training examples, and as a result these
examples are the only source of vulnerability. The complete proof is
presented in~\Cref{sec:proof-21}.

There are a few things to note about~\cref{thm:inf-label}. First, the
lower bound on adversarial error applies to any classifier $f$ that
fits the training data $\cS_m$ perfectly and is agnostic to the type
of model $f$ is. Second, for a given $c_1$, there maybe multiple
$\zeta$s that satisfy the bounds in~\cref{eq:balls_density} and the
adversarial risk holds for all of them. Thus, smaller the value of
$\abs{\zeta}$ the smaller the size of the training data it needs to
fit and it can be done by simpler classifiers. Third, if the
distribution of the data  is such that it is concentrated around some
points then for a fixed $c_1,c_2$, a smaller value of $\rho$ would be
required to satisfy~\cref{eq:balls_density} and thus a weaker
adversary~(smaller perturbation budget~$2\rho$) can cause a much
larger adversarial error.

In practice, classifiers exhibit much greater vulnerability than
purely arising from the presence of memorized noisy data. Experiments
in~\Cref{sec:exp-overfit-mislbl}~show how label noise  causes
vulnerability in a toy MNIST model, as well as the full MNIST.

\subsection{Bias towards simpler decision boundaries}
\label{sec:bias-towrds-simpler}
Label noise by itself is not the sole cause for adversarial vulnerability
especially in deep learning models trained with standard optimization
procedures like SGD. A second cause is the \emph{choice of representation} of
the data, which in turn affects the shape of the decision boundary. The choice
of model affects representations and introduces desirable and possibly even
undesirable (cf.~\citep{Liu2018}) invariances;
for example, training convolutional networks are invariant to (some) translations, while
training fully connected networks are invariant to permutations of input
features. This means that fully connected networks can learn even if
the pixels of each training image in the training set are permuted with a fixed
permutation~\citep{Zhang2016}. This invariance is worrying as it means
that such a network can effectively classify a matrix~(or tensor) that
is visually nothing like a real image into an image category. While CNNs don't have this particular
invariance,  as~\citet{Liu2018} shows, location invariance in CNNs
mean that they are unable to predict where in the image a particular
object is.

In
particular, it may be that the decision boundary for robust classifiers needs
to be ``visually'' more \emph{complex} as pointed out in prior work~\citep{Nakkiran2019}, but we
emphasize that this may be because of the choice of representation, and in
particular in standard measures of statistical complexity, such as VC
dimension, this may not be the case. We demonstrate this phenomenon by a simple
(artificial) example even when there is no label noise. Our example in
Section~\ref{sec:repr-learn-pres} combines the two causes and shows how
classifiers that are translation invariant may be worse for adversarial
robustness.

\begin{figure}[t]
  \begin{subfigure}[t]{0.67\linewidth}
    \scalebox{0.9}{\begin{tikzpicture}
      \draw (1.,0) -- (9,0); \filldraw [gray] (2,0) circle (2pt);
      \filldraw [gray] (4,0) circle (2pt); \filldraw [gray] (6,0)
      circle (2pt); \filldraw [gray] (8,0) circle (2pt);

      \draw[|-|, cadmiumgreen, line width=1pt] (1.35,0.5) --
      (1.75,0.5); \draw[|-|, cadmiumgreen, line width=1pt] (1.85,0.5)
      -- (2.35,0.5); \draw[|-|, cadmiumgreen, line width=1pt]
      (2.45,0.5) -- (2.65,0.5);

      \draw[|-|, cadmiumgreen, line width=1pt] (1.35,1.5) --
      (2.65,1.5);

      \draw (1.4,0 ) node[cross,red,line width=1pt] {}; \draw (1.6,0 )
      node[cross,red,line width=1pt] {}; \draw (2.2,0 )
      node[cross,red,line width=1pt] {}; \draw (2.6,0 )
      node[cross,red,line width=1pt] {};
      \draw[blue, line width=1pt] (1.8,0 ) circle (2.5pt); \draw[blue,
      line width=1pt] (2.4,0 ) circle (2.5pt); \node at (2,-0.5)
      {001};

      \draw[|-|, cadmiumgreen, line width=1pt] (3.55,0.5) --
      (3.65,0.5);

      \draw (2+1.4,0 ) node[cross,blue,line width=1pt] {}; \draw
      (2+1.8,0 ) node[cross,blue,line width=1pt] {}; \draw (2+2.2,0 )
      node[cross,blue,line width=1pt] {};
      \draw (2+2.4,0 ) node[cross,blue,line width=1pt] {}; \draw
      (2+2.6,0 ) node[cross,blue,line width=1pt] {}; \draw[red, line
      width=1pt] (2+1.6,0 ) circle (2.5pt); \node at (4,-0.5) {010};

      \draw[|-|, cadmiumgreen, line width=1pt] (2+2+1.35,1.5) --
      (2+2+2.65,1.5);

      \draw[|-|, cadmiumgreen, line width=1pt] (2+2+1.35,0.5) --
      (2+2+1.55,0.5); \draw[|-|, cadmiumgreen, line width=1pt]
      (2+2+1.65,0.5) -- (2+2+2.35,0.5); \draw[|-|, cadmiumgreen, line
      width=1pt] (2+2+2.45,0.5) -- (2+2+2.65,0.5);

      \draw (2+2+1.8,0 ) node[cross,red,line width=1pt] {}; \draw
      (2+2+1.4,0 ) node[cross,red,line width=1pt] {}; \draw (2+2+2.2,0
      ) node[cross,red,line width=1pt] {};
      \draw (2+2+2.6,0 ) node[cross,red,line width=1pt] {};
      \draw[blue, line width=1pt] (2+2+1.6,0 ) circle (2.5pt);
      \draw[blue, line width=1pt] (2+2+2.4,0 ) circle (2.5pt); \node
      at (6,-0.5) {011};

      \draw[|-|, cadmiumgreen, line width=1pt] (2+2+2+1.35,1.5) --
      (2+2+2+2.65,1.5);

      \draw[|-|, cadmiumgreen, line width=1pt] (2+2+2+1.35,0.5) --
      (2+2+2+2.15,0.5); \draw[|-|, cadmiumgreen, line width=1pt]
      (2+2+2+2.25,0.5) -- (2+2+2+2.65,0.5);

      \draw (2+2+2+1.4,0 ) node[cross,red,line width=1pt] {}; \draw
      (2+2+2+1.6,0 ) node[cross,red,line width=1pt] {}; \draw
      (2+2+2+1.8,0 ) node[cross,red,line width=1pt] {}; \draw
      (2+2+2+2.4,0 ) node[cross,red,line width=1pt] {}; \draw
      (2+2+2+2.6,0 ) node[cross,red,line width=1pt] {}; \draw[blue,
      line width=1pt] (2+2+2+2.2,0 ) circle (2.5pt); \node at (8,-0.5)
      { 100};

      \node[text width=1.5cm,color=cadmiumgreen,line width=2pt] at
      (0.5,0.5) {Union of Intervals}; \node[text
      width=1.5cm,color=cadmiumgreen,line width=2pt] at (0.5,1.5)
      {Parity Classifier};
    \end{tikzpicture}}\caption{Both Parity and Union of Interval
    classifier predicts {\color{red} red} if inside any
    {\color{cadmiumgreen}~green} interval and
    {\color{blue} blue} if outside all intervals. The $\times$-es are
 correctly labelled and the $\circ$-es are mis-labelled points. Reference integer points on the line labelled in \emph{binary}.}\label{fig:thm-3}
  \end{subfigure}\hfill
  \begin{subfigure}[t]{0.29\linewidth}
    \begin{tikzpicture}
      \draw[thin, dashed] (-0.5,0) -- (3,0);
      \draw[thin, dashed] (-0.5,1) -- (3,1);
      \draw[thin, dashed] (2,-0.6) -- (2,1.6);
      \draw[thin, dashed] (1,-0.6) -- (1,1.6); 
      \draw[thin, dashed] (0,-0.6) -- (0,1.6);

      \filldraw[color=red!60, fill=red!5, very thick](0,0) circle
      (.15);
      \filldraw[color=red!60, fill=red!5, very thick](1,1)
      circle (.15);
      \filldraw[color=blue!60, fill=blue!5, very
      thick](1,0) circle (.15);
      \filldraw[color=blue!60, fill=blue!5,
      very thick](2,1) circle (.15);
      \draw[orange, dashed, very thick, name  path=plane] (-0.1, -0.6) -- (1.5, 1) -- (2.1, 1.6);
      \draw[cadmiumgreen, thick] (0.5, -0.5) -- (0.5, 0.5) -- (1.5, 0.5) --
      (1.5, 1.5);
     \draw[|-|, cadmiumgreen, line width=1pt]
      (1.5,0.3) -- (1.9,0.3);
      \draw[|-|, orange, line width=1pt]
     (1.5,-0.2) -- (1.9,-0.2);

       \filldraw [gray] (0,0)
      circle (1pt);
      \filldraw [gray] (0,1)
      circle (1pt);
      \filldraw [gray] (1,0)
      circle (1pt);
      \filldraw [gray] (1,1)
      circle (1pt);
      \filldraw [gray] (2,0)
      circle (1pt);
      \filldraw [gray] (2,1)
      circle (1pt);
      \node[text width=2.3cm,color=cadmiumgreen] at (3.2,0.3) {\scriptsize Parity Classifier};
      \node[text width=2.3cm,color=orange] at (3.2,-0.2) {\scriptsize Linear Classifier};
    \end{tikzpicture}
   \caption{Robust generalization needs more complex boundaries}
   \label{fig:complex_simple}
  \end{subfigure}\caption{Visualization of the distribution and
    classifiers used in the Proof
    of~\Cref{thm:parity_robust_repre_all,thm:repre-par-inter}.~The {\color{red}Red}
  and {\color{blue}Blue} indicate the two classes.} 
\end{figure}
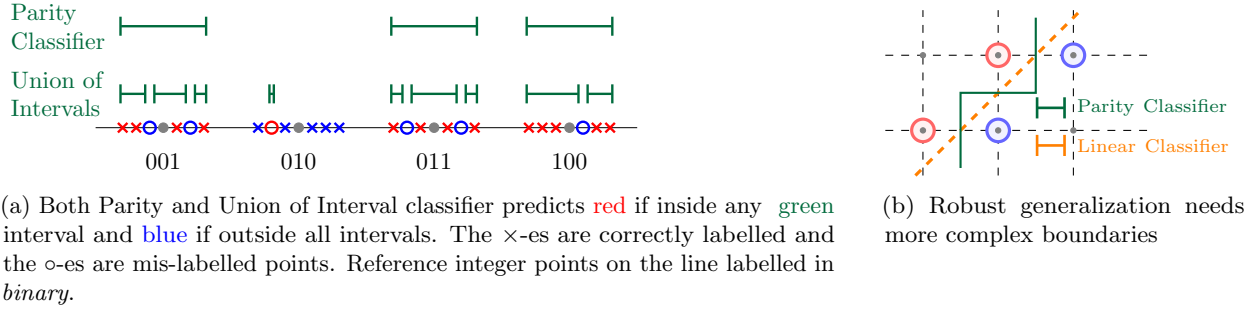

\begin{restatable}{thm}{parityrobustrepre}\label{thm:parity_robust_repre_all}
	For some universal constant $c$, and any $0 < \gamma_0 < 1/\sqrt{2}$, there exists a
  family of distributions $\cD$ defined on $\cX\times\bc{0,1}$ where
  $\cX\subseteq\reals^2$ such that for all distributions $\cP\in\cD$, and
  denoting by $\cS_m =\bc{\br{\vec{x}_1,y_1},\cdots,(\vec{x}_m,y_m)}$ a sample of
  size $m$ drawn i.i.d. from $\cP$, 
  \vspace{-0.5em}
  \begin{enumerate}[itemsep=-0.3em,leftmargin=*]
	  \item[(i)] For any $m \geq 0$, $\cS_m$ is linearly separable i.e.,
		  $\forall(\vec{x}_i, y_i) \in \cS_m$, there exist $\vec{w}\in\reals^2,
		  w_0\in\reals$ s.t.  $y_i\br{\vec{w}^\top\vec{x}_i+w_0}\ge 0$.
		  Furthermore, for every $\gamma > \gamma_0$, any linear separator $f$
		  that perfectly fits the training data $\cS_m$ has $\radv{\gamma}{f;
			  \cP} \geq 0.0005$, even though $\risk{\cP}{f}
                        \rightarrow 0$ as $m \rightarrow \infty $.
	  \item[(ii)] There exists a function class $\cH$ such that for some $m \in
		  O(\log(\delta^{-1}))$, any $h \in \cH$ that perfectly fits the $\cS_m$,
		  satisfies with probability at least $1 - \delta$, $\risk{\cP}{h} = 0$
		  and $\radv{\gamma}{h; \cP} = 0$, for any $\gamma \in [0, \gamma_0 + 1/8]$.
  \end{enumerate}
\end{restatable}

A complete proof of this result appears in~\Cref{sec:proof-22}, but
first, we provide a sketch of the key idea here.%
The distributions in family $\cD$ will
be supported on balls of radius at most $1/\sqrt{2}$ on the integer lattice in
$\reals^2$. The \emph{true} class label for any point $\vec{x}$ is provided by
the parity of $a + b$, where $(a, b)$ is the lattice point closest to
$\vec{x}$. However, the distributions in $\cD$ are chosen to be such that there
is also a linear classifier that can separate these classes, e.g. a
distribution only supported on balls centered at the points $(a, a)$ and $(a,
a+1)$ for some integer $a$~(See~\Cref{fig:complex_simple}). \emph{Visually} learning the classification problem
using the parity of $a + b$ results in a seemingly more complex decision
boundary, a point that has been made earlier regarding the need for more
complex boundaries to achieve adversarial robustness~\citep{Nakkiran2019,degwekar19a}. However, it is worth
noting that this complexity is not rooted in any \emph{statistical theory},
e.g. the VC dimension of the classes considered in
Theorem~\ref{thm:parity_robust_repre_all} is essentially the same (even lower for $\cH$
by $1$). This \emph{visual} complexity arises purely due to the fact
that the linear classifier looks at a geometric representation of the
data whereas the parity classifier looks at the binary representation
of the sum of the nearest integer of the coordinates. In the case of
neural networks, recent works~\citep{kamath2020invariance} have indeed
provided empirical results to support that excessive invariance
(eg. rotation invariance) increases adversarial error.

\subsection{Representation Learning in the presence of label noise}
\label{sec:repr-learn-pres}

In this section, we show how both causes of vulnerability can
interact. Informally, we show that if the \emph{correct}
representation is used, then in the presence of label noise, it will
be impossible to fit the training data perfectly, but the 
classifier that best fits the training data,%
\footnote{This is referred to as the Empirical Risk Minimization (ERM) in the statistical learning theory literature.}
will have good test accuracy and adversarial accuracy. However, using an
``incorrect'' representation, we show that it is possible to find a classifier
that has no training error, has good test accuracy, but has high
\emph{adversarial error}. We posit this as an (partial) explanation of why
classifiers trained on real data (with label noise, or at least atypical
examples) have good test accuracy, while still being vulnerable to adversarial
attacks.  

\begin{restatable}{thm}{robustpossibleful}[Formal version of~\Cref{thm:repre-par-inter}]~\label{thm:repre-par-inter}
	For any $n\in\bZ_+$, there exists a family of distributions $\cD^n$ over $\reals \times \{0, 1\}$ and function classes $\cC,\cH$, such that for any $\cP$ from 
	$\cD^n$, and for any $0 < \gamma < 1/4$, and $\eta \in (0, 1/2)$ if $\cS_m =
        \{(\vec{x}_i, y_i)\}_{i=1}^m$ denotes a sample of size $m$ where
        \[m=\bigO{\mathrm{max}\bc{
      n\log{\frac{n}{\delta}} 
   \br{\frac{\br{1-\eta}}{\br{1-2\eta}^2}+1},
   \frac{n}{\eta\gamma^2} 
   \log\br{\frac{n}{\gamma\delta}}}}\]
drawn from $\cP$, and if $\cS_{m, \eta}$ denotes the sample where each label is flipped independently with probability $\eta$.
  \begin{enumerate}[labelsep=-0.3em,leftmargin=*]
	  \item[(i)]~~the classifier $c \in \cC$ that minimizes the training error on $\cS_{m, \eta}$, has $\risk{\cP}{c} = 0$ and $\radv{\gamma}{c; \cP} = 0$ for $0 \leq \gamma < 1/4$. 
	  \item[(ii)]~~there exist $h \in \cH$, $h$ has zero training error on $\cS_{m, \eta}$, and $\risk{\cP}{h} = 0$. However, for any $\gamma > 0$, and for any $h \in \cH$ with zero training error on $\cS_{m, \eta}$, $\radv{\gamma}{h; \cP} \geq 0.1$. 
  \end{enumerate}\vspace{-0.5em}
  Furthermore, the required $c \in \cC$ and $h \in \cH$ above can be computed in $\bigO{\poly{n},\poly{\frac{1}{\frac{1}{2}-\eta}},
  \poly{\frac{1}{\delta}}}$ time.
\end{restatable}

\begin{figure}[t]
    \centering
    \def\svgwidth{0.75\columnwidth}
    \input{./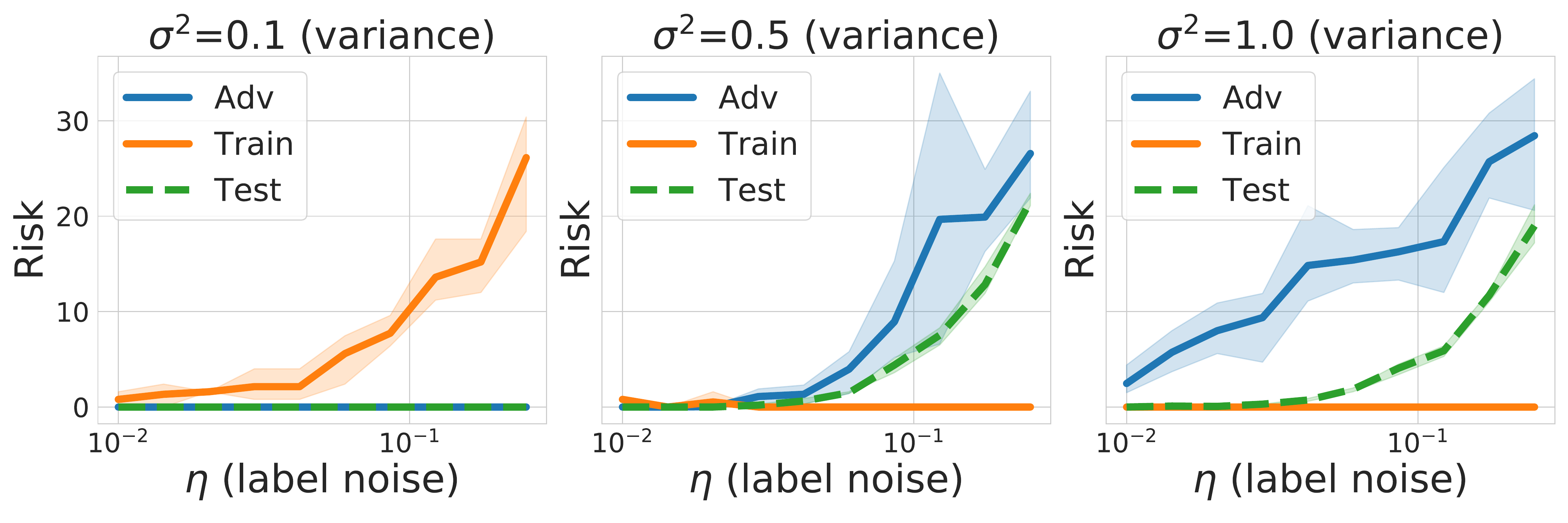_tex}
  \caption{ Adversarial
    error increases with  label
    noise~($\eta$) if training error is $0$. Shaded region shows
    $95\%$ confidence interval.} 
  \label{fig:risk_vs_noise}
\end{figure}
We sketch the proof here and present the complete the proof in~\Cref{sec:proof23}; as
in~\Cref{sec:bias-towrds-simpler} we will make use of parity functions, though
the key point is the representations used. Let $\cX = [0, N]$, where $N = 2^n$,
we consider distributions that are supported on intervals $(i - 1/4, i + 1/4)$
for $i \in \{1, \ldots, N-1 \}$~(See~\Cref{fig:thm-3}), but any
such distribution will only have a small number, $O(n)$, intervals on
which it is supported. The \emph{true} class label is given by a function that
depends on the parity of some hidden subsets $S$ of bits in the
bit-representation of the closest integer $i$,  e.g. as
in~\Cref{fig:thm-3} if $S = \{0, 2\}$, then only the least significant and
the third least significant bit of $i$ are examined and the class label is
$1$ if an odd number  of them are $1$ and $0$ otherwise. Despite the noise,
the \emph{correct} label on any interval can be guessed by using the
majority vote and as a result, the correct parity learnt using Gaussian
elimination. (This corresponds to the class $\cC$ in
~\Cref{thm:repre-par-inter}.) On the other hand it is also possible to
learn the function as a union of intervals, i.e. find intervals, $I_1, I_2,
\ldots, I_k$ such that any point that lies in one of these intervals is
given the label $1$ and any other point is given the label $0$. By choosing
intervals carefully, it is possible to fit \emph{all the training data},
including noisy examples, but yet not compromise on \emph{test accuracy}
(Fig.~\ref{fig:thm-3}). Such a classifier, however, will be vulnerable to
adversarial examples by applying Theorem~\ref{thm:inf-label}.  A classifier
such as union of intervals ($\cH$ in Theorem~\ref{thm:repre-par-inter}) is
translation-invariant, whereas the parity classifier is not.  This suggests
that using classifiers, such as neural networks, that are designed to have
too many built-in invariances might hurt its robustness accuracy.

\section{Experimental results}
\label{sec:experimental-results}

In~\Cref{sec:theoretical-setting}, we provided three theoretical settings to
highlight how fitting label noise and sub-optimal representation
learning~(leading to seemingly simpler decision boundaries) hurts adversarial
robustness.  In this section, we provide empirical evidence on synthetic
data inspired by the theory and on the standard datasets:
MNIST~\citep{LBBH:1998}, CIFAR10, and CIFAR100~\citep{krizhevsky2009learning}
to support the theory.

\subsection{Overfitting label noise decreases adversarial accuracy}

\label{sec:exp-overfit-mislbl}

We design a simple binary classification problem, \emph{toy-MNIST}, and show
that when fitting a complex classifier on a training dataset with label noise,
adversarial vulnerability increases with the amount of label noise, and that
this vulnerability is caused by the label noise. The problem is constructed by
selecting two random images from MNIST: one ``0'' and one ``1''. Each
training/test example is generated by selecting one of these images and adding
i.i.d. Gaussian noise sampled from $\cN\br{0,\sigma^2}$. We create a training
dataset of $4000$ samples by sampling uniformly from either class. Finally,
$\eta$ fraction of the training data is chosen randomly and its labels are
flipped. 
We train a neural network with four fully connected layers followed by a
softmax layer and minimize the cross-entropy loss using an SGD optimizer until
the training error becomes zero. Then, we attack this network with a
~\emph{strong} $\ell_\infty$ PGD adversary~\citep{madry2018towards}
with $\epsilon=\frac{64}{255}$ for 
$400$ steps with a step size of $0.01$. 

In~\Cref{fig:risk_vs_noise}, we plot the adversarial error, test
error and training error as the amount of label noise $(\eta)$ varies, for
three different values of sample variance~($\sigma^2$). For low values of
$\sigma^2$, the training data from each class is all concentrated around the
same point; as a result these models are unable to memorize the label noise and
the training error is high. In this case, over-fitting label noise is
impossible and the test error, as well as the adversarial error, is low.
However, as $\sigma^2$ increases, the neural network is flexible enough to use
the ``noise component'' to extract features that allow it to memorize label
noise and fit the training data perfectly. This brings the training error down
to zero, while causing the test error to increase, and the adversarial error
even more so. This is in line with Theorem~\ref{thm:inf-label}. The case when
$\sigma^2=1$ is particularly striking as it exhibits a range of values of
$\eta$ for which test error remains very close to 0 even as the adversarial
error jumps considerably. This confirms the hypothesis that benign overfitting
may not be so benign when it comes to adversarial error.

\begin{figure}[t]
   \begin{subfigure}[t]{0.19\linewidth}
    \centering \def\svgwidth{0.99\columnwidth}
    \input{./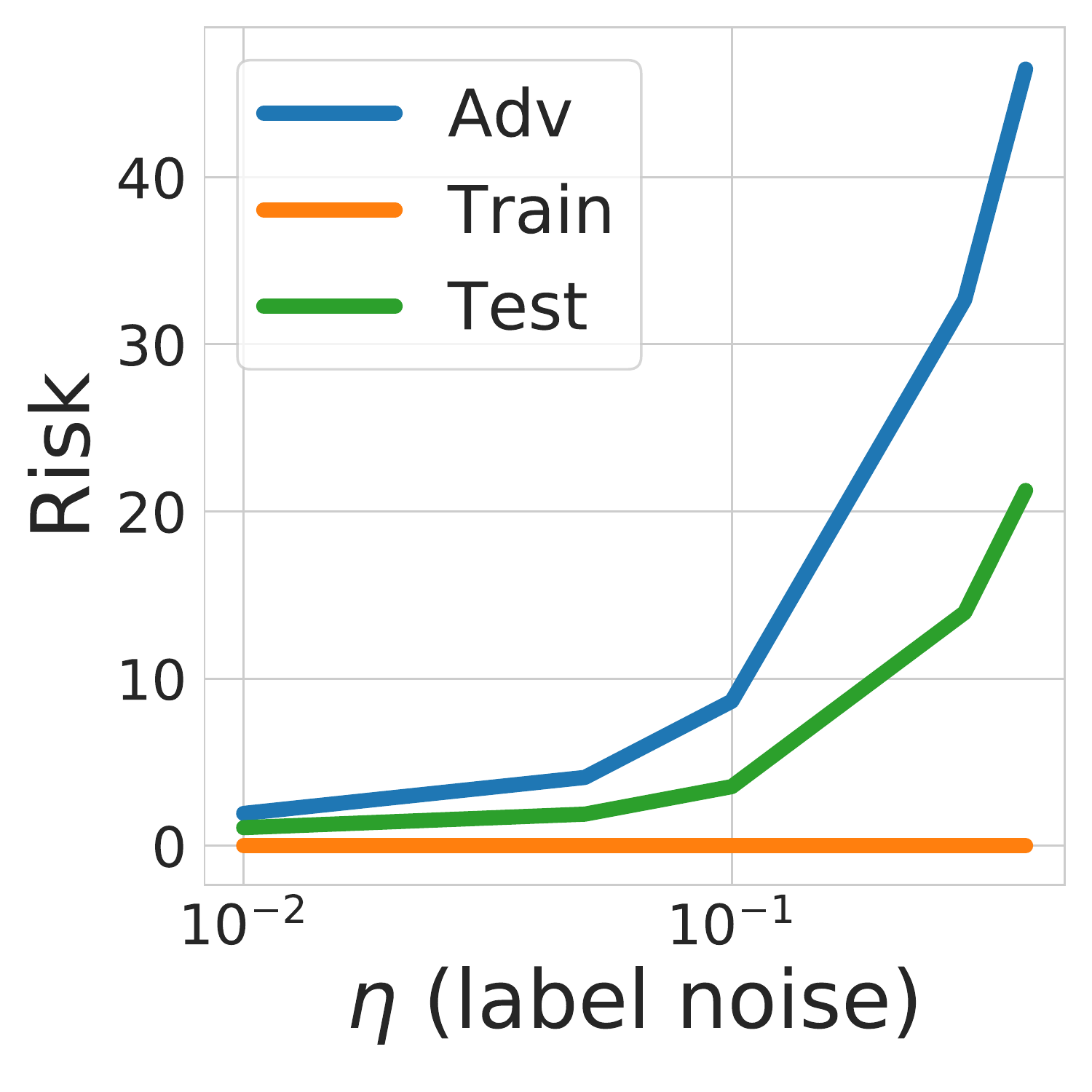_tex}
    \caption{$\epsilon=0.01$}
    \label{fig:mnist_lbl_noise_adv}
  \end{subfigure}
   \begin{subfigure}[t]{0.19\linewidth}
    \centering \def\svgwidth{0.99\columnwidth}
    \input{./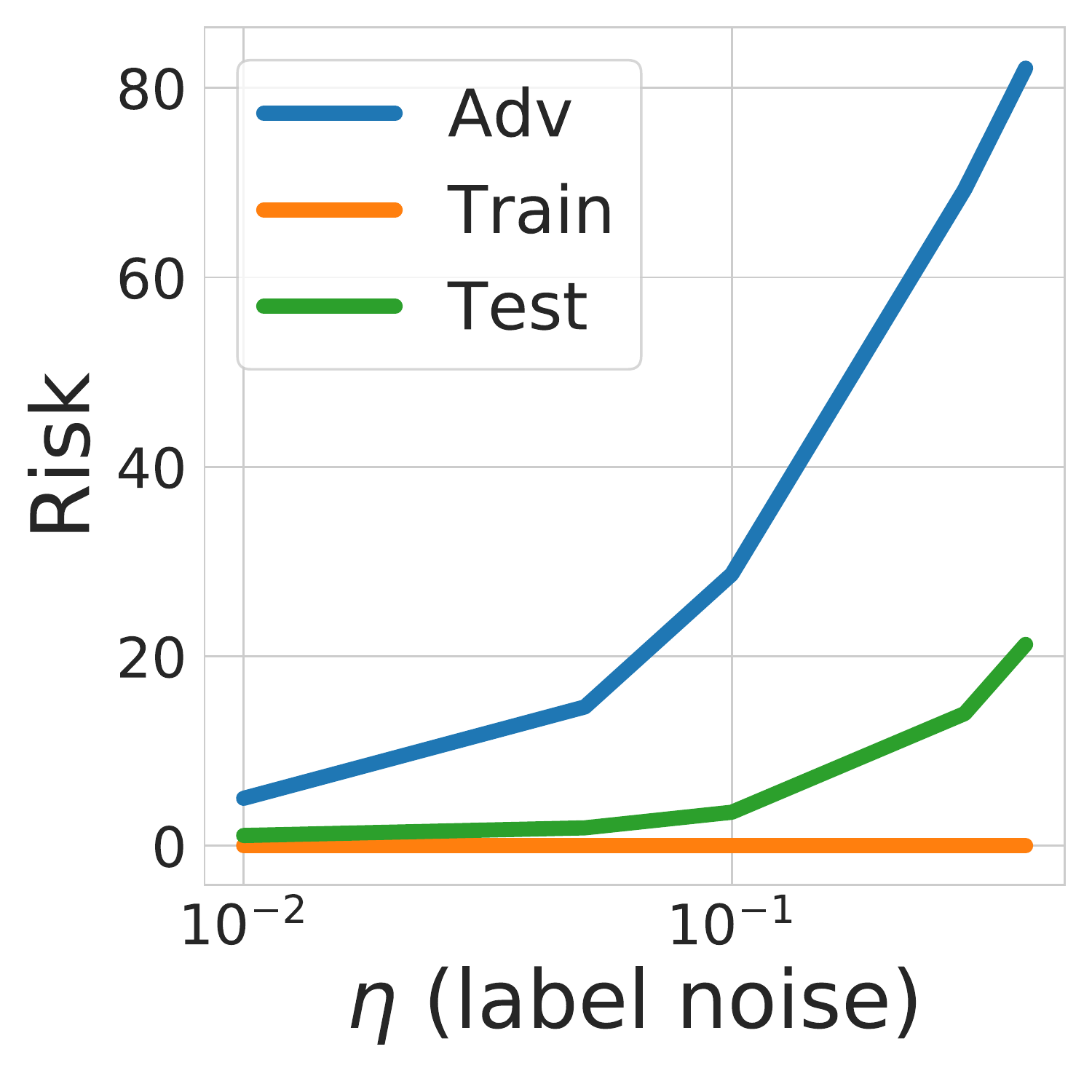_tex}
    \caption{$\epsilon=0.025$}
    \label{fig:mnist_lbl_noise_adv}
  \end{subfigure}
  \begin{subfigure}[t]{0.19\linewidth}
    \centering \def\svgwidth{0.99\columnwidth}
    \input{./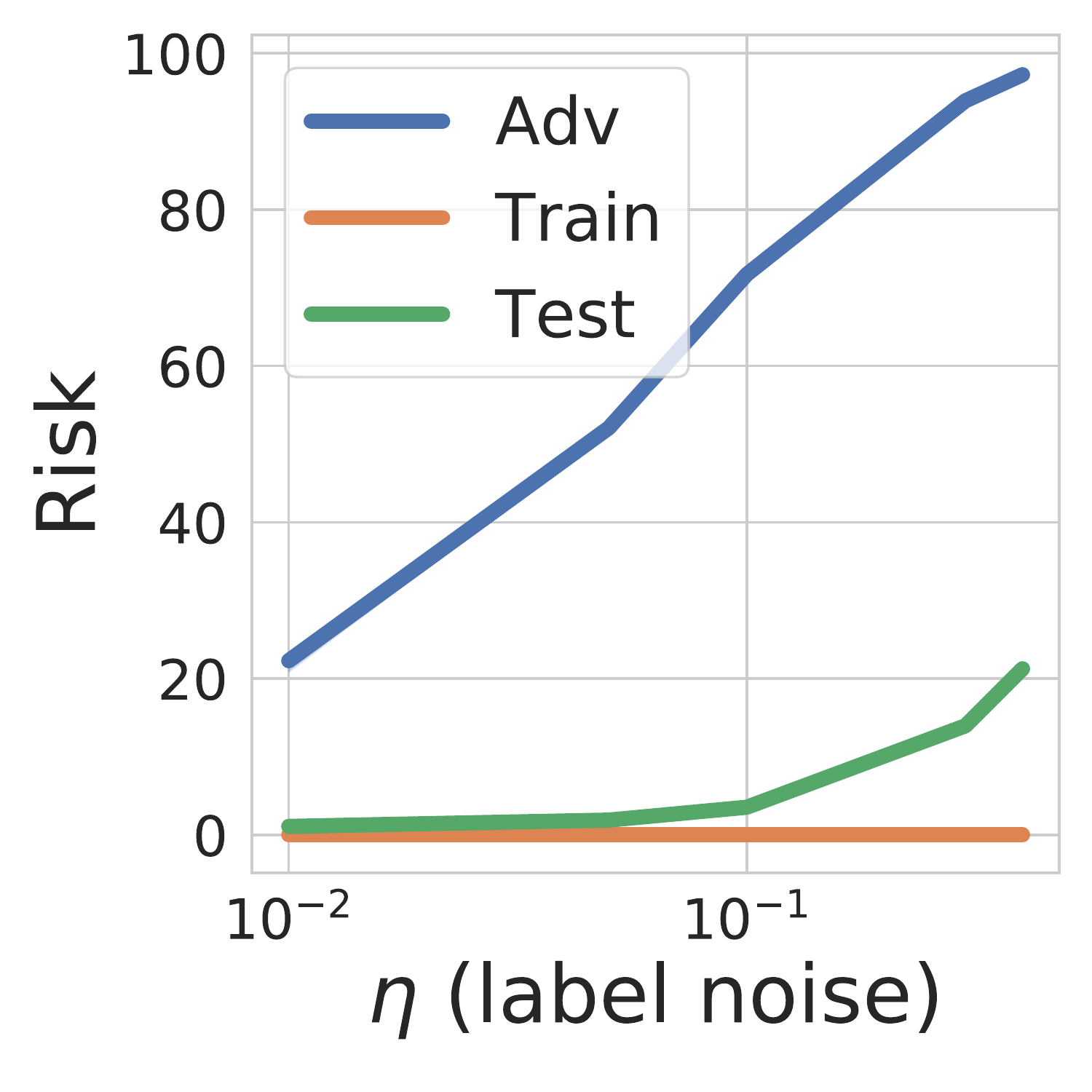_tex}
    \caption{$\epsilon=0.05$}
    \label{fig:mnist_lbl_noise_adv}
  \end{subfigure}
   \begin{subfigure}[t]{0.19\linewidth}
    \centering \def\svgwidth{0.99\columnwidth}
    \input{./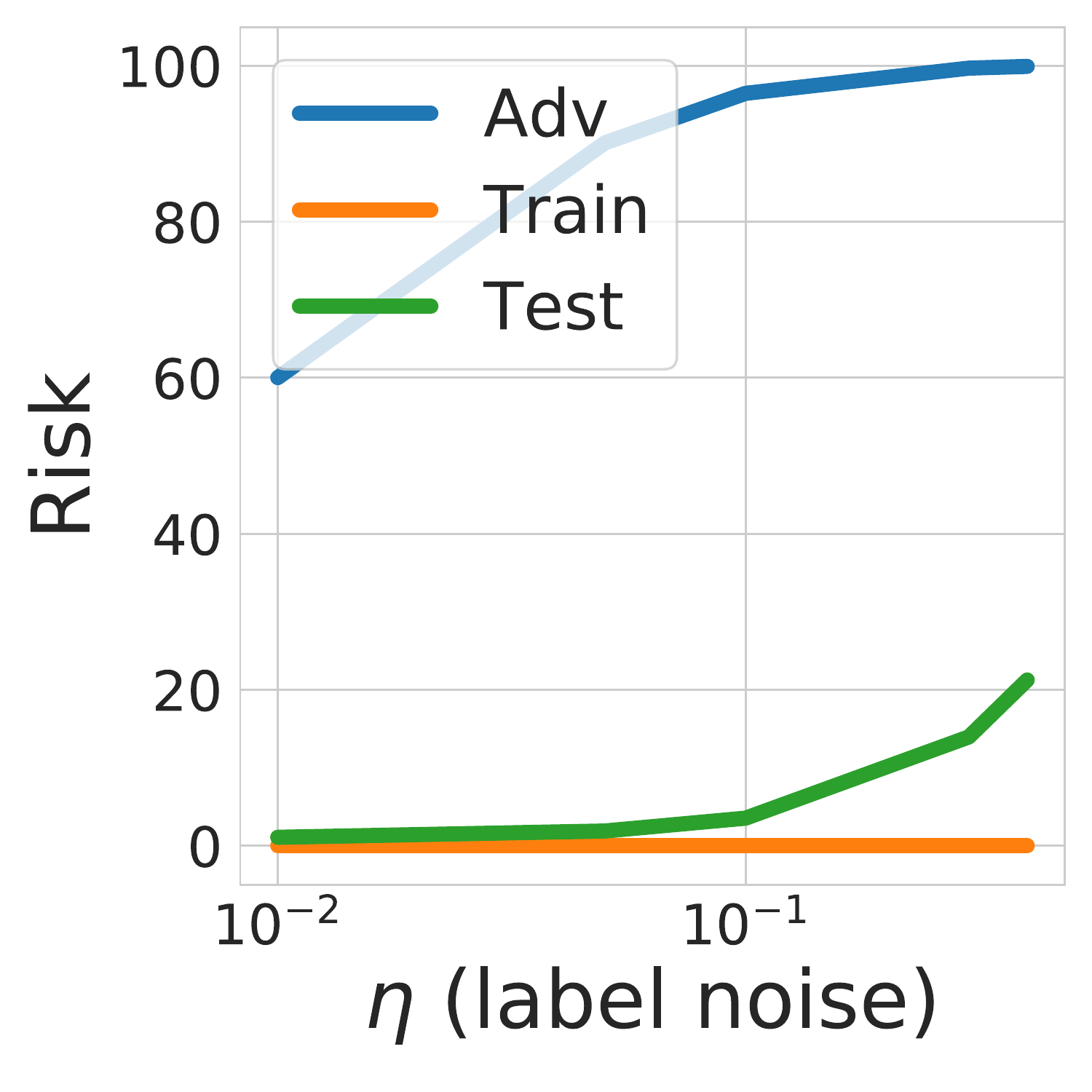_tex}
    \caption{$\epsilon=0.1$}
    \label{fig:mnist_lbl_noise_adv}
  \end{subfigure}
   \begin{subfigure}[t]{0.19\linewidth}
    \centering \def\svgwidth{0.99\columnwidth}
    \input{./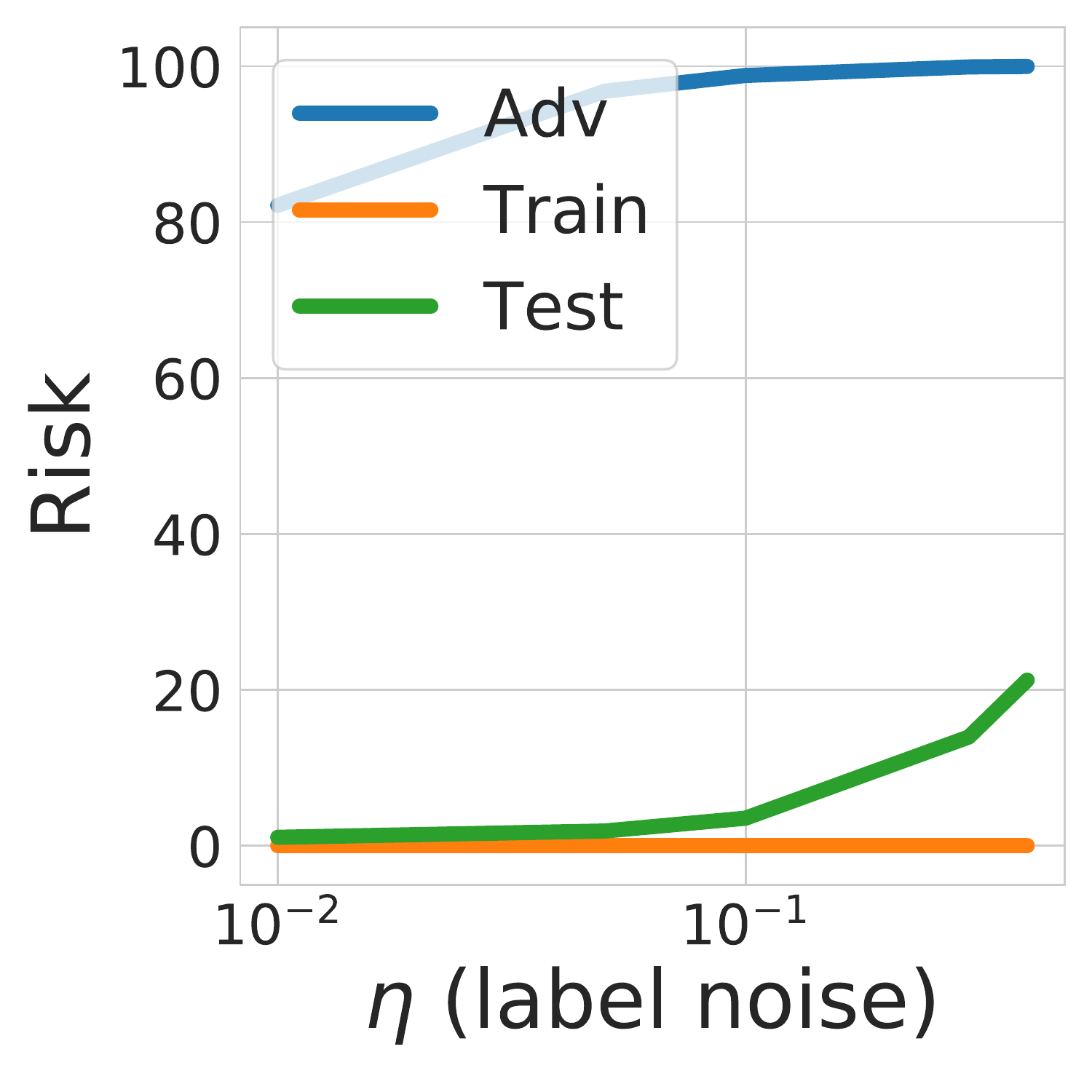_tex}
    \caption{$\epsilon=0.2$}
  \end{subfigure}
  \caption{Shows the adversarial for the
    full MNIST dataset for varying levels of adversarial
    perturbation. There is negligible variance between runs and thus
    the shaded region showing the confidence interval is invisible.} 
  \label{fig:risk_vs_noise_app}
\end{figure}

We perform a similar experiment on the full MNIST dataset trained on a ReLU
network with 4 convolutional layers, followed by two fully connected layers.
The first four convolutional layers have 
$32,64, 128, 256$ output filters and $3,4,3,3$ sized
kernels respectively. This is followed by a fully connected layers
with a hidden dimension of $1024$. For varying values of $\eta$, for a
uniformly randomly chosen $\eta$ fraction of the training data we
assigned the class label randomly.  The network is optimized with SGD
with a batch size of $128$, learning rate of $0.1$ for $60$ epochs and the
learning rate is decreased to $0.01$ after 50 epochs.

We compute the natural test accuracy and the adversarial test
accuracy for when the network is attacked with a $\ell_\infty$ bounded
PGD adversary for varying perturbation budget $\epsilon$, with a step
size of $0.01$ and for $20$ steps. \Cref{fig:risk_vs_noise_app} shows that
the effect of over-fitting label noise is even more clearly visible here; for
the same PGD adversary the adversarial error jumps sharply with increasing
label noise, while the growth of natural test error is much slower.

\begin{figure}[t]
  \begin{subfigure}[t]{0.16\linewidth}
    \centering \def\svgwidth{0.99\columnwidth}
    \input{./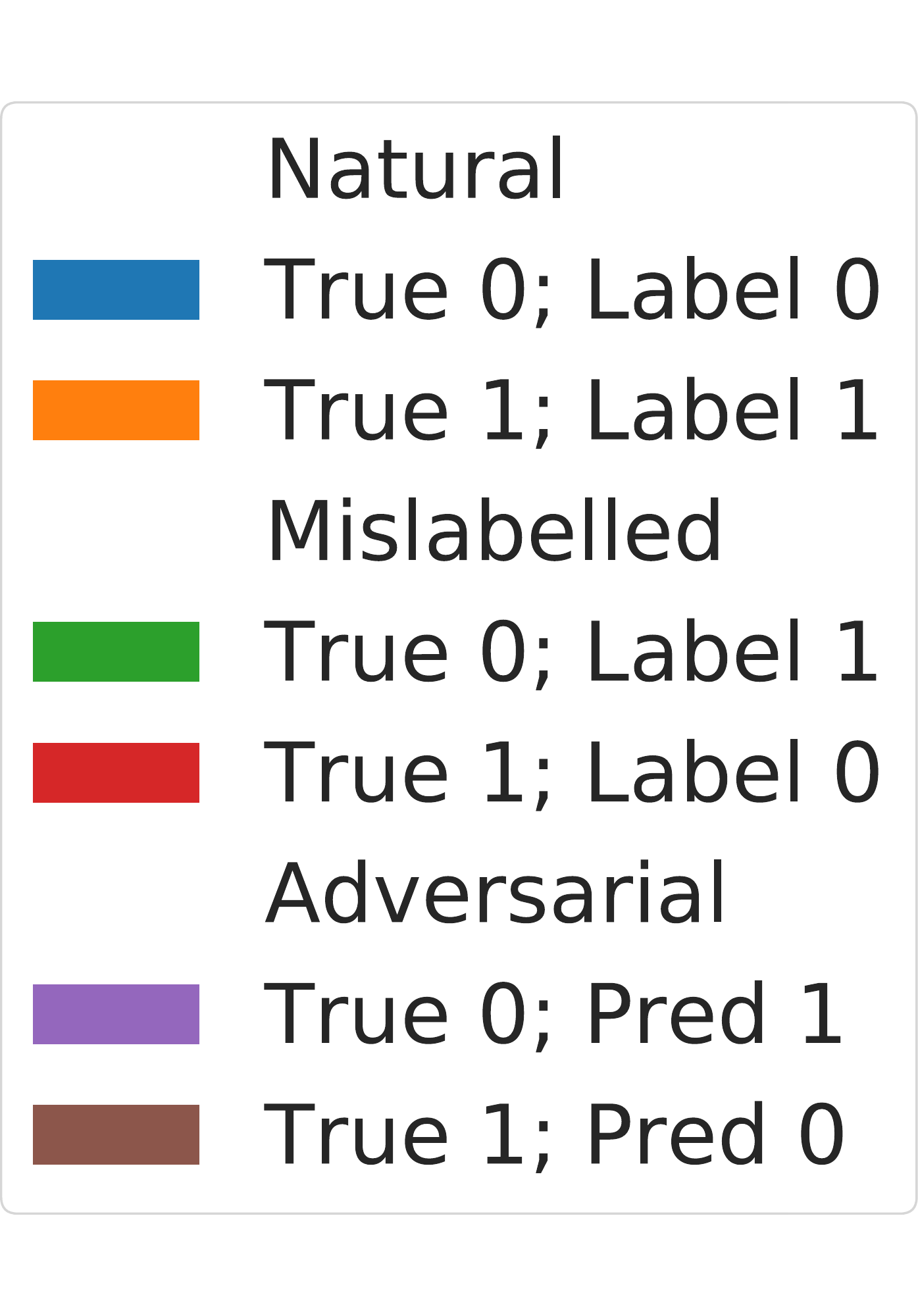_tex}
  \end{subfigure}\hfill
                 \begin{subfigure}[t]{0.80\linewidth}
    \centering \def\svgwidth{0.99\columnwidth}
    \input{./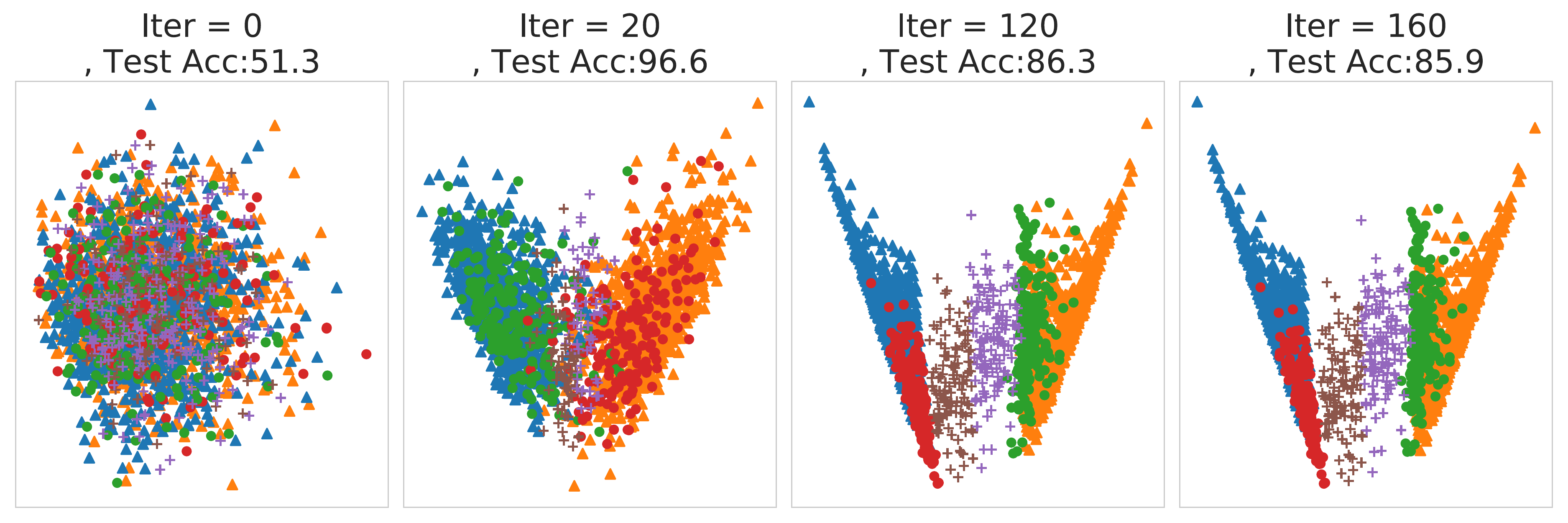_tex}
  \end{subfigure}
  \caption{Two dimensional PCA projections of the original correctly
    labelled~(blue and orange), original mis-labelled~(green and red),
  and adversarial examples~(purple and brown) at different stages of training. The correct label
  for~\emph{True 0}~(blue),~\emph{Noisy 0}~(green),~\emph{Adv
    0}~(purple +) are the same i.e. 0 and similar for the other class.}
  \label{fig:represen}
\end{figure}

\textbf{Visualizing through low-dimensional projections}: For the toy-MNIST
problem, we plot a 2-d projection~(using PCA) of the learned
representations~(activations before the last layer) at various stages of
training in~\Cref{fig:represen}.  (We remark that the simplicity of the data
model ensures that even a 1-d PCA projection suffices to perfectly separate the
classes when there is no label noise; however, the representations learned by a
neural network in the presence of noise maybe very different!) We highlight two
key observations: (i) The bulk of adversarial examples~(``$+$''-es) are
concentrated around the mis-labelled training data~(``$\circ$''-es) of the
opposite class. For example, the purple $+$-es~(Adversarially
perturbed: True: 0, Pred:1 ) are very close to the green
$\circ$-es~(Mislabelled: True:0, Pred: 1). This provides empirical validation for the hypothesis that if
there is a mis-labelled data-point in the vicinity that has been fit by the
model, an adversarial example can be created by moving towards that data point
as predicted by~\Cref{thm:inf-label}. (ii) The mis-labelled training data take
longer to be fit by the classifier.  For example by iteration 20, the network
actually learns a fairly good representation and classification boundary that
correctly fits the clean training data (but not the noisy training data). At this
stage, the number of adversarial examples are much lower as compared to
Iteration 160, by which point the network has completely fit the noisy training
data. Thus early stopping helps in avoiding \emph{memorizing} the label noise,
but consequently also reduces adversarial vulnerability. Early stopping has
indeed been used as a defence in quite a few recent papers in context of
adversarial robustness~\citep{Wong2020Fast,hendrycks2019pretraining}, as well
as learning in the presence of label-noise~\citep{Li2019}. Our work provides an explanation regarding \emph{why} early stopping may reduce adversarial vulnerability by avoiding fitting noisy training data.  

\subsection{Robust training avoids memorization of (some) label
noise}
\label{sec:robust-avoid-mem}
\begin{table}[!htb]
  \centering
  \begin{tabular}{|ccc|}\toprule
    $\epsilon$&Train-Acc.~($\%$)&Test-Acc~($\%$)\\\midrule
    0.0&99.98&95.25\\
    0.25&97.23&92.77\\
    1.0&86.03&81.62\\\bottomrule
  \end{tabular}
   \caption{Train and test accuracies on clean dataset for ResNet-50
    models trained using $\ell_2$ adversaries of perturbation
    $\epsilon$. The $\epsilon=0$ setting represents the natural training.}
  \label{tab:accs-robust-models}
\end{table}
Robust training methods like AT~\citep{madry2018towards} and
TRADES~\citep{Zhang2019} are commonly used techniques to increase
adversarial robustness of deep neural networks. However, it has been pointed out that this comes at a cost to clean
accuracy~\citep{Raghunathan2019,tsipras2018robustness}. When trained with these methods, both the training
and test accuracy (on clean data) for commonly used deep learning models drops with increasing strength of the PGD
adversary used (see~\Cref{tab:accs-robust-models}). In this section,
we provide  evidence to show that robust training avoids memorization
of label noise and this also results in the drop of clean train and
test accuracy.

\subsubsection{Robust training ignores label noise} ~\Cref{fig:mislabelled_ds} shows that
label noise is not uncommon in standard datasets like MNIST and
CIFAR10. In fact, upon closely monitoring the mis-classified training
set examples for both~\AT and  TRADES, we found that that neither
predicts correctly on the training set labels for any of the examples
identified in~\Cref{fig:mislabelled_ds}, all examples that have a
wrong label in the training set, whereas natural  training does. Thus, in
line with~\Cref{thm:inf-label}, robust training methods ignore
fitting noisy labels. %

We also observe this in a synthetic experiment on the full MNIST
 dataset where we assigned random
  labels to 15\% of the dataset. A naturally trained CNN model
  achieved $100\%$ accuracy on this dataset whereas an adversarially
  trained model~(standard setting with $\epsilon=0.3$ for $30$ steps)  mis-classified $997$ examples in the training set
  after the same training regime. Out of these $997$ samples, $994$
  belonged to the set of examples whose labels were randomized.

\begin{figure}[t]
  \centering
 \begin{subfigure}[b]{0.53\linewidth}
 \begin{subfigure}[t]{0.99\linewidth}
    \includegraphics[width=0.24\linewidth]{./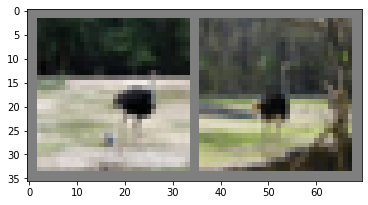}
    \includegraphics[width=0.24\linewidth]{./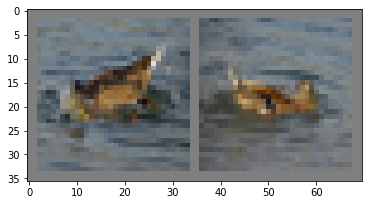}
    \includegraphics[width=0.24\linewidth]{./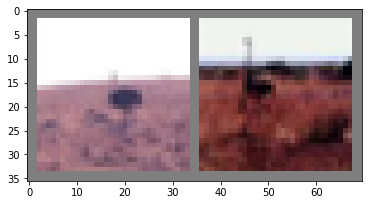}
    \includegraphics[width=0.24\linewidth]{./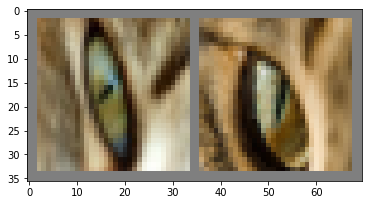}
  \end{subfigure}
  \begin{subfigure}[t]{0.99\linewidth}
    \includegraphics[width=0.24\linewidth]{./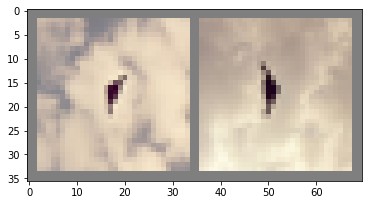}
    \includegraphics[width=0.24\linewidth]{./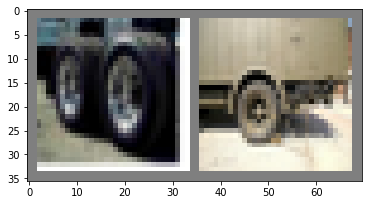}
    \includegraphics[width=0.24\linewidth]{./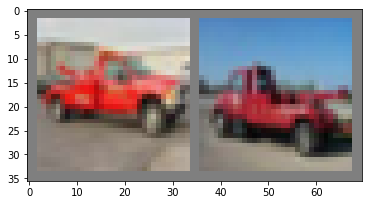}
    \includegraphics[width=0.24\linewidth]{./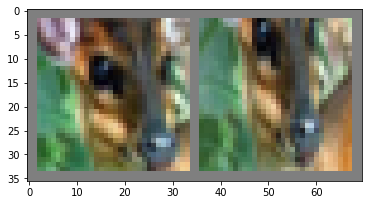}
  \end{subfigure}
  \begin{subfigure}[t]{0.99\linewidth}
    \includegraphics[width=0.24\linewidth]{./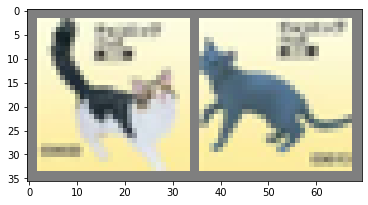}
    \includegraphics[width=0.24\linewidth]{./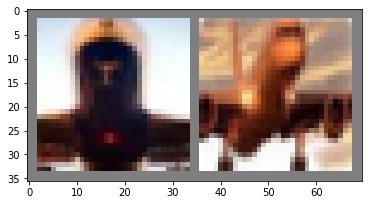}
    \includegraphics[width=0.24\linewidth]{./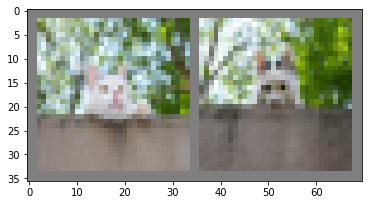}
\includegraphics[width=0.24\linewidth]{./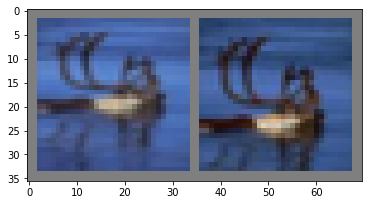}
\end{subfigure}
\caption*{CIFAR10}
\end{subfigure}
\begin{subfigure}[b]{0.45\linewidth}
  \centering \def\svgwidth{0.99\linewidth}
  \input{./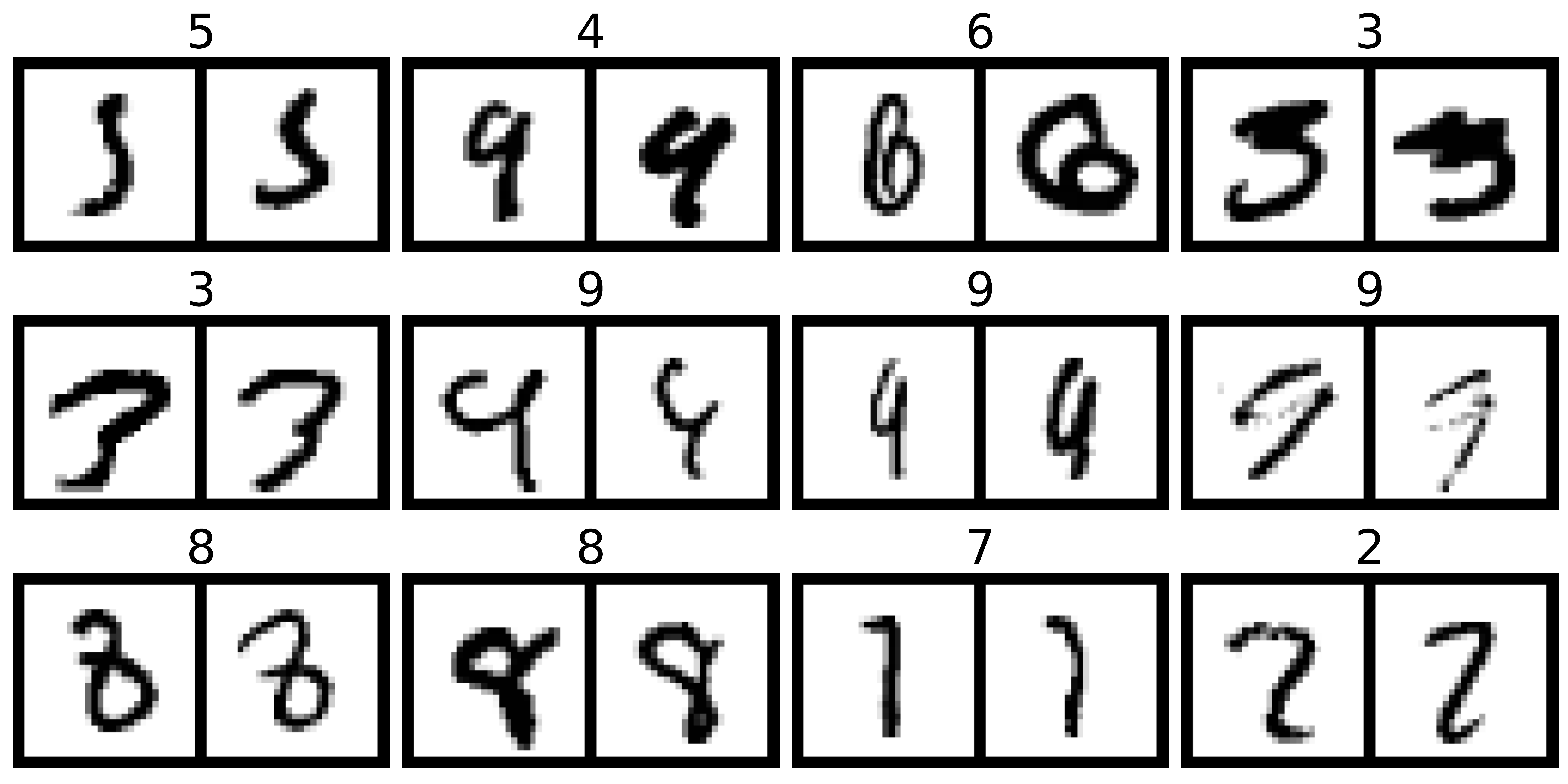_tex}
  \caption*{MNIST}
\end{subfigure}
\caption{Each pair is a  training~(left)
    and  test~(right) image mis-classified by the adversarially
    trained model. They were both correctly classified
    by the naturally-trained model.}
  \label{fig:adv-train-test-mis-class}
\end{figure}

\subsubsection{Robust Training ignores rare examples} Next, we show that though ignoring these rare samples helps in
adversarial robustness, it hurts the natural test
accuracy. Our hypothesis is that one of the effects of robust training is
to not \emph{memorize rare examples}, which would otherwise be
memorized by a naturally trained model. The
underlying intuition is that certain examples in the training set
belong to rare sub-populations~(eg. a special kind of cat) and this sub-population is
sufficiently distinct from the rest of the examples of that class in
the training dataset~(other cats in the
dataset). As~\citet{Feldman2019} points out,\emph{ if these sub-populations
are very infrequent in the training dataset, they are
indistinguishable from data-points with label noise with the
difference being that examples from that sub-population are also present in the test-set}.
Natural training by \emph{memorizing} those rare training examples reduces the test error on the
corresponding test examples. Robust training, by not memorizing these
rare samples~(and label noise), achieves better robustness but sacrifices the test accuracy on the
test examples corresponding to those  training points.

\begin{figure}[t]
  \begin{subfigure}[t]{1.0\linewidth}
    \begin{subfigure}[t]{0.19\linewidth}
      \centering \def\svgwidth{0.99\linewidth}
      \input{./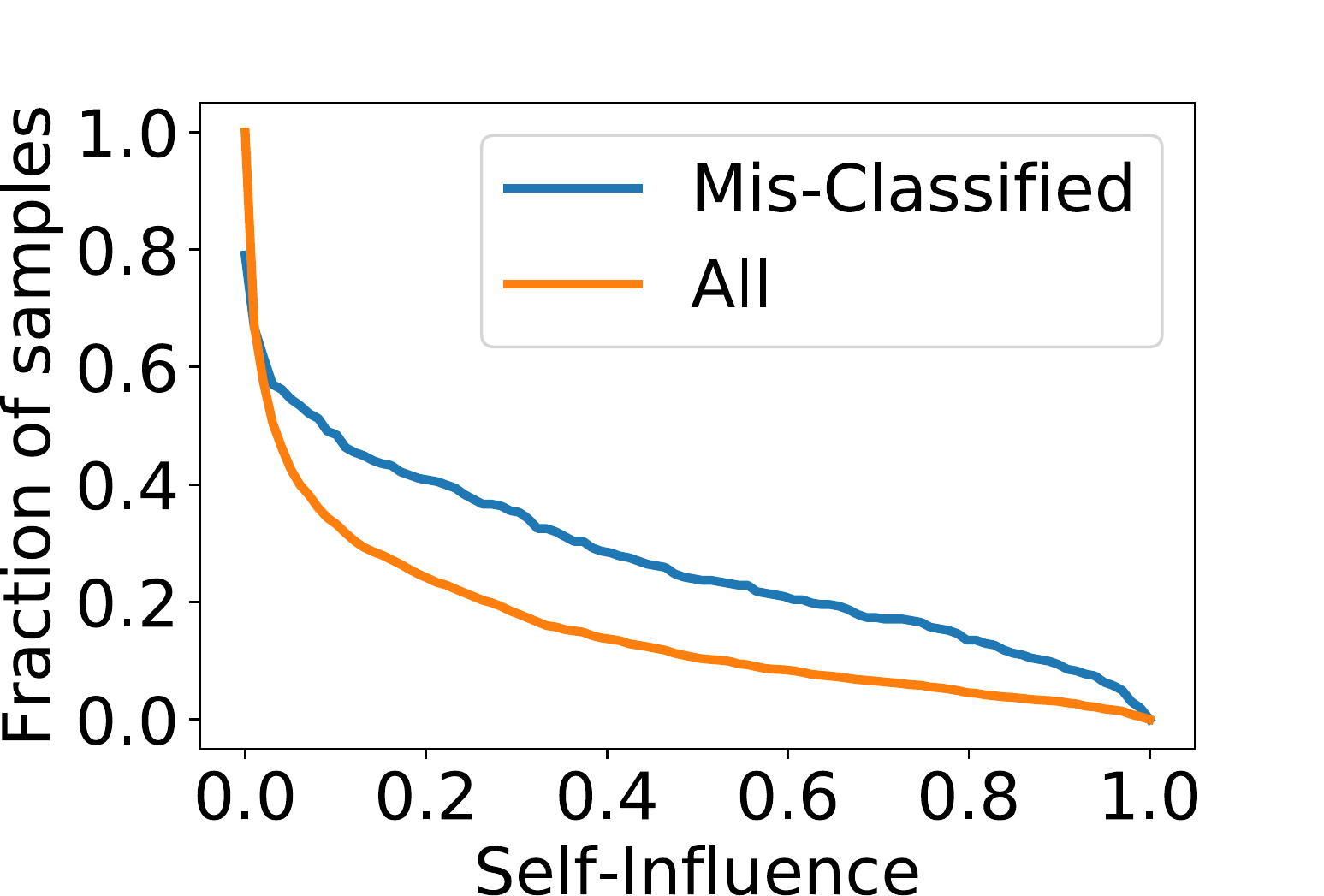_tex}
      \caption*{PLANES}    
    \end{subfigure}
    \begin{subfigure}[t]{0.19\linewidth}
      \centering \def\svgwidth{0.99\linewidth}
      \input{./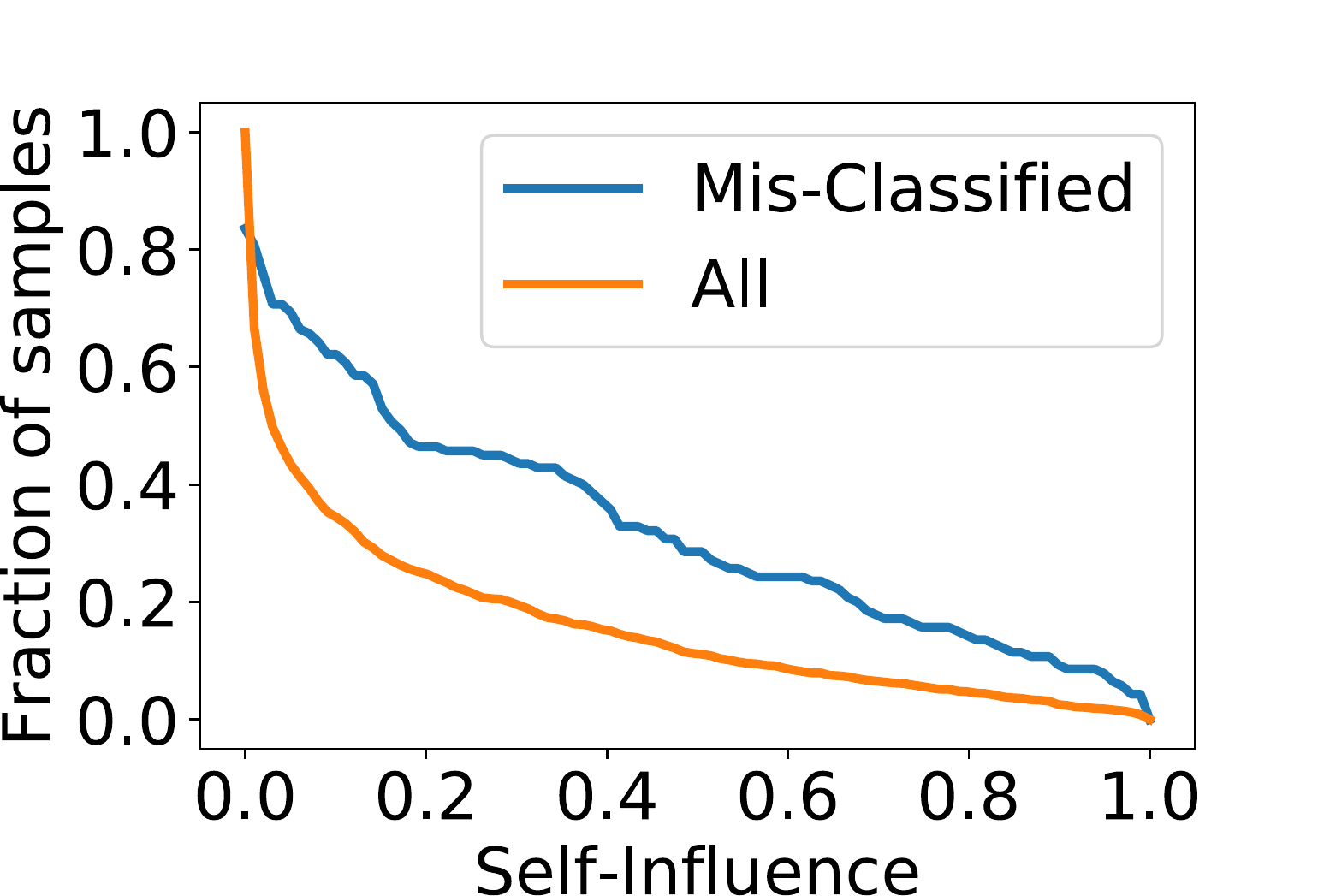_tex}
      \caption*{CAR}     
    \end{subfigure}
    \begin{subfigure}[t]{0.19\linewidth}
      \centering \def\svgwidth{0.99\linewidth}
      \input{./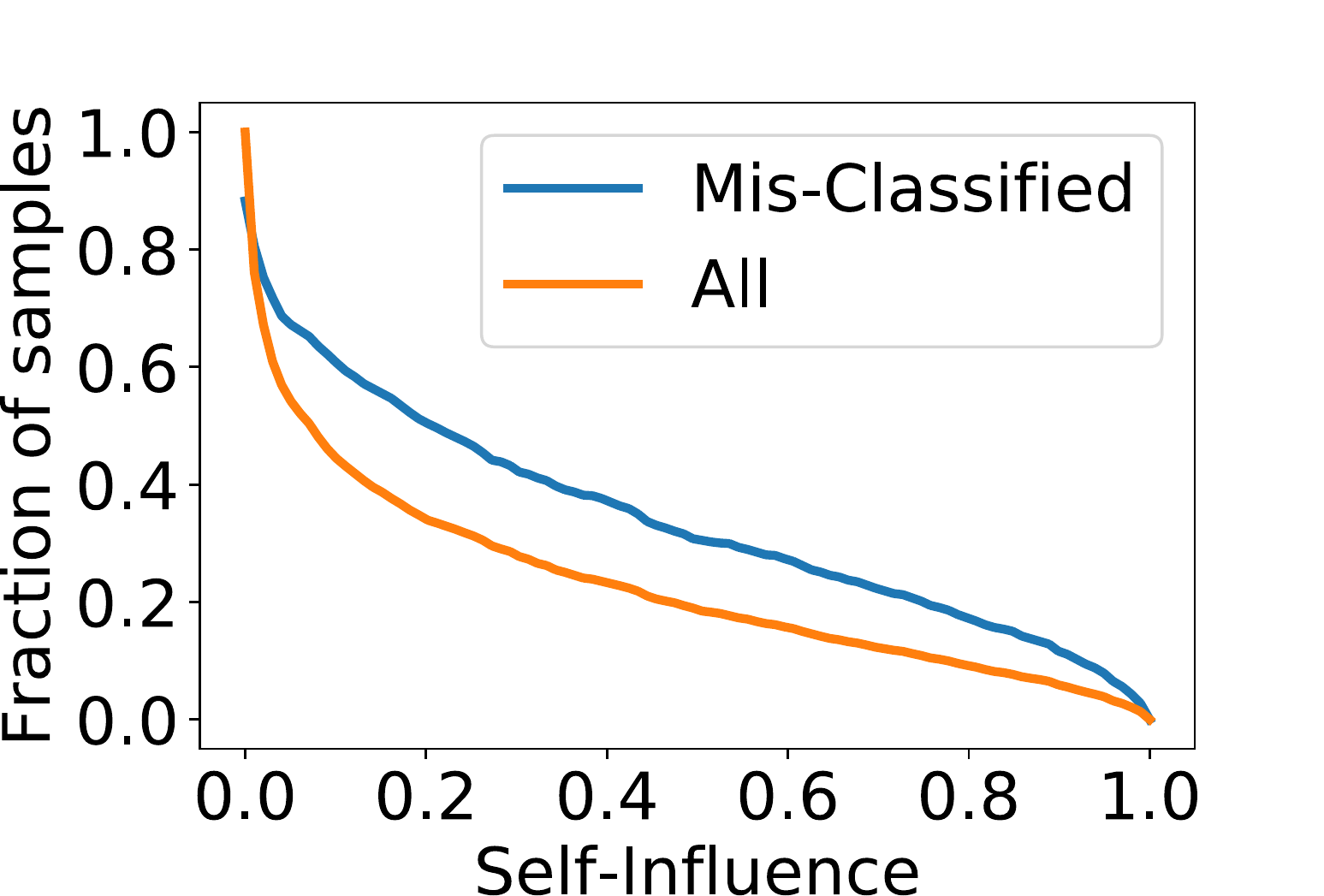_tex}
      \caption*{BIRD}     
    \end{subfigure}
    \begin{subfigure}[t]{0.19\linewidth}
      \centering \def\svgwidth{0.99\linewidth}
      \input{./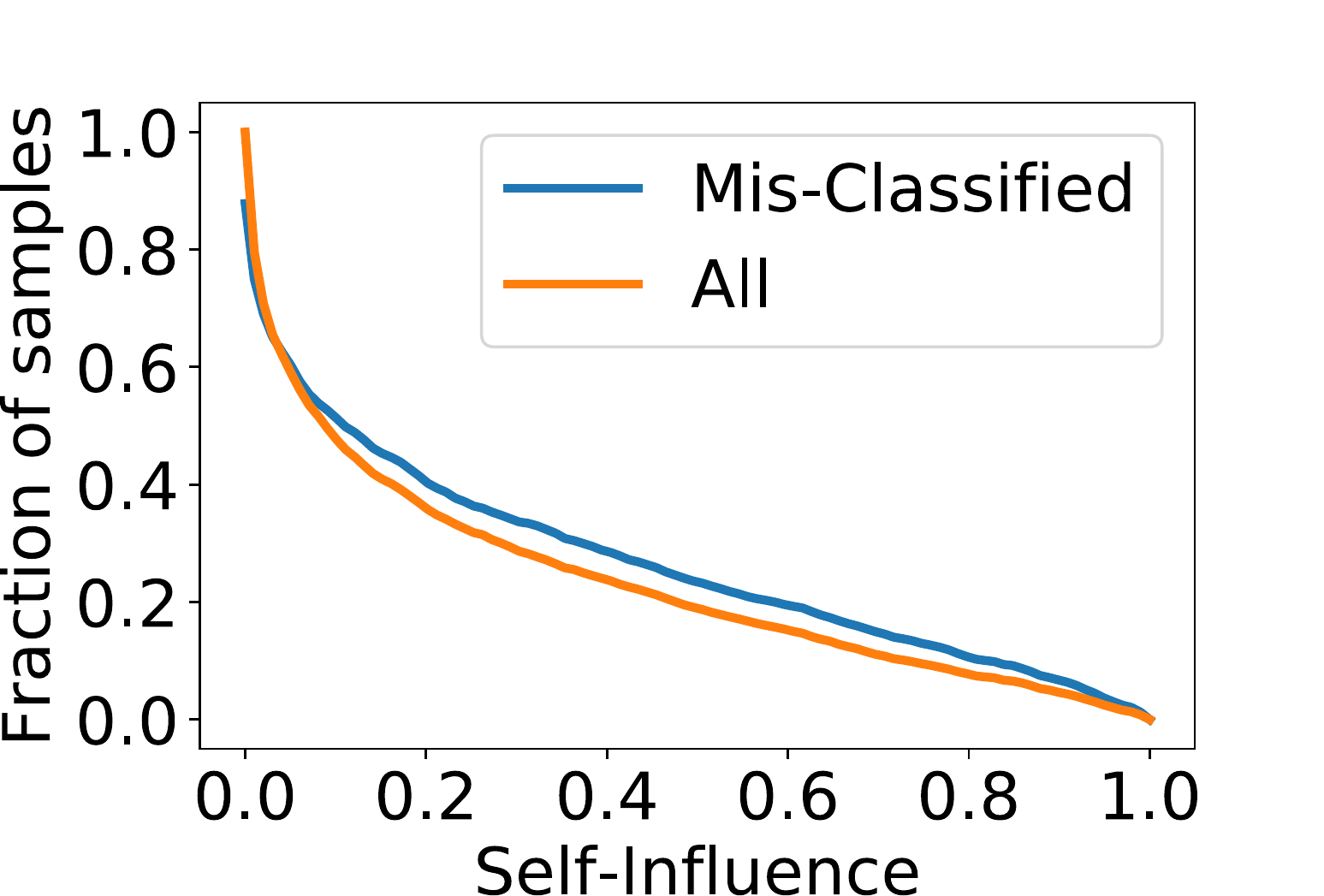_tex}
      \caption*{CAT}    
    \end{subfigure}
    \begin{subfigure}[t]{0.19\linewidth}
      \centering \def\svgwidth{0.99\linewidth}
      \input{./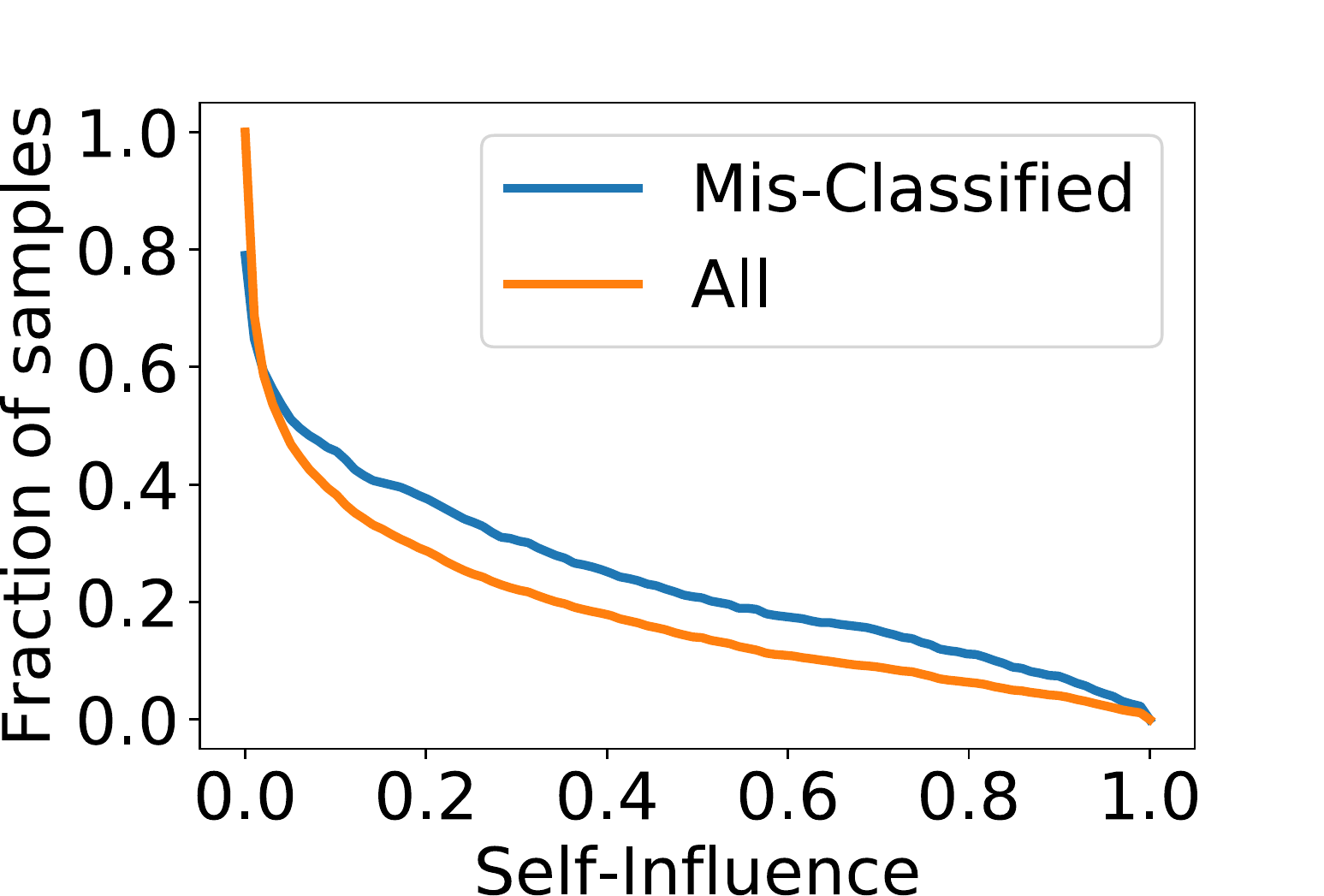_tex}
      \caption*{DEER}     
    \end{subfigure}
  \end{subfigure}
    \begin{subfigure}[t]{1.0\linewidth}
    \begin{subfigure}[t]{0.19\linewidth}
      \centering \def\svgwidth{0.99\linewidth}
      \input{./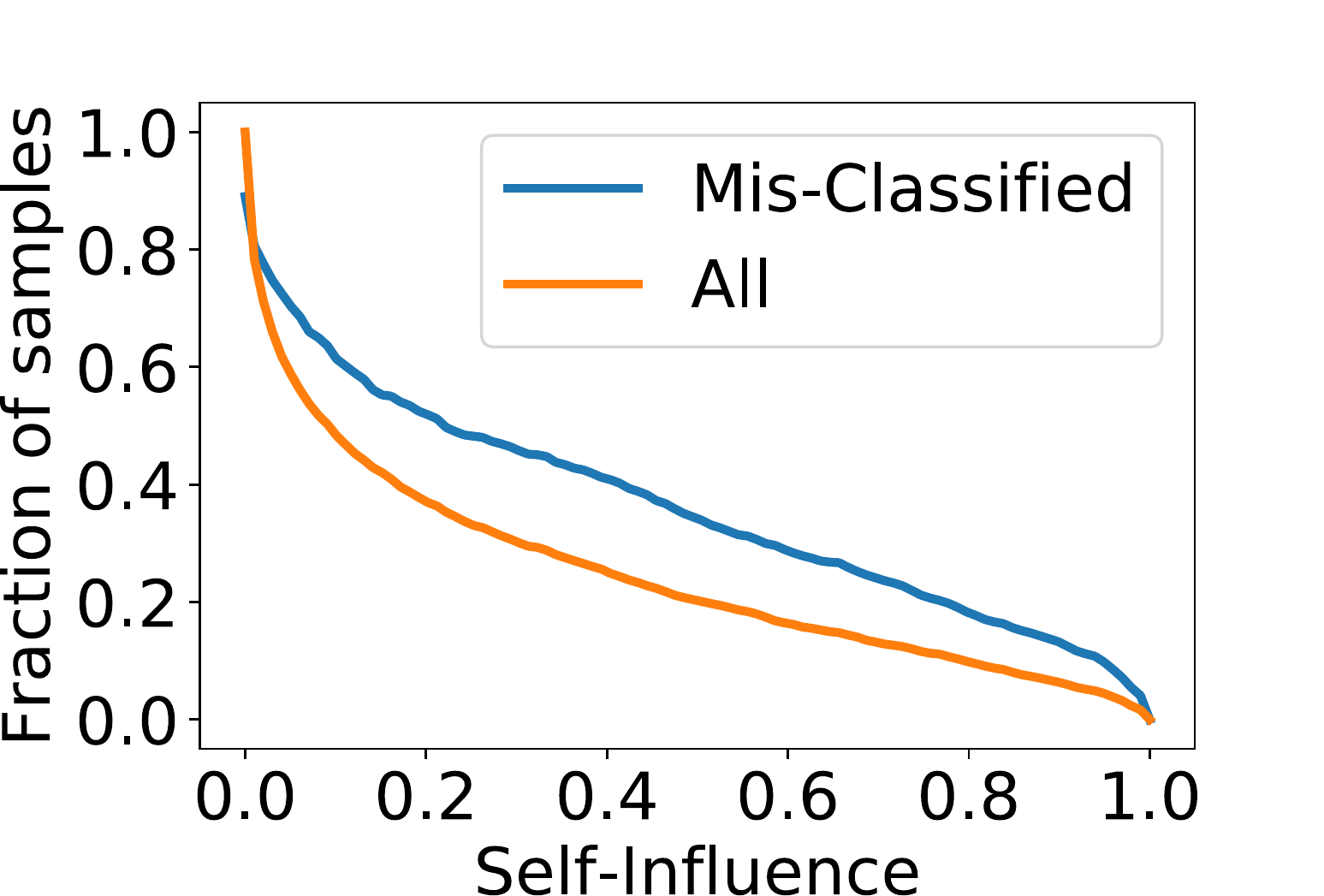_tex}
      \caption*{DOG}      
    \end{subfigure}
    \begin{subfigure}[t]{0.19\linewidth}
      \centering \def\svgwidth{0.99\linewidth}
      \input{./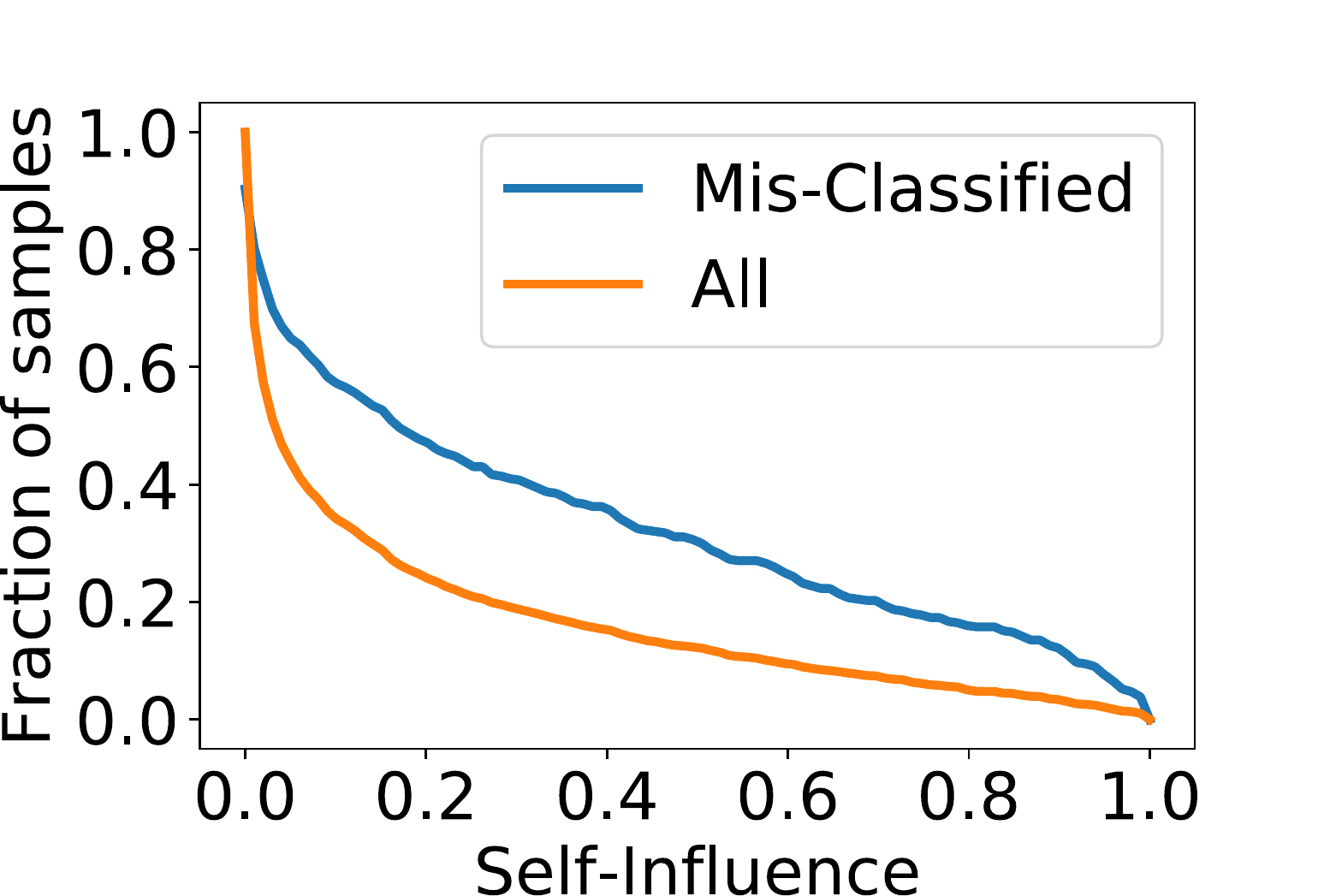_tex}
      \caption*{FROG}      
    \end{subfigure}
    \begin{subfigure}[t]{0.19\linewidth}
      \centering \def\svgwidth{0.99\linewidth}
      \input{./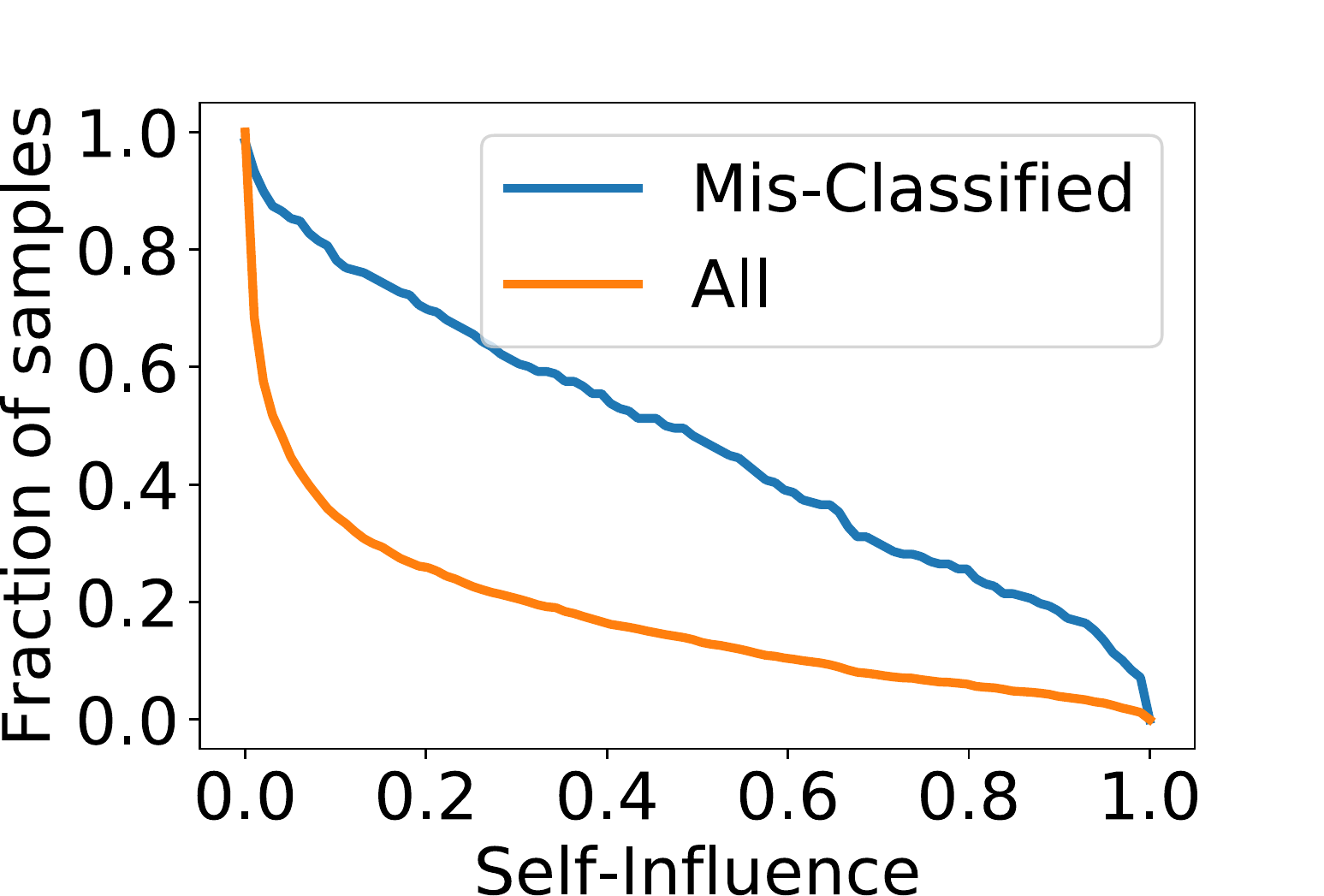_tex}
      \caption*{HORSE}     
    \end{subfigure}
    \begin{subfigure}[t]{0.19\linewidth}
      \centering \def\svgwidth{0.99\linewidth}
      \input{./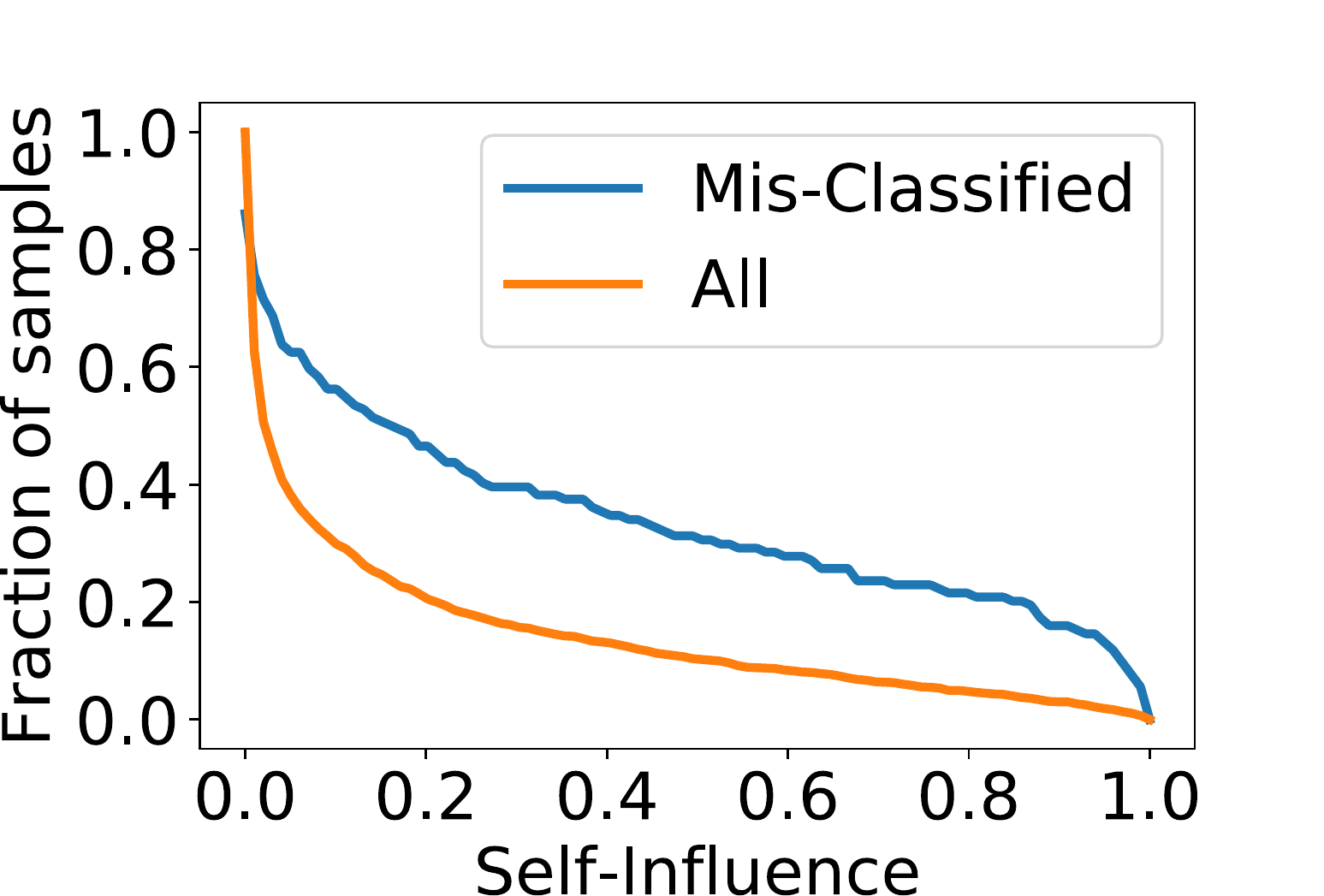_tex}
      \caption*{SHIP}
    \end{subfigure}
    \begin{subfigure}[t]{0.19\linewidth}
      \centering \def\svgwidth{0.99\linewidth}
      \input{./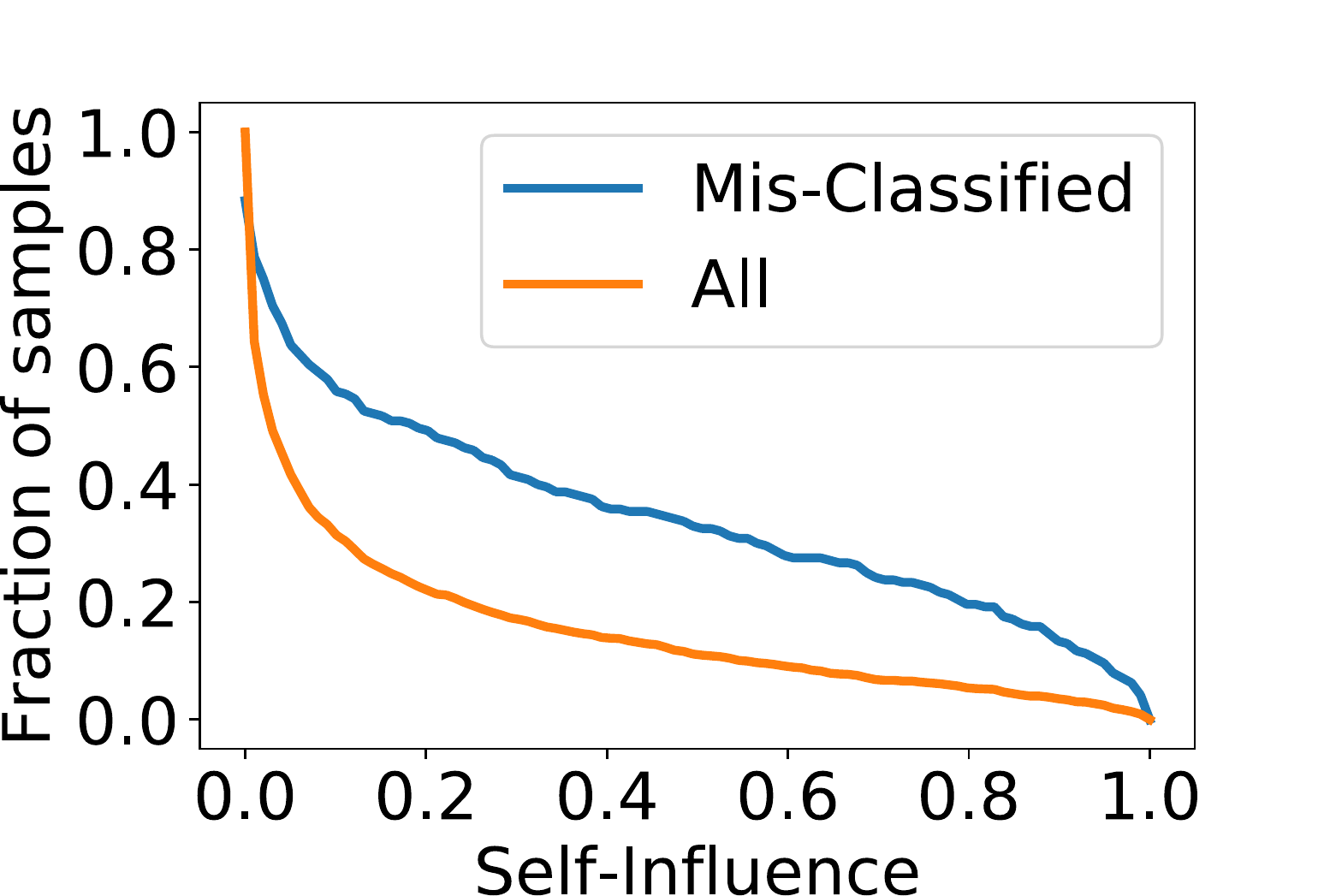_tex}
      \caption*{TRUCK}
    \end{subfigure}
  \end{subfigure}
  \caption{Fraction of train points that have a
    self-influence greater than $s$ is plotted versus $s$. The blue
    line represents the points mis-classified by
  an adversarially trained model on CIFAR10. The orange lines shows the
  distribution of self-influence for all points in the CIFAR10 dataset~(of the
  concerned class).  }
\label{fig:cifar_self_influence}
\end{figure}

\paragraph{Experiments on MNIST and CIFAR10} We demonstrate this effect in~\Cref{fig:adv-train-test-mis-class} with
examples from CIFAR10 and MNIST. Each pair of images contains a
mis-classified~(by robustly trained models) test image and the
mis-classified training image ``responsible'' for it (We describe
below how they were identified.).
Importantly both of
these images were correctly classified by a naturally trained
model. Visually, it is evident that the training images are  
extremely similar to the corresponding test image.  Inspecting the
rest of the training set, they are also very different from other
images in the training set. We can thus refer to these as rare sub-populations.

The notion that certain test examples
were not classified correctly due to a particular training examples
not being classified correctly  is measured by the
influence a training image has on the test image~(c.f. defn 3
in~\citet{Zhang2020}). Intuitively, it measures the probability that a
certain test example would be classified correctly if the model were
learned using a training set that \emph{did not contain} the training
point compared to if the training set \emph{did contain} that particular training
point. We obtained the influence of each training image on each test
image for that class from~\citet{Zhang2020}. We found the images
in~\Cref{fig:adv-train-test-mis-class} by manually searching for each test image, the training image that is
misclassified and is visually close to it. Our search space was
shortened with the help of the influence scores each training image has on the classification
of a test image. We searched in the set of top-$10$ most influential
mis-classified train images for each mis-classified test
image. The model used for~\cref{fig:adv-train-test-mis-class} is a \AT model for CIFAR10 with $\ell_2$-adversary with an
$\epsilon=0.25$ and a model trained with TRADES for MNIST with
$\lambda=\frac{1}{6}$ and $\epsilon=0.3$.

A precise notion of measuring if a sample is~\emph{rare} is through the
concept of self-influence. Self
influence of an example with respect to an algorithm~(model, optimizer
etc) can be defined as how unlikely it is for the model learnt
by that algorithm to be correct on an example if it \emph{had not
seen} that example during training compared to if it \emph{had seen} the
example during training. For a precise mathematical definition please
refer to Eq~(1) in~\citet{Zhang2020}. Self-influence for a~\emph{rare example}, that is
unlike other examples of that class, will be high as the rest of the
dataset will \emph{not} provide relevant information that will help the model in 
correctly predicting on that particular
example. In~\cref{fig:cifar_self_influence}, we show that the 
self-influence of training samples that were mis-classified by
adversarially trained models but correctly classified by a naturally
trained model is higher  compared to the distribution of
self-influence on the entire train dataset. In other words, it means
that the  self-influence  of the training examples mis-classified by the
robustly trained models is larger than the average self-influence of
~(all) examples belonging to that class. This supports our hypothesis
that adversarial training excludes fitting 
these rare~(or ones that need to be memorized) samples.

\paragraph{Experiments on a synthetic setting}This phenomenon is demonstrated
more clearly in a simpler distribution for different NN configurations
in~\Cref{fig:simple_adv_train_app}. We create a binary classification
problem on $\reals^2$. The data is
uniformly  supported on non-overlapping
circles of varying radiuses. All points in one circle have the same
label i.e. it is either blue or red depending on the color of the circle. We
train a shallow network with 2 layers and 1000 neurons in each
layer~(Shallow-Wide NN) and a deep network with 4 layers and 100 neurons
in each layer  using cross entropy loss and SGD. The background color
shows the decision region of the learnt neural
network.~\Cref{fig:simple_adv_train_app} shows that the adversarially
trained (\AT) models ignore the smaller circles~(i.e. rare
sub-populations) and tries to get a larger margin around the circles
it does classify correctly whereas the naturally trained~(NAT) models
correctly predicts every circle but ends up with very small margin
around a lot of circles.

\begin{figure}[t]
  \begin{subfigure}[t]{0.48\linewidth}
    \begin{subfigure}[t]{0.495\linewidth}
      \centering \def\svgwidth{0.999\linewidth}
      \input{./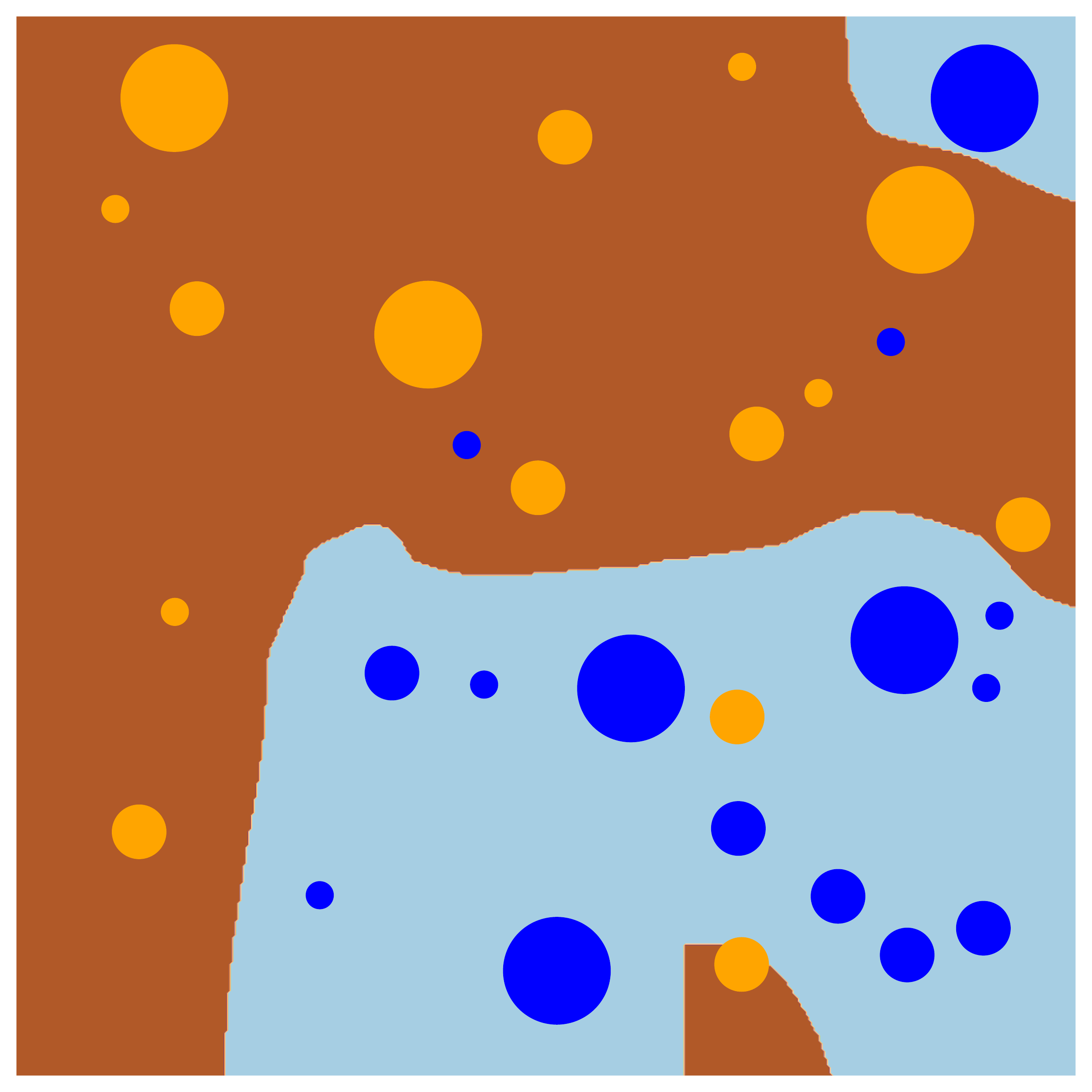_tex}
      \caption*{AT}
    \end{subfigure}
    \begin{subfigure}[t]{0.48\linewidth}
      \centering \def\svgwidth{0.999\linewidth}
      \input{./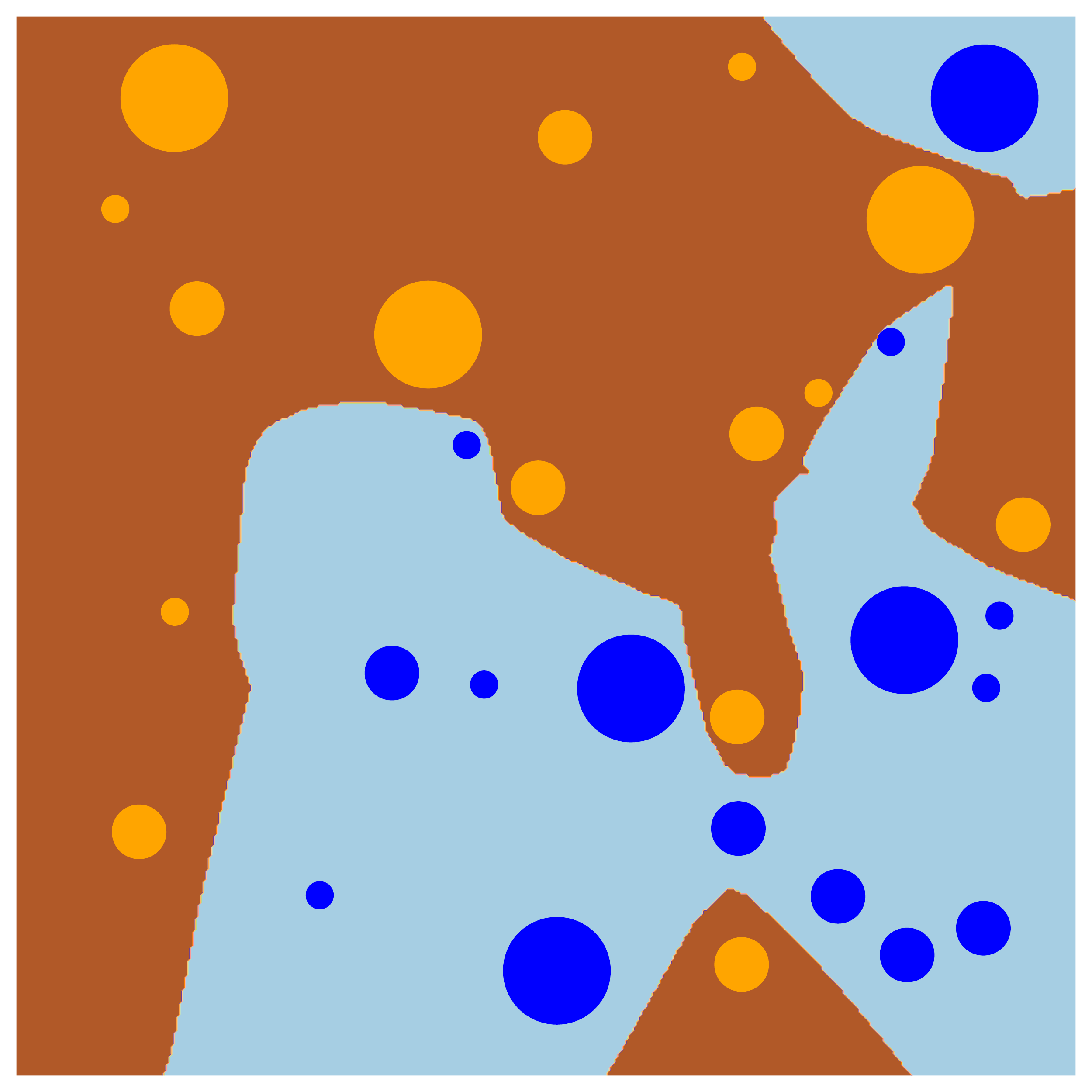_tex}
      \caption*{NAT}
    \end{subfigure}\caption{Shallow Wide NN}
  \end{subfigure}
  \begin{subfigure}[t]{0.48\linewidth}
    \begin{subfigure}[t]{0.495\linewidth}
      \centering \def\svgwidth{0.999\linewidth}
      \input{./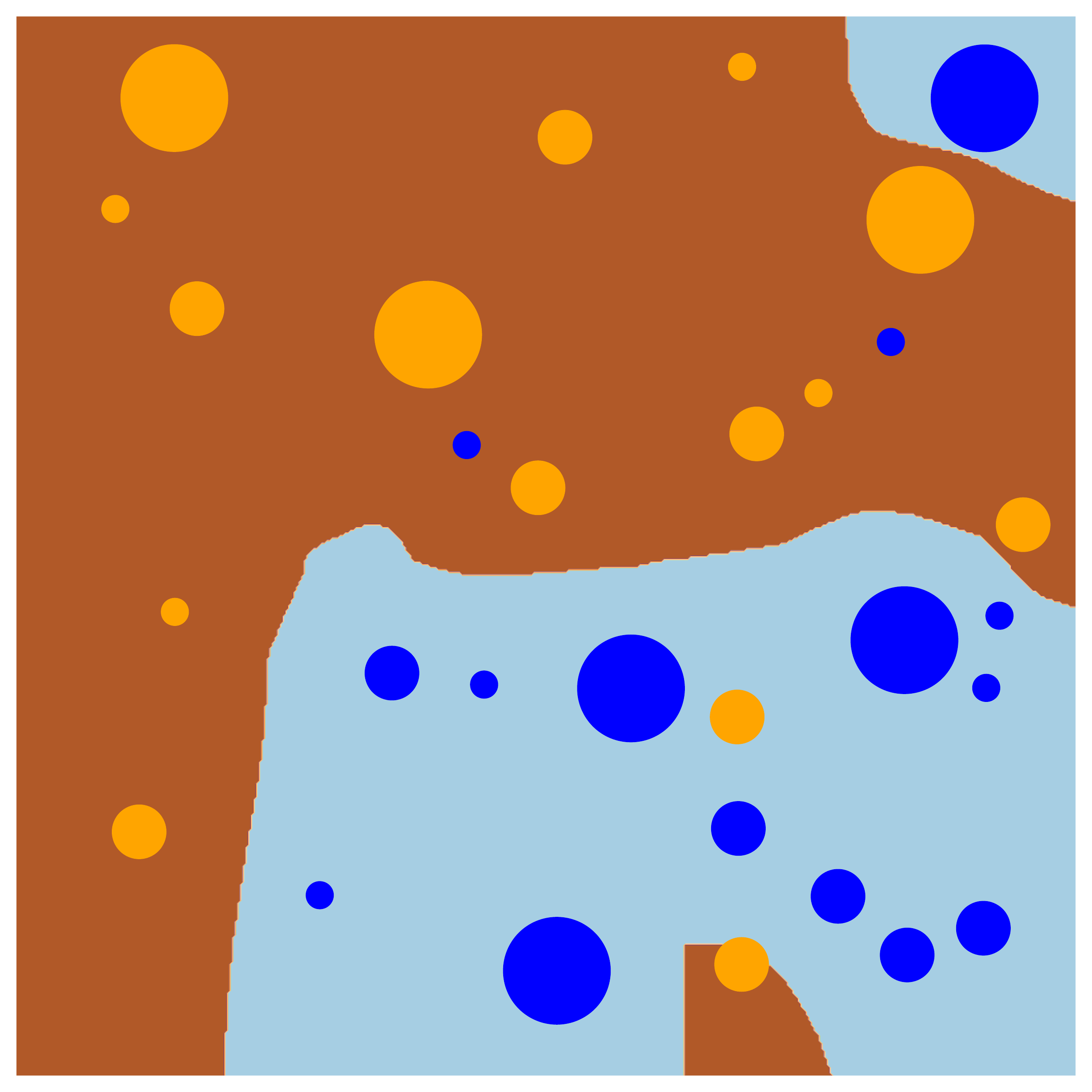_tex}
      \caption*{AT}
    \end{subfigure}
    \begin{subfigure}[t]{0.48\linewidth}
      \centering \def\svgwidth{0.999\linewidth}
      \input{./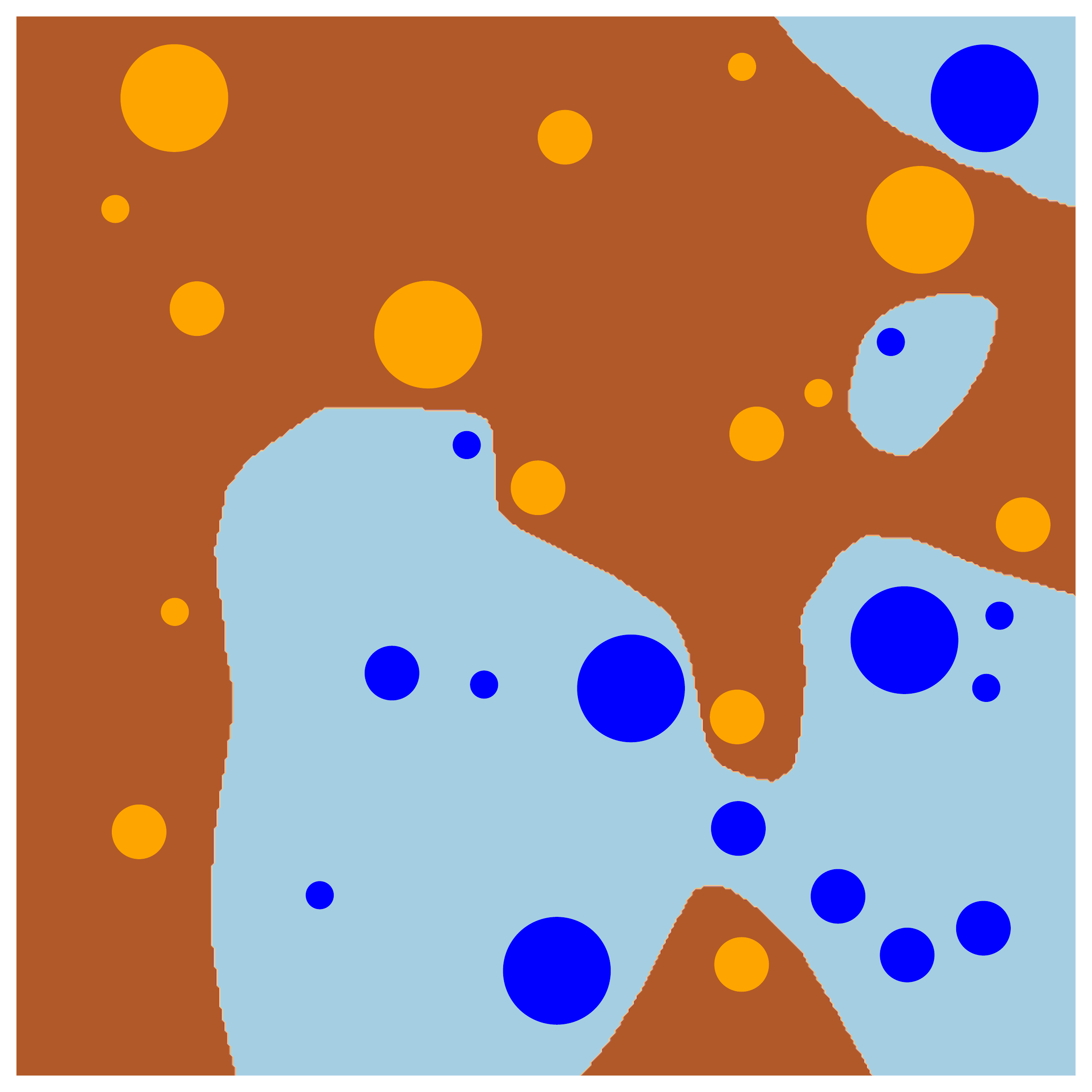_tex}
      \caption*{NAT}
    \end{subfigure}\caption{Deep NN}
  \end{subfigure}
  \caption{Adversarial training~(AT) leads to larger margin, and thus
    adversarial robustness around high density regions~(larger
    circles) but causes training error on low
    density sub-populations~(smaller circles) whereas naturally
    trained models~(NAT) minimizes the training error but leads to
    regions with very small margins.}
  \label{fig:simple_adv_train_app}
\end{figure}

\subsection{Complexity of decision boundaries}
\label{sec:robust-train-creat}
When neural networks are trained they create classifiers
whose decisions boundaries are much simpler than they need to be for
being adversarially robust. A few recent
papers~\citep{Nakkiran2019,schmidt2018adversarially} have discussed
that robustness might require more complex classifiers. In~\Cref{thm:parity_robust_repre_all,thm:repre-par-inter} we
discussed this theoretically and also why this might not violate the
traditional wisdom of Occam's Razor. In particular, complex decision
boundaries does  not necessarily mean more complex classifiers in
statistical notions of complexity like VC dimension. In this section,
we show through a simple experiment how the decision boundaries of
neural networks are not ``complex'' enough to provide large enough margins
and are thus adversarially much more vulnerable than is possible.

We train three different neural networks with ReLU activations, a shallow network~(Shallow
NN) with 2 layers and 100 neurons in each layer, a shallow network with 2 layers
and 1000 neurons in each layer~(Shallow-Wide NN), and a deep network with
4 layers and 100 neurons in each layer. We train them for 200 epochs
on a binary classification problem as constructed in~\Cref{fig:dec_reg_app_fig}.  The distribution is supported on
blobs and the color of each blob represent its label. On the right side, we have the decision
boundary of a large margin classifier, which is simulated using a
1-nearest neighbour.

From~\Cref{fig:dec_reg_app_fig}, it is evident that
the decision boundaries of neural networks trained with standard
optimizers have far \emph{simpler} decision boundaries than is needed to be
robust~(eg. the 1- nearest neighbour is much more robust than the
neural networks.)

\subsubsection{Accounting for fine grained sub-populations leads to better
  robustness}
\label{sec:fine-coarse}

We hypothesize that learning more meaningful representations by
  accounting for  fine-grained sub-populations within each class
  may lead to better robustness. We use the theoretical setup presented
in~\Cref{sec:bias-towrds-simpler,fig:complex_simple}. However,
if each of the circles belonged to a separate class then  the
decision boundary would have to be necessarily more complex as it
needs to, now, separate the balls that were previously within the same
class. We test this hypothesis with two experiments. First, we test it on the
the distribution defined in~\Cref{thm:parity_robust_repre_all} where
for each ball with label $1$, we assign it a different label~(say
$\alpha_1,\cdots, \alpha_{k}$) and similarly for balls with label $0$,
we assign it a different label~($\beta_1,\cdots,\beta_k$). Now, we
solve a multi-class classification problem for $2k$ classes with a
deep neural network  and then later aggregate the results by reporting
all $\alpha_i$s as $1$ and all $\beta_i$s as $0$.The resulting
decision boundary is drawn in~\Cref{fig:mc_parity} along with the
decision boundary for natural training and~\AT. Clearly, the decision
boundary for~\AT is the most complex and has the highest margin~(and
robustness) followed by the multi-class model and then the naturally
trained model.

Second, we also repeat the experiment with CIFAR-100. We train a ResNet50~\citep{HZRS:2016} on the fine
labels of CIFAR100 and then aggregate the fine labels corresponding to
a coarse label by summing up the logits. We call this model the
\emph{Fine2Coarse} model and compare the adversarial risk of this network to a
ResNet-50 trained directly on the coarse
labels. Note that the model is end-to-end differentiable as the only
addition is a layer to aggregate the logits corresponding to the fine
classes pertaining to each coarse class. Thus PGD adversarial attacks
can be applied out of the
box.~\Cref{fig:fine2coarse} shows that for all perturbation
budgets, \emph{Fine2Coarse} has smaller adversarial risk than the
naturally trained model.

\begin{figure}[t]
  \begin{subfigure}[t]{0.24\linewidth}
    \centering \def\svgwidth{0.99\linewidth}
    \input{./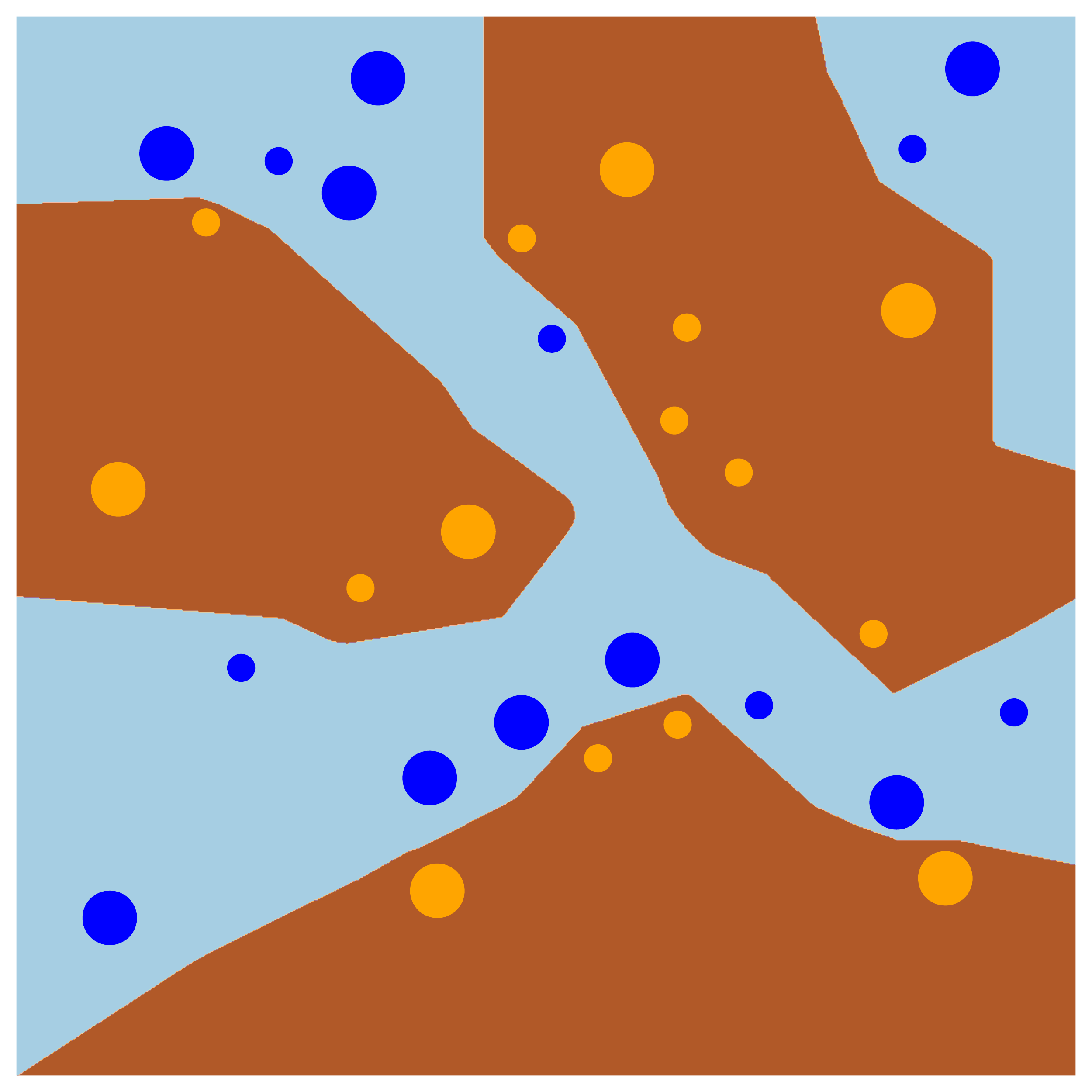_tex}
    \caption{Shallow NN}
    \label{fig:dec_shallow}
  \end{subfigure}
  \begin{subfigure}[t]{0.24\linewidth}
    \centering \def\svgwidth{0.99\linewidth}
    \input{./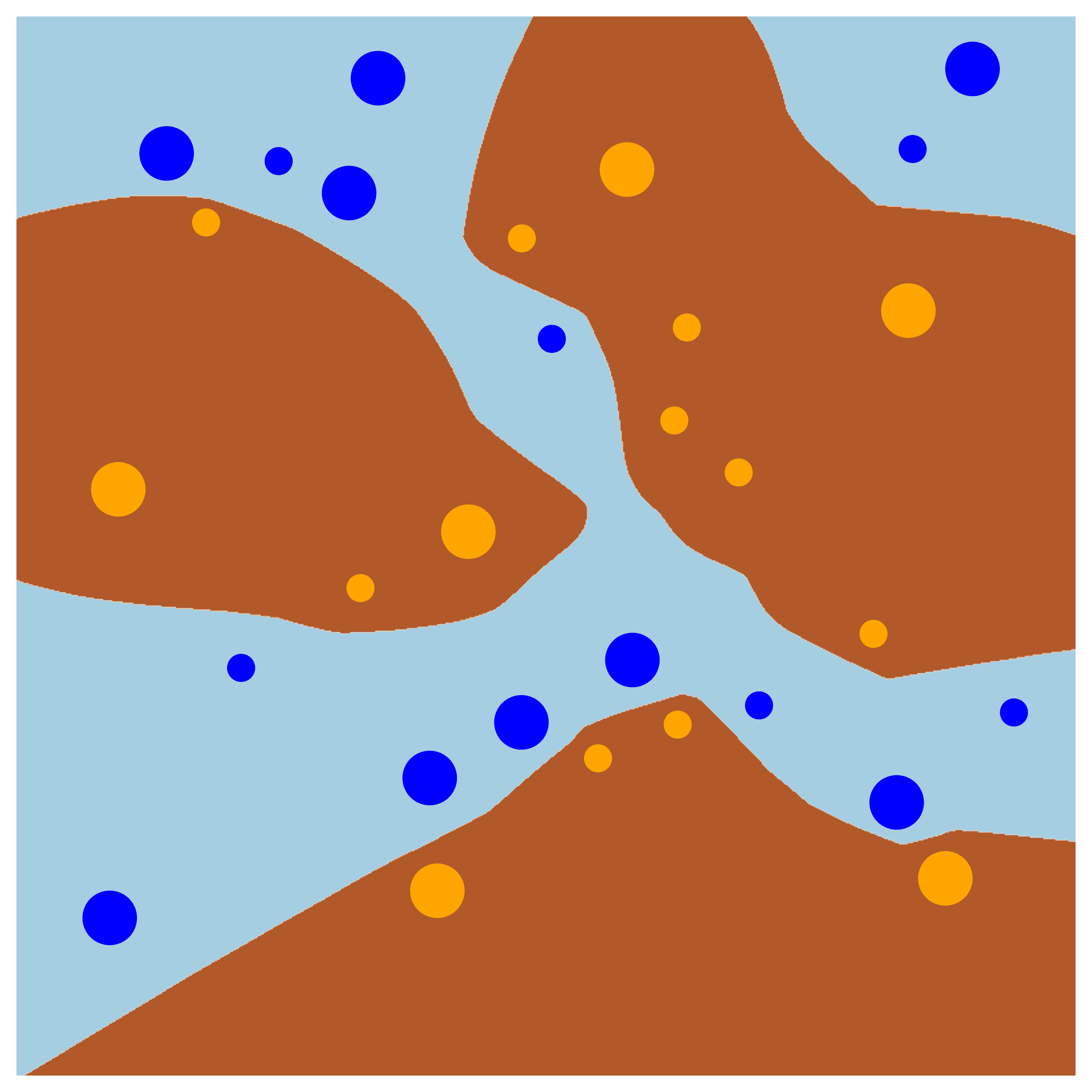_tex}
    \caption{Shallow-Wide NN}
    \label{fig:dec_shallow_wide}
  \end{subfigure}
  \begin{subfigure}[t]{0.24\linewidth}
    \centering \def\svgwidth{0.99\linewidth}
    \input{./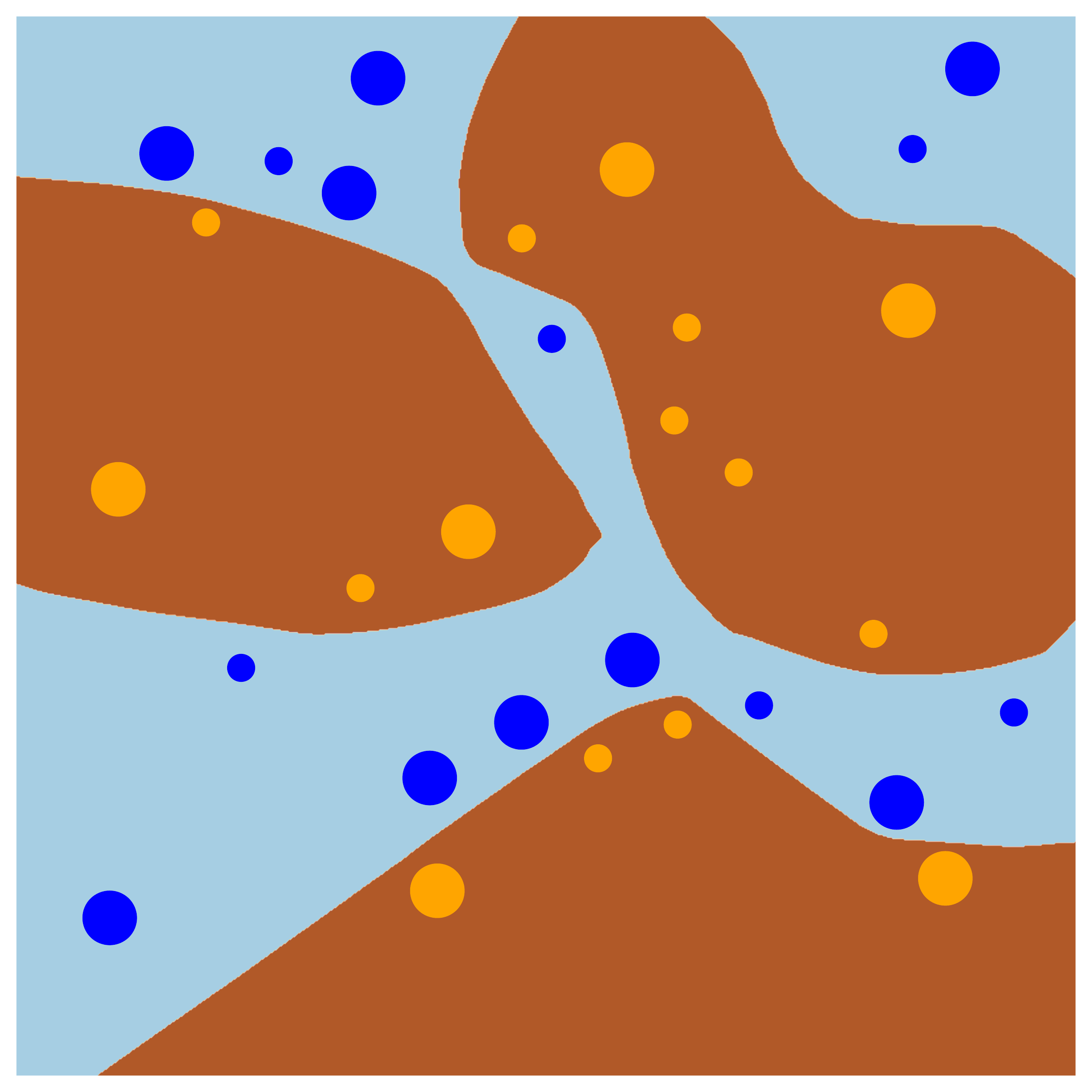_tex}
    \caption{Deep NN}
    \label{fig:dec_deep}
  \end{subfigure}
  \begin{subfigure}[t]{0.24\linewidth}
    \centering \def\svgwidth{0.99\linewidth}
    \input{./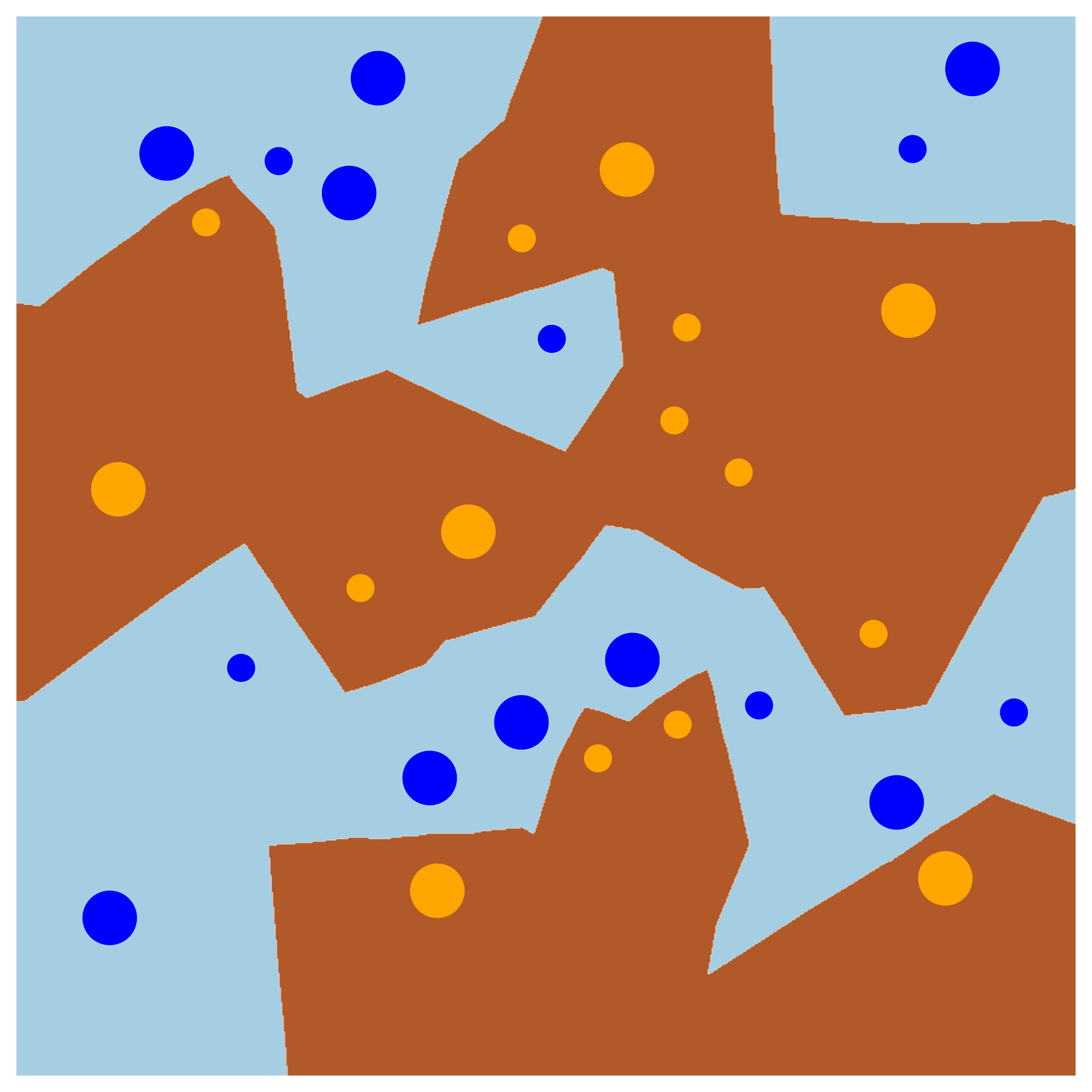_tex}
    \caption{Large Margin}
    \label{fig:dec_knn}
  \end{subfigure}
  \caption{Decision boundaries of neural networks are much simpler
    than they should be.}\label{fig:dec_reg_app_fig}
 \end{figure}

\section{Related Work}
\label{sec:related}

~\citep{montasser19a} established that there
are concept classes with finite VC dimensions i.e. are \emph{properly}
PAC-learnable but are only \emph{improperly} robustly PAC learnable. This
implies that to learn the problem with small adversarial error, a different
class of models~(or representations) needs to be used whereas for small
natural test risk, the original model class~(or representation) can be used.
Recent empirical works have also shown evidence towards
this~(eg.~\citep{Sanyal2020LR}).

\citet{pmlr-v97-hanin19a} have shown that
though the number of possible linear regions that can be created by a deep
ReLU network is exponential in depth, in practice for networks trained with
SGD this tends to grow only linearly thus creating much simpler decision
boundaries than is possible  due to sheer expresssivity of
deep networks. Experiments on the data models from our
theoretical settings indeed show that adversarial training indeed produces more
``complex'' decision boundaries

\citet{jacobsen2018excessive} have discussed that excesssive
invariance in neural networks might increase adversarial
error. However, their argument is that excessive invariance
can allow sufficient changes in the semantically important
features without changing the network's prediction. They
describe this as Invariance-based adversarial examples as
opposed to perturbation based adversarial examples. We show
that excessive ~(incorrect) invariance might also result in
perturbation based adversarial examples.

Another contemporary
work~\citep{Geirhos2020} discusses a phenomenon they refer to as~\emph{Shortcut
  Learning} where deep learning models perform very well on
standard tasks like reducing classification error but fail to
perform in more difficult real world situations. We discuss
this in the context of models that have small test
error but large adversarial error and provide and theoretical
and empirical to discuss why one of the reasons for this is
sub-optimal representation learning.

\begin{figure}[t]%
  \begin{subfigure}[b]{0.55\linewidth}
   \begin{subfigure}[t]{0.32\linewidth}
   \centering \def\svgwidth{0.99\linewidth}
   \input{./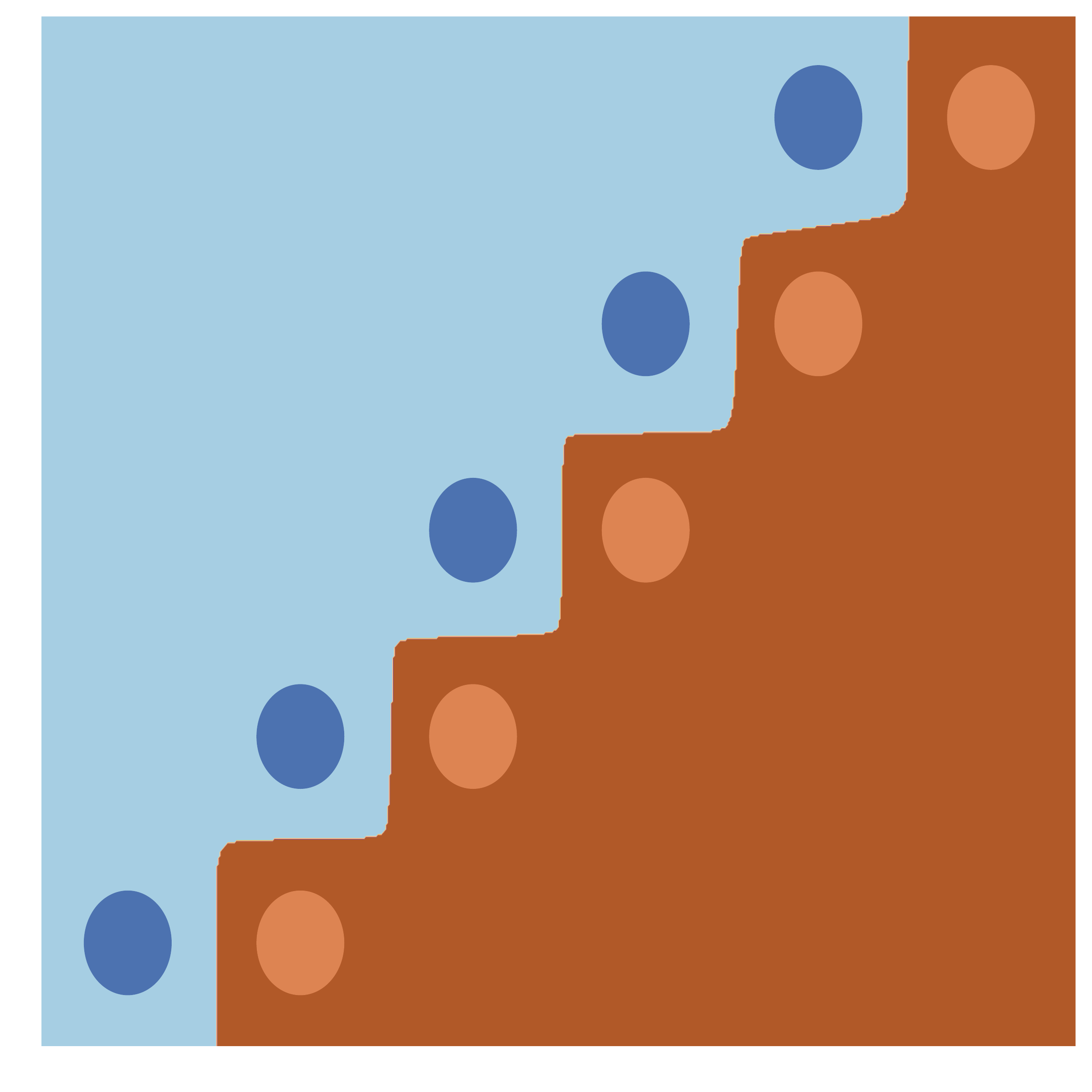_tex}
   \caption*{\AT}
  \end{subfigure}
   \begin{subfigure}[t]{0.32\linewidth}
    \centering \def\svgwidth{0.99\linewidth}
    \input{./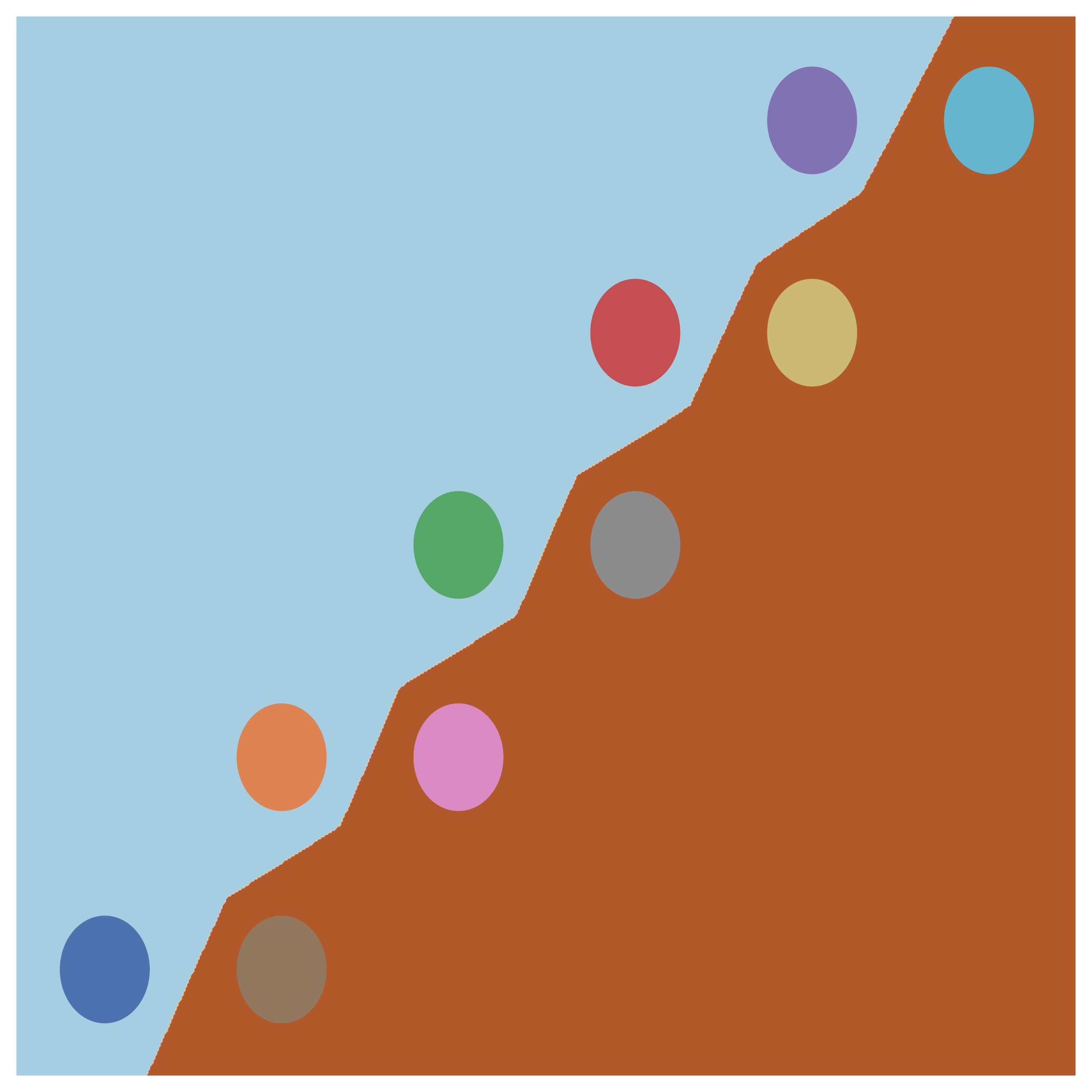_tex}
    \caption*{MULTICLASS}
  \end{subfigure}
  \begin{subfigure}[t]{0.32\linewidth}
    \centering \def\svgwidth{0.99\linewidth}
    \input{./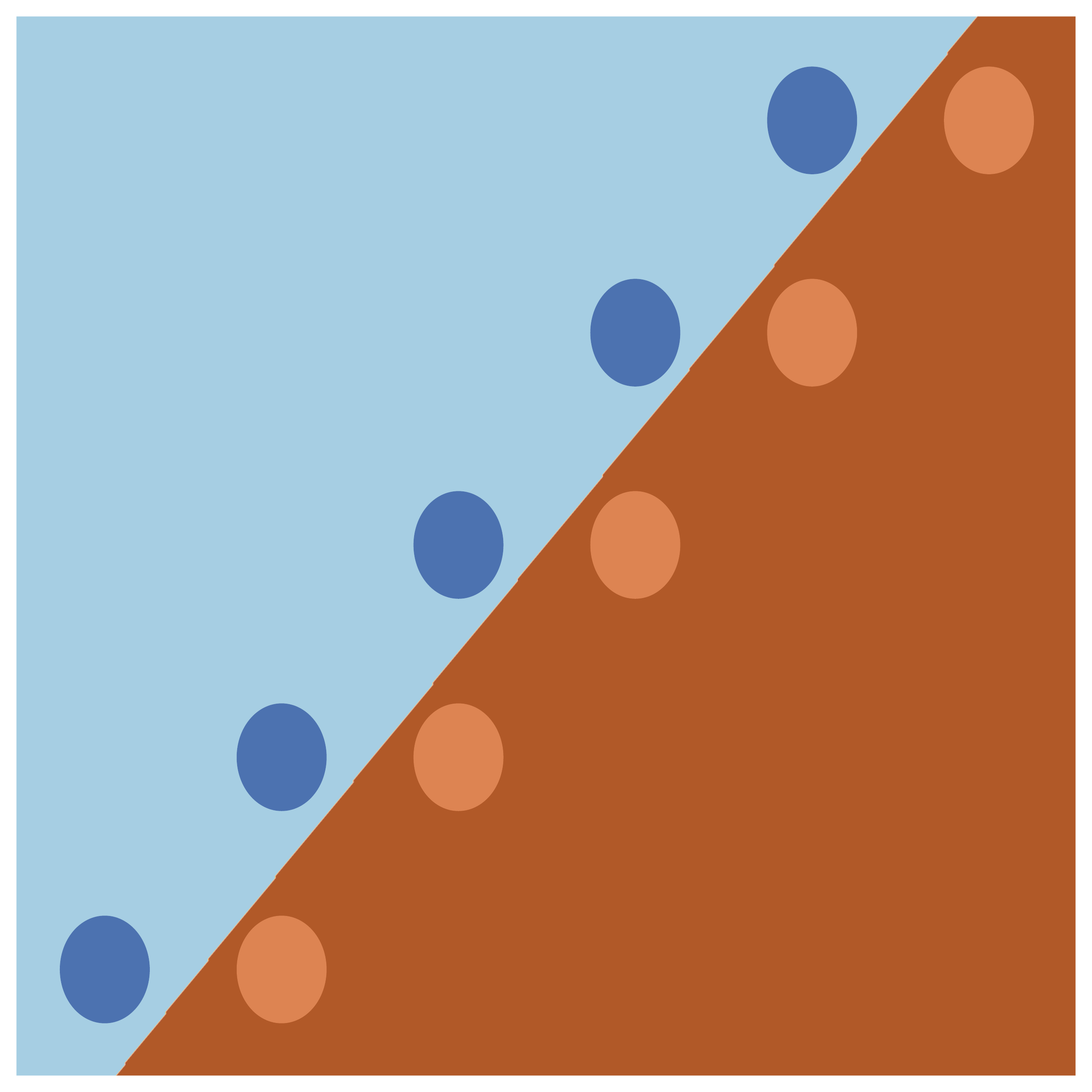_tex}
    \caption*{NATURAL}
  \end{subfigure}
  \caption{Decision Region of neural networks are more complex for
    adversarially trained models. Treating it as a multi-class
    classification problem, with natural training~(MULTICLASS), also increases robustness by increasing the
    margin.}
  \label{fig:mc_parity}
\end{subfigure}\hfill
\begin{subfigure}[b]{0.42\linewidth}
    \centering \def\svgwidth{0.85\linewidth}
    \input{./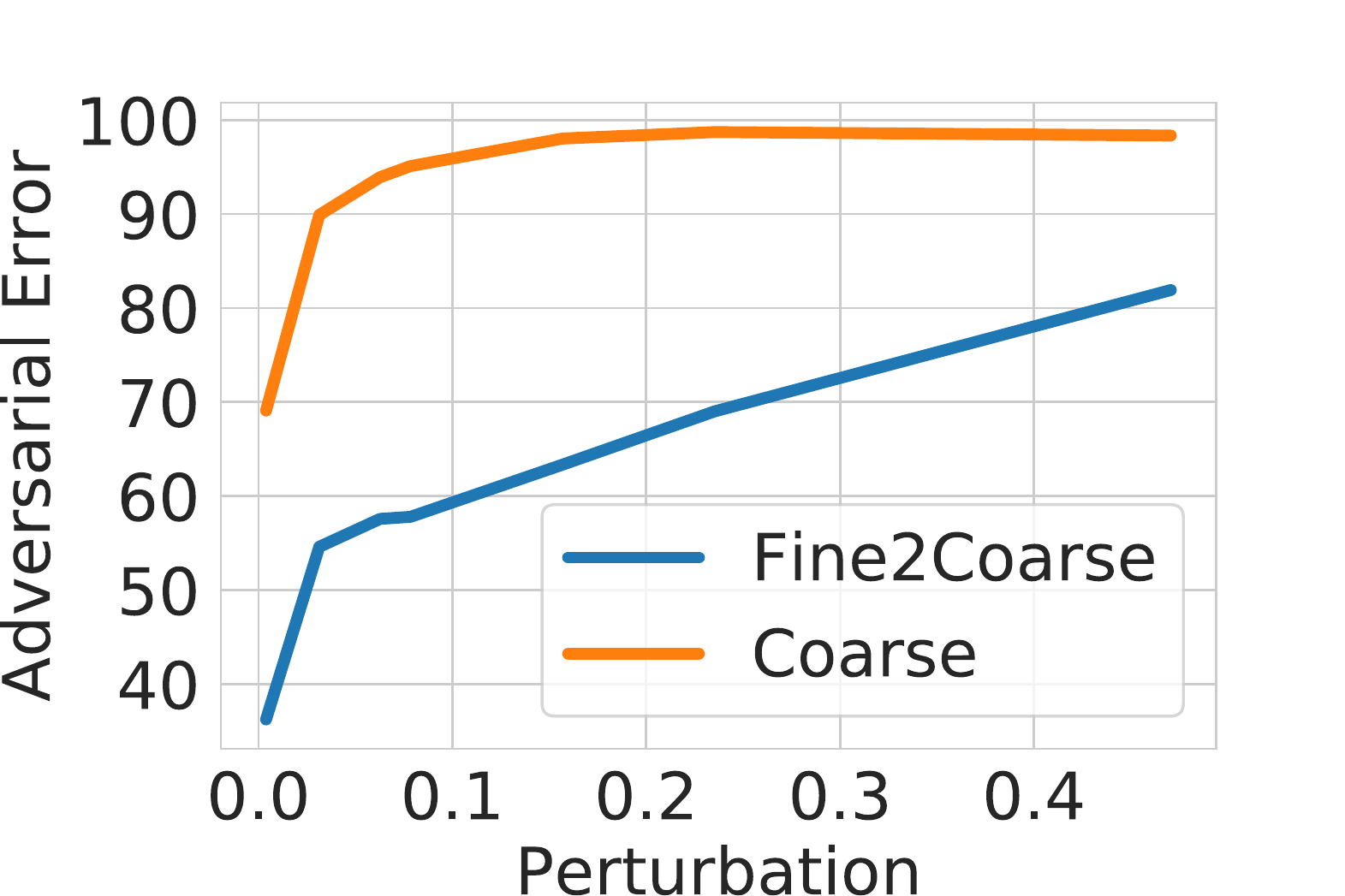_tex}
    \caption{\footnotesize Adversarial error on coarse labels of CIFAR-100. %
    }
    \label{fig:fine2coarse}
  \end{subfigure}
  \caption{Assigning a separate class to each sub-population within
    the original class during training
  increases robustness by learning more meaningful representations.}
\end{figure}

\section{Conclusion}
\label{sec:conclusion}
Recent research has largely shone a positive light on interpolation~(zero
training error) by highly over-parameterized models even in the presence of
label noise. While overfitting noisy data may not harm generalisation, we have
shown that this can be severely detrimental to robustness. This raises a new
security  threat where label noise can be inserted into datasets to make the
models learnt from them vulnerable to adversarial attacks without
hurting their test accuracy. As a result, further research into learning without
memorization is ever more important~\citep{sanyal2020stable,shen19e}. Further, we
underscore the importance of proper representation learning in regards to
adversarial robustness. Representations learnt by deep networks often encode a
lot of different invariances, e.g., location, permutation, rotation, etc. While
some of them are useful for the particular task at hand, we highlight that
certain invariances can increase adversarial vulnerability. Thus we believe that
making significant progress towards training robust models with good test error
requires us to rethink representation learning and closely examine the
data on which we are training these models.
\section{Acknowledgement}
\label{sec:ack}
We thank Vitaly Feldman and Chiyuan Zhang for providing us with data
that helped to significantly speed up some parts of this work. We also
thank Nicholas Lord for feedback on the draft. AS acknowledges support
from The Alan Turing Institute under the Turing Doctoral Studentship
grant TU/C/000023. VK is supported in part by the Alan Turing Institute under the EPSRC grant EP/N510129/1.
PHS and PD are supported by the ERC grant ERC-2012-AdG 321162-HELIOS, EPSRC grant Seebibyte
EP/M013774/1 and EPSRC/MURI grant EP/N019474/1. PHS and PD also acknowledges the Royal Academy
of Engineering and FiveAI.
\clearpage
\appendix
\section{Proofs for~\Cref{sec:theoretical-setting}}
\label{sec:proofs-all}
In this section, we present the formal proofs to the theorems stated
in~\Cref{sec:theoretical-setting}.
\subsection{Proof of~\Cref{thm:inf-label}}
\label{sec:proof-21}
\infectedballs*
\begin{proof}[Proof of~\cref{thm:inf-label}]
  From~\cref{eq:balls_density}, for any $\zeta$ and $s\in\zeta$,
  \[\bP_{\br{\vec{x},y}\sim\cD}\bs{\vec{x}\in\cB_{\rho}\br{s}}\ge
    \frac{c_2}{\abs{\zeta}}\]
  As the sampling of the point and the injection of label noise are
  independent events,
  \[\bP_{\br{\vec{x},y}\sim\cD}\bs{\vec{x}\in\cB_{\rho}\br{s}\wedge \vec{x}~\text{gets mislabelled}}\ge
    \frac{c_2\eta}{\abs{\zeta}}\]
  Thus,
  \begin{align*}
    \bP_{\cS_m\sim\cD^m}\bs{\exists\br{\vec{x},y}\in\cS_m:
    \vec{x}\in\cB_{\rho}\br{s}\wedge \vec{x}~\text{is
    mislabelled}}&\ge 
                   1 - \br{1-\frac{c_2\eta}{\abs{\zeta}}}^m\\
                 &\ge 1 - \exp\br{\frac{-c_2\eta m}{\abs{\zeta}}}\\
  \end{align*}
  Substituting $m\ge\frac{\abs{\zeta}}{\eta
    c_2}\log\br{\frac{\abs{\zeta}}{\delta}}$ and applying the union
  bound over all $s\in\zeta$, we get
  \begin{equation}\label{proof:inf-label-1}
    \bP_{\cS_m\sim\cD^m}\bs{\forall s\in\zeta,~\exists\br{\vec{x},y}\in\cS_m:
    \vec{x}\in\cB_{\rho}\br{s}\wedge \vec{x}~\text{is
    mislabelled}}\ge 
1 - \delta \end{equation}
As for all
$\vec{s}\in\reals^d$ and $\forall\vec{x},\vec{z},\in\cB_\rho^p\br{\vec{s}},~\norm{\vec{x}-\vec{z}}_p\le2\rho$, we
have that
\begin{align*}
  \radv{2\rho}{f;\cD}&=\bP_{\cS_m\sim\cD^m}\bs{\bP_{\br{\vec{x},y}\sim\cD}\bs{\exists\vec{z}\in\cB_{2\rho}\br{\vec{x}}~\wedge y\neq f\br{\vec{z}}}}\\
                     &=\bP_{\cS_m\sim\cD^m}\bs{\bP_{\br{\vec{x},y}\sim\cD}\bs{\exists\vec{z}\in\cB_{2\rho}\br{\vec{x}}~\wedge c\br{\vec{z}}\neq
                       f\br{\vec{z}}}}\\
                     &\ge\bP_{\cS_m\sim\cD^n}\bs{\bP_{\br{\vec{x},y}\sim\cD}\bs{\vec{x}\in\bigcup_{s\in\zeta}\cB_{\rho}^p\br{s}\wedge\bc{\exists\vec{z}\in\cB_{2\rho}\br{\vec{x}}:
                       c\br{\vec{z}}\neq f\br{\vec{z}}}}}\\
                     &=\bP_{\cS_m\sim\cD^m}\bs{\bP_{\br{\vec{x},y}\sim\cD}\bs{\exists\vec{s}\in\zeta:\vec{x}\in\cB_{\rho}^p\br{s}\wedge\bc{\exists\vec{z}\in\cB_{\rho}\br{\vec{s}}: c\br{\vec{z}}\neq f\br{\vec{z}}}}}\\
                     &=\bP_{\br{\vec{x},y}\sim\cD}\bs{\vec{x}\in
                       \bigcup_{s\in\zeta}\cB_\rho^p\br{s}}
                       \quad\text{w.p. atleast}~~1-\delta\\
                     &\ge c_1  \quad\text{w.p.}~~1-\delta 
\end{align*}
where $c$ is the
true concept for the distribution $\cD$. The second equality follows
from the assumptions that each of the balls around $\vec{s}\in\zeta$
are pure in their labels. The second last equality follows
from~\cref{proof:inf-label-1} by using the $\vec{x}$ that is
guaranteed to exist in the ball around $\vec{s}$ and be mis-labelled
with probability atleast $1-\delta$. The last equality from Assumption~\cref{proof:inf-label-1}.
\end{proof}

\subsection{Proofs of~\Cref{sec:bias-towrds-simpler}}
\label{sec:proof-22}

\parityrobustrepre*
\begin{proof}[Proof of~\Cref{thm:parity_robust_repre_all}]
  We define a family of distribution $\cD$, such that each
  distribution in $\cD$ is supported on balls of radius $r$ around
  $\br{i,i}$ and  $\br{i+1,i}$ for positive integers $i$. Either all the balls around
  $\br{i,i}$ have the labels $1$ and the balls around $\br{i+1,i}$ have
  the label $0$ or vice versa. ~\cref{fig:complex_simple} shows an
  example where the colors indicate the label.

  Formally, for $r>0$,
  $k\in\bZ_+$, the $\br{r,k}$-1 bit parity class 
    conditional model is defined over
    $\br{x,y}\in\reals^2\times\bc{0,1}$ as follows. First, a label $y$
    is sampled uniformly from $\bc{0,1}$, then and integer $i$ is
    sampled uniformly from the set $\bc{1,\cdots,k}$ and finally
    $\vec{x}$ is generated by sampling uniformly from the $\ell_2$
    ball of radius $r$ around $\br{i+y,i}$.

     In~\cref{thm:linear_parity_all} we first show that a set of $m$
     points sampled iid from any distribution as defined above for
     $r<\frac{1}{2\sqrt{2}}$ is with probability $1$ linear separable
     for any $m$. In   addition, standard VC bounds show that any linear classifier that
     separates $S_{m}$ for large enough $m$ will have
     small test error.~\Cref{thm:linear_parity_all} also proves that
     there exists a range of $\gamma,r$ such that for any distribution
     defined with $r$ in that range, though it is possible to obtain a
     linear classifier with $0$ training and test error, the  minimum
     adversarial risk will be bounded from $0$.

      However while it is possible to obtain a linear classifier with
      $0$ test error,  all such linear classifiers has a large
      adversarial vulnerability. In~\Cref{thm:parity_robust}, we show that there exists a
      different representation for this problem, which also achieves
      zero training and  test error and in addition has zero
      adversarial risk for a range of $r,\gamma$ where the linear
      classifier's adversarial error was atleast a constant.

\end{proof}

\begin{restatable}[Linear Classifier]{lem}{linearparityall}\label{thm:linear_parity_all}

   There exists universal  constants $\gamma_0,\rho$, such
   that for any perturbation $\gamma>\gamma_0$,
     radius $r\ge\rho$, and $k\in\bZ_+$, the following holds. Let $\cD$ be the family of $\br{r,k}$-
  1-bit parity class conditional model, $\cP\in\cD$ and
  $\cS_n=\bc{\br{\vec{x}_1,y_1},\cdots,\br{\vec{x}_n,y_1}}$ be a set
  of $n$ points sampled i.i.d. from
  $\cP$. 
\begin{compactitem}
\item[1)]  For any $n>0$, $S_n$ is linearly separable with probability $1$
  i.e. there exists a $h:\br{\vec{w},w_0}$,
  $\vec{w}\in\reals^2,w_0\in\reals$ such that the linear hyperplane
  $\vec{x}\rightarrow\vec{w}^\top\vec{x}+w_0$ separates $\cS_n$ with
  probability $1$:
  \[\forall \br{\vec{x},y}\in\cS_n\quad
    z\br{\vec{w}^\top\vec{x}+w_0}>0\quad\text{where}~ z=2y-1\]

\item[2)] Further there exists an universal constant $c$ such that for
  any $\epsilon,\delta>0$ with
  probability $1-\delta$ for any $\cS_n$ with
  $n=c\frac{1}{\epsilon^2}\log\frac{1}{\delta}$, any linear classifier
  $\tilde{h}$ that separates $\cS_n$ has
  $\risk{\cP}{\tilde{h}}\le\epsilon$.
 \item [3)] Let $h:\br{\vec{w},w_0}$ be any linear classifier that has
   $\risk{\cP_P}{h}=0$. Then, $\radv{\gamma}{h;\cP}>0.0005$.
\end{compactitem}
\end{restatable}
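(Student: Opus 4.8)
The plan is to exploit a rigidity phenomenon: having zero natural error on $\cP$ is an unusually restrictive condition for a linear separator, and it forces \emph{every} ball of the support to lie within $\ell_2$-margin $\tfrac12-r$ of the decision hyperplane; once this is established, an elementary symmetry argument shows that a $\gamma$-bounded adversary flips at least half of each such ball as soon as $\gamma$ exceeds an absolute constant. Concretely I would take $\gamma_0=\tfrac12$ and let $\rho$ be any constant in $\br{0,\tfrac1{2\sqrt2}}$; since the family $\cD$ (and the separability assertion of part~1) is only meaningful for $r<\tfrac1{2\sqrt2}$, I work in the regime $\rho\le r<\tfrac1{2\sqrt2}$, in which zero-error linear classifiers exist (for instance the hyperplane $x_1-x_2=\tfrac12$), so the statement is not vacuous.

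\textbf{Normalisation and the rigidity step (the crux).} The halfspace computed by $h$ is unchanged under positive rescaling of $\br{\vec{w},w_0}$, so assume $\norm{\vec{w}}_2=1$; by the symmetry of $\cD$ under $\br{a,b}\mapsto\br{a+1,b}$, and replacing $\br{\vec{w},w_0}$ by $\br{-\vec{w},-w_0}$ if needed, assume the label-$0$ balls are those centred at $\vec{s}_i=\br{i,i}$ and the label-$1$ balls those centred at $\vec{t}_i=\br{i+1,i}$, $i=1,\dots,k$, with $h$ predicting $1$ exactly when $\vec{w}^\top\vec{x}+w_0>0$. Let $m_0(i)=-\max_{\vec{x}\in\cB_r^2\br{\vec{s}_i}}\br{\vec{w}^\top\vec{x}+w_0}=-\br{\vec{w}^\top\vec{s}_i+w_0}-r$ and $m_1(i)=\min_{\vec{x}\in\cB_r^2\br{\vec{t}_i}}\br{\vec{w}^\top\vec{x}+w_0}=\br{\vec{w}^\top\vec{t}_i+w_0}-r$ be the $\ell_2$-margins of the two balls at index $i$; zero natural error gives $m_0(i),m_1(i)\ge0$. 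The key identity, using $\vec{t}_i-\vec{s}_i=\br{1,0}$, is
\[
m_0(i)+m_1(i)=\vec{w}^\top\br{\vec{t}_i-\vec{s}_i}-2r=w_1-2r\le\norm{\vec{w}}_2-2r=1-2r,
\]
independently of $i$. Hence for \emph{every} $i$ at least one of the two balls at index $i$ has margin $\le\tfrac12-r$; picking one such ball $B_i$ for each $i$ gives $k$ distinct balls, each of $\cP$-mass $\tfrac1{2k}$, so $\bP_\cP\bs{\bigcup_i B_i}\ge\tfrac12$. This rules out a uniformly large-margin linear separator — intuitively the label-$1$ chain of balls is the label-$0$ chain translated by a single lattice step, so the line is pinched between consecutive balls and, being a single line, is pinched near all of them at once.

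\textbf{Flipping a small-margin ball.} Fix any support ball $B$ with $\ell_2$-margin $\mu\le\tfrac12-r$, say a label-$0$ ball with centre $\vec{s}$ (the label-$1$ case is symmetric). A point $\vec{x}\in B$ is $\gamma$-adversarially misclassified under $\ell_\infty$ iff $\vec{w}^\top\vec{x}+w_0+\gamma\norm{\vec{w}}_1>0$, i.e.\ $\vec{w}^\top\vec{x}+w_0>-\gamma\norm{\vec{w}}_1$. For $\vec{x}$ uniform in $B$, the law of $\vec{w}^\top\vec{x}+w_0$ is symmetric about its value at $\vec{s}$, namely $-\mu-r$, so this probability is $\ge\tfrac12$ whenever $-\gamma\norm{\vec{w}}_1\le-\mu-r$, i.e.\ $\gamma\norm{\vec{w}}_1\ge\mu+r$. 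Since $\norm{\vec{w}}_1\ge\norm{\vec{w}}_2=1$ and $\mu+r\le\tfrac12$, this holds whenever $\gamma\ge\tfrac12$. Thus each of the $\ge k$ small-margin balls contributes at least half of its mass to the adversarial error, so $\radv{\gamma}{h;\cP}\ge\tfrac12\cdot\tfrac12=\tfrac14>0.0005$ for every $\gamma>\gamma_0=\tfrac12$. (Replacing $\norm{\vec{w}}_1$ by $\norm{\vec{w}}_2$ gives the same bound under $\ell_2$; in line with the paper's stated aim, no effort is made to optimise $\gamma_0$, $\rho$, or $0.0005$.)

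The only genuinely non-routine ingredient is the margin identity $m_0(i)+m_1(i)=w_1-2r$ and the ensuing conclusion that at least half the probability mass lies within margin $\tfrac12-r$ of the boundary; everything else — the normalisation, the symmetry of the law of $\vec{w}^\top\vec{x}+w_0$ over a ball, the $\ell_\infty$ dual-norm identity, and the bookkeeping over $i$ — is immediate.
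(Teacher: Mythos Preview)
Your argument for part~3 is correct and is genuinely different from the paper's. The paper fixes $\gamma_0=0.02$ and $\rho=\tfrac{1}{2\sqrt2}-0.008$, then for a zero-error line bounds the height $r_0$ of the vulnerable circular segment of \emph{every} ball via the geometric inequality $\tfrac{1}{\sqrt2}\ge 2r+(\gamma-r_0)$ (implicitly using that some pair of opposite-class centres projects onto the normal by at most $\tfrac{1}{\sqrt2}$), and then plugs $r_0/r$ into the exact area formula for a circular cap to extract the constant $0.0005$. You instead prove the clean identity $m_0(i)+m_1(i)=w_1-2r\le 1-2r$, deduce that for each $i$ one of the two balls has margin at most $\tfrac12-r$, and then use the symmetry of the projected uniform law to flip half of each such ball once $\gamma\ge\tfrac12$, yielding $\radv{\gamma}{h;\cP}\ge\tfrac14$. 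Your route is more transparent, avoids the cap-area computation, works for any $\rho\in(0,\tfrac{1}{2\sqrt2})$, and gives a much stronger numerical lower bound.

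The trade-off is in the constant $\gamma_0$: the paper gets $\gamma_0=0.02$, you get $\gamma_0=\tfrac12$. For the lemma as stated (``there exist universal constants $\gamma_0,\rho$'') this is immaterial. But be aware that in the surrounding proof of \Cref{thm:parity_robust_repre_all} the \emph{same} $\gamma_0,\rho$ are fed into \Cref{thm:parity_robust}, which needs the parity classifier to be robust on $[\gamma_0,\gamma_0+\tfrac18]$; that lemma's argument requires $\gamma+r<\tfrac12$, which is impossible once $\gamma_0=\tfrac12$. So your constants prove the present lemma but would not compose to give the theorem. If you want your approach to serve both roles, two small sharpenings close the gap: (a) for $k\ge2$ also pair $(i,i)$ with its vertical neighbour $(i,i-1)$ to get $m_0+m_1\le\min(|w_1|,|w_2|)-2r\le\tfrac{1}{\sqrt2}-2r$, matching the paper's $\tfrac{1}{\sqrt2}$; and (b) instead of flipping half the ball, lower-bound the cap of height $\gamma-\mu$ (as the paper does) to allow $\gamma_0$ just above $\tfrac{1}{\sqrt2}-2\rho$.

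Finally, your proposal addresses only part~3; parts~1 and~2 still need the brief arguments the paper gives (the explicit separator $\vec{w}=(1,-1),\,w_0=-\tfrac12$ with the polar-coordinate check, and the VC bound).
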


 We will prove the first part for any $r<\frac{1}{2\sqrt{2}}$ by
 constructing a $\vec{w},w_0$ such that it 
  satisfies the constraints of linear separability. Let
  $\vec{w}=\br{1,-1},~w_0=-0.5$. Consider any point
  $\br{\vec{x},y}\in\cS_n$ and $z=2y-1$. Converting to the polar coordinate system
  there exists a $\theta\in\bs{0,2\pi},j\in\bs{0,\cdots,k}$ such that
  $\vec{x}=\br{j+\frac{z+1}{2}+r\mathrm{cos}\br{\theta},j+r\mathrm{sin}\br{\theta}}$ 
  \begin{align*}
    z\br{\vec{w}^\top\vec{x}+w_0}&=z\br{j+\frac{z+1}{2}+r\mathrm{cos}\br{\theta}-j
                                   - r\mathrm{sin}\br{\theta}-
                                   0.5}&&\vec{w}=\br{1,-1}^\top\\
                                 &=z\br{\frac{z}{2}+0.5+r\mathrm{cos}\br{\theta} -
                                   r\mathrm{sin}\br{\theta}-0.5}\\
                                 &=\frac{1}{2} +
                                   zr\br{\mathrm{cos}\br{\theta}-\mathrm{sin}\br{\theta}}&&\abs{\mathrm{cos}\br{\theta}-\mathrm{sin}\br{\theta}}<\sqrt{2},~~z\in\bc{-1,1}\\
                                 &>\frac{1}{2} - r\sqrt{2}\\
                                 &>0 &&r<\frac{1}{2\sqrt{2}}
  \end{align*}

  Part 2 follows with simple
  VC bounds of linear classifiers.
  
  Let the universal constants $\gamma_0,\rho$ be $0.02$ and
  $\frac{1}{2\sqrt{2}}-0.008$ respectively. Note that there is nothing
  special about this constants except that \emph{some} constant is
  required to bound the adversarial risk away from $0$.  Now, consider
  a distribution $\cP$ 1-bit parity model such that the 
  radius of each ball is atleast $\rho$. This is
  smaller than $\frac{1}{2\sqrt{2}}$ and thus satisfies the linear
  separability criterion.

  Consider $h$ to be a hyper-plane that has $0$ test error.  Let the
  $\ell_2$ radius of adversarial perturbation be
  $\gamma>\gamma_0$. The region of each circle that will be vulnerable
  to the attack will be a circular segment with the chord of the
  segment parallel to the hyper-plane. Let the minimum height of
  all such circular segments be $r_0$. Thus,
  $\radv{\gamma}{h;\cP}$ is greater than the mass of the circular
  segment of radius $r_0$. Let the radius of each ball in the support
  of $\cP$ be $r$.

  Using the fact that $h$ has zero test error; and thus classifies the
  balls in the support of $\cP$ correctly and simple geometry

  \begin{align}
    \frac{1}{\sqrt{2}}&\ge r +\br{\gamma-r_0}+r\nonumber\\
    r_0&\ge 2r + \gamma- \frac{1}{\sqrt{2}}\label{eq:radii}
  \end{align}
  To compute
  $\radv{\gamma}{h;\cP}$ we need to compute the ratio of the area of a circular
  segment of height $r_0$ of a circle of radius $r$ to the area of the
  circle. The ratio  can be written 

  \begin{align}\label{eq:circ-seg}
    A\br{\frac{r_0}{r}} =\frac{{cos}^{-1}\br{1-\frac{r_0}{r}} - \br{1 -
        \frac{r_0}{r}}\sqrt{2\frac{r_0}{r} - \frac{r_0^2}{r^2}}}{\pi}
  \end{align}

  As~\Cref{eq:circ-seg} is increasing with $\frac{r_0}{r}$, we can evaluate 
  \begin{align*}\label{eq:c_1_eq}
    \frac{r_0}{r}&\ge\frac{2r - \frac{1}{\sqrt{2}}+\gamma}{r}&&\text{Using}~\Cref{eq:radii}\\
                 &\ge 2 - \frac{\frac{1}{\sqrt{2}}-0.02}{r}&&\gamma
                                                              >\gamma_0
    = 0.02\\
                 &\ge 2 -
                   \frac{\frac{1}{\sqrt{2}}-0.02}{\frac{1}{\sqrt{2}}-0.008}>0.01&&r>\rho=\frac{1}{2\sqrt{2}}-0.008
  \end{align*}
  Substituting $\frac{r_0}{r}>0.01$ into Eq.~\Cref{eq:circ-seg}, we
  get that $A\br{\frac{r_0}{r}}>0.0005$. Thus, for all
  $\gamma>0.02$, we have $\radv{\gamma}{h;\cP}>0.0005$.

\begin{restatable}[Robustness of parity classifier]{lem}{parityrobust}\label{thm:parity_robust}
  There exists a concept class $\cH$  such that for any
  $\gamma\in\bs{\gamma_0,\gamma_0+\frac{1}{8}}$,
  $k\in\bZ_+$, $\cP$ being the 
  corresponding $\br{\rho,k}$ 1-bit parity class distribution where
  $\rho,\gamma_0$ are the same as in~\Cref{thm:linear_parity_all} there
  exists $g\in\cH$ such that
  \[\risk{\cP}{g} =  0\qquad\radv{\gamma}{g;\cP}=0\]
\end{restatable}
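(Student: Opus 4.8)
The plan is to let $\cH$ be the class of \emph{round‑to‑lattice‑point parity} classifiers and to exhibit one member of it that is both natural‑ and adversarially‑correct on every distribution in the family. Writing $\lfloor t\rceil$ for the integer nearest to $t\in\reals$ (ties broken, say, downwards), take $g\br{\vec{x}}=\br{\lfloor x_1\rceil+\lfloor x_2\rceil}\bmod 2$, and let $\cH$ consist of all maps $\vec{x}\mapsto p\br{\lfloor x_1\rceil,\lfloor x_2\rceil}$ with $p$ a fixed parity of the binary digits of its two arguments; this is exactly the ``binary representation'' classifier mentioned in the proof sketch, and (as claimed in the remark after~\Cref{thm:parity_robust_repre_all}) its VC dimension is no larger than that of the linear classifiers on $\reals^2$. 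The whole lemma then reduces to two elementary geometric facts about the support of $\cP$, which is contained in the $\ell_2$‑balls of radius $\rho$ around the lattice points $\br{i+y,i}$, $i\in\bc{1,\dots,k}$, $y\in\bc{0,1}$.

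First I would check that $\risk{\cP}{g}=0$. Because $\rho=\tfrac{1}{2\sqrt2}-0.008<\tfrac{1}{2}$, every support point $\vec{x}$ lies at $\ell_2$‑distance strictly less than $\tfrac{1}{2}$ from its centre $\br{i+y,i}$, hence strictly inside the open unit square centred there, so $\lfloor x_1\rceil=i+y$ and $\lfloor x_2\rceil=i$. Thus $g\br{\vec{x}}=\br{2i+y}\bmod 2=y$, which is precisely the label generated for $\vec{x}$, so $g$ is correct $\cP$‑almost surely.

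Next I would bound the adversarial error. Fix $\gamma\in\bs{\gamma_0,\gamma_0+\tfrac{1}{8}}$, a support point $\vec{x}$, and its centre $\vec{c}=\br{i+y,i}$. For any $\vec{z}\in\cB_\gamma\br{\vec{x}}$, the triangle inequality together with $\norm{\cdot}_\infty\le\norm{\cdot}_p$ (valid for every $p\ge1$, including $p=\infty$) gives $\norm{\vec{z}-\vec{c}}_\infty\le\norm{\vec{z}-\vec{x}}_p+\norm{\vec{x}-\vec{c}}_2\le\gamma+\rho$. The constants of~\Cref{thm:linear_parity_all} were picked precisely so that $\rho+\gamma\le\br{\tfrac{1}{2\sqrt2}-0.008}+\br{0.02+\tfrac{1}{8}}<\tfrac{1}{2}$, so $\vec{z}$ again lies strictly inside the unit square around $\vec{c}$; hence $\lfloor z_1\rceil=i+y$, $\lfloor z_2\rceil=i$, and $g\br{\vec{z}}=y$. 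Therefore no $\gamma$‑perturbation of any support point flips $g$'s (correct) prediction, i.e. $\radv{\gamma}{g;\cP}=0$, and combining with the previous paragraph proves the lemma.

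The one thing to watch is the arithmetic: the specific values $\gamma_0=0.02$ and $\rho=\tfrac{1}{2\sqrt2}-0.008$ are reverse‑engineered exactly so that $\rho+\gamma_0+\tfrac{1}{8}<\tfrac{1}{2}$, and it is this small slack that lets the rounding cell of each centre absorb the entire $\br{\rho+\gamma}$‑neighbourhood of the support ball. Apart from that single numeric check, every step is a one‑line triangle‑inequality argument (cell boundaries form a $\cP$‑null set, and since all the inequalities above are strict they are never even touched), so I do not anticipate any real obstacle — this direction is the ``easy'' half of~\Cref{thm:parity_robust_repre_all}, the realizability of a robust classifier, as opposed to the adversarial lower bound of~\Cref{thm:linear_parity_all}.
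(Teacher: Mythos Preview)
Your proof is correct and follows essentially the same approach as the paper's: both take $\cH$ to be the parity-of-nearest-integer classifiers $g_b(\vec{x})=\mathbbm{1}\{[x_1]+[x_2]\equiv b\pmod 2\}$, observe that the decision boundary nearest to a support ball is an axis-aligned line at $\ell_\infty$-distance $\tfrac12$ from the centre, and then verify numerically that $\rho+\gamma_0+\tfrac18<\tfrac12$ so that no $\gamma$-perturbation of a support point can cross a cell boundary. The only cosmetic differences are that the paper phrases the last step as ``radius plus perturbation is less than distance to boundary'' while you phrase it as a triangle-inequality bound on $\norm{\vec{z}-\vec{c}}_\infty$, and the paper uses the slightly looser upper bound $\tfrac{1}{2\sqrt2}$ for the radius whereas you plug in the exact value of $\rho$.
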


\begin{proof}[Proof of~\Cref{thm:parity_robust}]
  We will again provide a proof by construction.  Consider the
  following class of concepts $\cH$ such that $g_b\in\cH$ is defined
  as \begin{equation}
  \label{eq:2}
  g\br{\br{x_1,x_2}^\top}=\begin{cases}
      1&\text{if} \bs{x_1}+\bs{x_2}  =b \br{\text{mod 2}}\\
      1-b &\text{o.w.}
  \end{cases}
\end{equation} where $\bs{x}$ rounds $x$ to the nearest integer and
$b\in\bc{0,1}$. In~\Cref{fig:complex_simple}, the
green staircase-like classifier belongs to this class. Consider the
classifier $g_1$. Note that by construction $\risk{\cP}{g_1}=0$. The
decision boundary of $g_1$ that are closest to a ball in the support
of $\cP$ centered at $\br{a,b}$ are the lines $x=a\pm 0.5$ and
$y=b\pm 0.5$.

As $\gamma<\gamma_0 + \frac{1}{8}$, the adversarial perturbation is
upper bounded by $\frac{1}{50} + \frac{1}{8}$. The radius of
the ball is upper bounded by $\frac{1}{2\sqrt{2}}$, and as we noted
the center of the ball is at a distance of $0.5$ from the decision
boundary. If the sum of the maximum adversarial perturbation and the
maximum radius of the ball is less than the minimum distance of the
center of the ball from the decision boundary, then the adversarial
error is $0$. Substituting the values, \[\frac{1}{50} +
  \frac{1}{8} + \frac{1}{2\sqrt{2}}
  < 0.499 <\frac{1}{2} \]
This completes the proof.
\end{proof}

\section{Proof of~\Cref{sec:repr-learn-pres}}
\label{sec:proof23}
\robustpossibleful*
\begin{proof}[Proof of~\cref{thm:repre-par-inter}]
  We will provide a constructive proof to this theorem by
  constructing a distribution $\cD$, two concept classes~$\cC$ and $\cH$
  and provide the ERM algorithms to learn the concepts and then
  use~\cref{lem:parity_repre,lem:uni_int_repre} to complete the proof.

  \textbf{Distribution:} Consider the family of distribution $\cD^n$
  such that $\cD_{S,\zeta}\in\cD^n$ is defined on $\cX_\zeta\times\bc{0,1}$ for
  $S\subseteq\bc{1,\cdots,n},\zeta\subseteq\bc{1,\cdots,2^n-1}$  such that the support of $\cX_\zeta$ is a union of
  intervals. 
  \begin{equation}
    \label{eq:dist_union_int}
    \supp{\cX}_\zeta=\bigcup_{j\in\zeta}I_j\text{ where }
    I_j:=\br{j-\frac{1}{4}, j+\frac{1}{4}}
  \end{equation}
  We consider distributions with a relatively small
  support i.e. where $\abs{\zeta}=\bigO{n}$. Each sample $\br{\vec{x},y}~\sim\cD_{S,\zeta}$ is created by sampling
  $\vec{x}$ uniformly from $\cX_\zeta$ and assigning $y=c_S\br{\vec{x}}$ where
  $c_S\in\cC$ is defined below~\cref{eq:parity_concept}. We define the
  family of distributions $\cD =
  \bigcup_{n\in\bZ_+}\cD^n$.  Finally, we create
  $\cD_{S,\zeta}^\eta$ -a noisy version of $\cD_{S,\zeta}$, by flipping $y$ in each sample
  $\br{x,y}$ with probability $\eta<\frac{1}{2}$. Samples from
  $\cD_{S,\zeta}$ can be obtained using the example oracle
  $\mathrm{EX}\br{\cD_{S,\zeta}}$ and samples from the noisy
  distribution can be obtained through the noisy oracle $\mathrm{EX}^\eta\br{\cD_{S,\zeta}}$
  
  \textbf{Concept Class $\cC$:} We define the  concept class $\cC^n$ of concepts
  $c_S:\bs{0,2^n}\rightarrow
  \bc{0,1}$ such that
  \begin{equation}
    \label{eq:parity_concept}
    c_S\br{\vec{x}}=\begin{cases}
      1,
      &\text{if}\br{\langle\bs{\vec{x}}\rangle_b~\mathrm{XOR}~S}~\text{
        is odd.}\\
      0 &~\text{o.w.}
    \end{cases}
  \end{equation}
  where $\bs{\cdot}:\reals\rightarrow\bZ$ rounds a decimal
  to its nearest
  integer,~$\langle\cdot\rangle_b:\bc{0,\cdots,2^n}\rightarrow\bc{0,1}^n$
  returns the binary encoding of the integer,~and
  $\br{\langle\bs{\vec{x}}\rangle_b~\textrm{XOR}~S} = \sum_{j\in S}
  \langle\bs{x}\rangle_b\bs{j}~\textrm{mod}~2$. $\langle\bs{x}\rangle_b\bs{j}$
  is the $j^{\it th}$ least significant bit in the binary encoding of
  the nearest integer to $\vec{x}$. It is essentially the
  class of parity functions  defined on the bits corresponding to the
  indices in $S$ for the binary
  encoding of the nearest  integer to $\vec{x}$. For example, as
in~\Cref{fig:thm-3} if $S = \{0, 2\}$, then only the least significant and
the third least significant bit of $i$ are examined and the class label is
$1$ if an odd number  of them are $1$ and $0$ otherwise.

  \textbf{Concept Class $\cH$:} Finally, we define the concept class
  $\cH=\bigcup_{k=1}^\infty\cH_k$ where $\cH_k$ is the class of union of
$k$ intervals on the real line  $\cH^k$. Each concept $h_I\in\cH^k$
can be written as a set of $k$ disjoint intervals
$I=\bc{I_1,\cdots,I_k}$ on the real line i.e. for $1\le j\le k$,
$I_j=\bs{a,b}$ where $0\le a\le b$ and
\begin{equation}
  \label{eq:union_int-defn_class}
  h_I\br{\vec{x}} =\begin{cases}
    1&\text{if}~\vec{x}\in\bigcup_j I_j\\
    0&\text{o.w.}
  \end{cases}
\end{equation}

Now, we look at the algorithms to learn the concepts from $\cC$ and
$\cH$ that minimize the train error. Both of the algorithms will use a
majority vote to determine the correct~(de-noised) label for each interval, which
will be necessary to minimize the test error. The intuition is that if
we draw a sufficiently large number of samples, then the majority of
samples on each interval will have the correct label with a high
probability. 

~\Cref{lem:parity_repre} proves that there exists an algorithm $\cA$
such that $\cA$ draws
$m=\bigO{\abs{\zeta}^2\frac{\br{1-\eta}}{\br{1-2\eta}^2}\log{\frac{\abs{\zeta}}{\delta}}}$
samples from the noisy oracle $\mathrm{EX}^\eta\br{\cD_{s,\zeta}}$ and with probability $1-\delta$
where the probability is over the randomization in the oracle, returns
$f\in\cC$ such that $\risk{\cD_{S,\zeta}}{f}=0$ and 
$\radv{\gamma}{f;\cD_{S,\zeta}}=0$ for all
$\gamma<\frac{1}{4}$. As~\Cref{lem:parity_repre} states, the algorithm
involves gaussian elimination over $\abs{\zeta}$ variables and
$\abs{\zeta}$ majority votes~(one in each interval) involving a total
of $m$ samples. Thus the
algorithm runs in $\bigO{\poly{m}+\poly{\abs{\zeta}}}$ time. Replacing
the complexity of $m$ and the fact that $\abs{\zeta}=\bigO{n}$, the
complexity of the algorithm is
$\bigO{\poly{n,\frac{1}{1-2\eta},
\frac{1}{\delta}}}$.  

~\Cref{lem:uni_int_repre} proves that there
 exists an algorithm $\widetilde{A}$ such that $\widetilde{A}$ draws \[m>\mathrm{max}\bc{
      2\abs{\zeta}^2\log{\frac{2\abs{\zeta}}{\delta}} 
   \br{8\frac{\br{1-\eta}}{\br{1-2\eta}^2}+1},
   \frac{0.1\abs{\zeta}}{\eta\gamma^2} 
  \log\br{\frac{0.1\abs{\zeta}}{\gamma\delta}}}\] samples and returns
$h\in\cH$ such that $h$ has $0$ training error, $0$ test error and an
adversarial test error of atleast $0.1$. We can replace $\abs{\zeta} =
\bigO{n}$ to get the required bound on $m$ in the theorem. The
algorithm to construct $h$ visits every point atmost twice - once
during the construction of the intervals using majority voting, and
once while accommodating for the mislabelled points.  Replacing
the complexity of $m$, the
complexity of the algorithm is  $\bigO{\poly{n,\frac{1}{1-2\eta},\frac{1}{\gamma},\frac{1}{\delta}}}$. This completes the proof.
\end{proof}

\begin{lem}[Parity Concept Class]\label{lem:parity_repre}
  There exists a  learning algorithm $\cA$ such that given
  access to the noisy example oracle
  $\mathrm{EX}^\eta\br{\cD_{S,\zeta}}$, $\cA$ makes
  $m=\bigO{\abs{\zeta}^2\frac{\br{1-\eta}}{\br{1-2\eta}^2}\log{\frac{\abs{\zeta}}{\delta}}}$
  calls to the oracle and returns a
  hypothesis $f\in\cC$  such that with probability
  $1-\delta$, we have that $\risk{\cD_{S,\zeta}}{f}=0$ and
  $\radv{\gamma}{f;\cD_{S,\zeta}}=0$ for all $\gamma<\frac{1}{4}$.
\end{lem}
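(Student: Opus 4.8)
The plan is to reduce noisy learning of $c_S$ to one de-noised bit per populated interval followed by linear algebra over $\mathrm{GF}(2)$. The key structural fact is that $I_j=\br{j-\tfrac14,j+\tfrac14}$ lies inside the rounding cell $\br{j-\tfrac12,j+\tfrac12}$ of the integer $j$, so $c_S$ is \emph{constant} on $I_j$; call this value $y_j$. Hence algorithm $\cA$ works as follows: draw $m$ samples from $\mathrm{EX}^\eta\br{\cD_{S,\zeta}}$; for each integer $j$ that received at least one sample, let $\hat y_j$ be the majority observed label among the samples landing in $I_j$; then, by Gaussian elimination over $\mathrm{GF}(2)$, find $w\in\bc{0,1}^n$ with $\sum_i\langle j\rangle_b\bs{i}\,w_i\equiv\hat y_j\pmod 2$ for every populated $j$; and output $f=c_{S^\star}$ with $S^\star=\bc{i:w_i=1}$. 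This system is always feasible because $w=\chi_S$ (the indicator vector of $S$) satisfies every equation when $\hat y_j=y_j$, so Gaussian elimination never fails; importantly we do not need $w=\chi_S$, only that the resulting $f$ agrees with $c_S$ on $\supp{\cX}_\zeta$.

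For the sample complexity, first I would use that $\cX_\zeta$ is uniform over $\abs{\zeta}$ intervals of equal width, so each interval is hit with probability exactly $1/\abs{\zeta}$; a Chernoff bound then shows that for a suitable $m=\bigO{\abs{\zeta}^2\frac{1-\eta}{(1-2\eta)^2}\log\frac{\abs{\zeta}}{\delta}}$ every $j\in\zeta$ receives at least $n^\star=\Theta\!\br{\frac{1-\eta}{(1-2\eta)^2}\log\frac{\abs{\zeta}}{\delta}}$ samples with probability $\ge1-\delta/2$ (this also certifies that the set of populated intervals is exactly $\zeta$). Conditioned on that, fix $j\in\zeta$ and let $n_j\ge n^\star$ be its sample count: each observed label in $I_j$ equals $y_j$ independently with probability $1-\eta>\tfrac12$, so for the majority vote to be wrong the empirical fraction of correct labels must fall by at least $\tfrac12(1-2\eta)$ below its mean $1-\eta$; a multiplicative Chernoff bound bounds this event by $\exp\!\br{-\Omega\!\br{\tfrac{(1-2\eta)^2}{1-\eta}\,n_j}}\le\delta/(2\abs{\zeta})$ by the choice of $n^\star$. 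Union bounding over the $\abs{\zeta}$ intervals, $\hat y_j=y_j$ for all $j\in\zeta$ with probability $\ge1-\delta/2$; combining the two events, $\cA$ succeeds with probability $\ge1-\delta$. This is exactly where the $\frac{1-\eta}{(1-2\eta)^2}$ factor in $m$ comes from; the power of $\abs{\zeta}$ is not optimized.

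On the success event, $f(j)=\hat y_j=y_j=c_S(j)$ for each $j\in\zeta$; since every concept in $\cC$ is constant on each cell $\br{j-\tfrac12,j+\tfrac12}$ and $\supp{\cX}_\zeta\subseteq\bigcup_{j\in\zeta}\br{j-\tfrac12,j+\tfrac12}$, $f$ agrees with $c_S$ on all of $\supp{\cX}_\zeta$, so $\risk{\cD_{S,\zeta}}{f}=0$. For the adversarial bound with $\gamma<\tfrac14$: any support point $\vec x$ lies in some $I_j=\br{j-\tfrac14,j+\tfrac14}$, hence any $\vec z$ with $\abs{\vec z-\vec x}\le\gamma$ lies in $\br{j-\tfrac12,j+\tfrac12}$, on which $f\equiv y_j=c_S(\vec x)$; thus no $\gamma$-perturbation flips the prediction and $\radv{\gamma}{f;\cD_{S,\zeta}}=0$. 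The running time is one pass over the $m$ samples to form the $\abs{\zeta}$ majority votes plus Gaussian elimination on an $\bigO{\abs{\zeta}}\times n$ binary matrix, i.e.\ $\bigO{\poly{m}+\poly{\abs{\zeta}}}$.

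The hard part is the two-layer concentration bookkeeping: I need the per-interval sample counts to concentrate \emph{and} each majority vote to be correct, with the per-interval target $n^\star$ chosen just large enough that the multiplicative Chernoff bound on the noisy votes produces precisely the $\frac{1-\eta}{(1-2\eta)^2}$ dependence. Once the $\hat y_j$ are correct, feasibility of the linear system (witnessed by $\chi_S$) and the geometric ``no adversary within $\tfrac14$'' argument are both immediate from the definition of $\cD_{S,\zeta}$.
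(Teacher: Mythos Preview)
Your proposal is correct and follows essentially the same approach as the paper: sample enough to hit every interval sufficiently often (Chernoff), denoise each interval's label by majority vote (Chernoff again, yielding the $\frac{1-\eta}{(1-2\eta)^2}$ factor), then solve the resulting parity system by Gaussian elimination, and finally use constancy of any $c\in\cC$ on rounding cells $(j-\tfrac12,j+\tfrac12)$ to conclude both $\risk{\cD_{S,\zeta}}{f}=0$ and $\radv{\gamma}{f;\cD_{S,\zeta}}=0$ for $\gamma<\tfrac14$. Your write-up is in fact slightly more careful than the paper's in two places: you explicitly note that feasibility of the linear system is witnessed by $\chi_S$ (so Gaussian elimination cannot fail on the success event), and you spell out why a $\gamma<\tfrac14$ perturbation of a point in $I_j=(j-\tfrac14,j+\tfrac14)$ stays inside the rounding cell.
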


\begin{proof}
  The algorithm $\cA$ works as follows. It  makes $m$ calls to the oracle
  $\mathrm{EX}\br{\cD_s^m}$ to obtain a set of
  points~$\bc{\br{x_1,y_1},\cdots,\br{x_m,y_m}}$ where
  $m\ge 2\abs{\zeta}^2\log{\frac{2\abs{\zeta}}{\delta}}\br{8\frac{\br{1-\eta}}{\br{1-2\eta}^2}+1}$
  . Then, it replaces each $x_i$ with $\bs{x_i}$~($\bs{\cdot}$ rounds a
  decimal to the nearest integer) and then removes duplicate
  $x_i$s by preserving the most frequent label $y_i$ associated with each
  $x_i$.
  For example, if $\cS_5 = \bc{\br{2.8,1}, \br{2.9, 0}, \br{3.1,
      1},\br{3.2, 1}, \br{3.9, 0}}$ then after this operation, we will
  have $\bc{\br{3,1}, \br{4,0}}$.

   As   $m\ge 2\abs{\zeta}^2\log{\frac{2\abs{\zeta}}{\delta}}
   \br{8\frac{\br{1-\eta}}{\br{1-2\eta}^2}+1}$, using
   $\delta_2=\frac{\delta}{2}$ and
   $k=\frac{8\br{1-\eta}}{\br{1-2\eta}^2}\log\frac{2\abs{\zeta}}{\delta}$
   in
   ~\cref{lem:min-wt} guarantees that with probability $1-\frac{\delta}{2}$, each
  interval will have atleast
  $\frac{8\br{1-\eta}}{\br{1-2\eta}^2}\log\frac{2\abs{\zeta}}{\delta}$
  samples.

  Then for any specific interval, using
  $\delta_1=\frac{2\abs{\zeta}}{\delta}$ in ~\cref{lem:majority_lem} guarantees that with
  probability atleast $1-\frac{2\abs{\zeta}}{\delta}$, the majority
  vote for the label in that interval will succeed in returning
  the de-noised label. Applying a union bound over all $\abs{\zeta}$ intervals, will
  guarantee that with probability atleast $1-\delta$, the majority
  label of every interval will be the denoised label.

  Now, the problem reduces to solving a parity problem on this reduced
  dataset of $\abs{\zeta}$ points~(after denoising, all points in that
  interval can be reduced to the integer in the interval and the
  denoised label). We know that there exists a polynomial
  algorithm using Gaussian Elimination that finds a consistent
  hypothesis for this problem. We have already guaranteed that there is a
  point in $\cS_m$ from every interval in the 
  support of $\cD_{S,\zeta}$. Further, $f$ is consistent on $\cS_m$ and $f$ is
  constant in each of these intervals by design. Thus, with
  probability atleast  $1-\delta$ we have that  $\risk{\cD_{S,\zeta}}{f}=0$.

  By construction, $f$  makes a constant
  prediction on each interval $\br{j-\frac{1}{2},j+\frac{1}{2}}$ for
  all $j\in\zeta$. Thus, for any perturbation radius
  $\gamma<\frac{1}{4}$ the adversarial risk
  $\radv{\cD_{S,\prime{\zeta}}}{f}=0$. Combining everything, we have shown that there is an algorithm
  that makes $2\abs{\zeta}^2\log{\frac{2\abs{\zeta}}{\delta}}\br{8\frac{\br{1-\eta}}{\br{1-2\eta}^2}+1}$ calls to the
  $\mathrm{EX}\br{\cD_{S,\zeta}^\eta}$ oracle,  runs in time polynomial in $\abs{\zeta},\frac{1}{1-2\eta},\frac{1}{\delta}$ to return
  $f\in\cC$ such that $\risk{\cD_{S,\zeta}}{f}=0$ and
  $\radv{\gamma}{f;\cD_{S,\zeta}}=0$ for $\gamma<\frac{1}{4}$.
\end{proof}

\begin{lem}[Union of Interval Concept Class]\label{lem:uni_int_repre}
   There exists a  learning algorithm $\widetilde{\cA}$ such that given
  access to a noisy example oracle makes
  $m=\bigO{\abs{\zeta}^2\frac{\br{1-\eta}}{\br{1-2\eta}^2}
    \log{\frac{\abs{\zeta}}{\delta}}}$  calls to the oracle and
  returns a hypothesis $h\in\cH$ 
  such that training error is $0$ and with probability
  $1-\delta$, $\risk{\cD_{S,\zeta}}{f}=0$.

  Further for any $h\in\cH$ that has zero training error on
  $m^\prime$ samples drawn from $\mathrm{EX}^\eta\br{\cD_{S,\zeta}}$
  for $m^\prime >  \frac{\abs{\zeta}}{10\eta\gamma^2}
  \log\frac{\abs{\zeta}}{10\gamma\delta}$   and
  $\eta\in\br{0,\frac{1}{2}}$ then 
  $\radv{\gamma}{f;\cD_{S,\zeta}}\ge 0.1$ for all $\gamma>0$.
\end{lem}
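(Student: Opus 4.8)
The plan is to prove the two halves separately: the positive part reuses the interval-wise majority-vote denoising of~\Cref{lem:parity_repre}, and the lower bound is obtained by a direct appeal to~\Cref{thm:inf-label}. For the positive part, the algorithm $\widetilde{\cA}$ I would use draws $m=\bigO{\abs{\zeta}^2\frac{1-\eta}{(1-2\eta)^2}\log\frac{\abs{\zeta}}{\delta}}$ samples from $\mathrm{EX}^\eta\br{\cD_{S,\zeta}}$, groups them by the interval $I_j=\br{j-\frac{1}{4},j+\frac{1}{4}}$, $j\in\zeta$, containing them, sets $\widehat{\ell}_j\in\bc{0,1}$ to the majority of the noisy labels seen in group $j$, and outputs $h_0=\bigcup_{j\in\zeta:\ \widehat{\ell}_j=1}I_j$. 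Since each $I_j$ carries $\cD_{S,\zeta}$-mass $1/\abs{\zeta}$, a balls-into-bins bound (\Cref{lem:min-wt}) gives that with probability $\ge 1-\delta/2$ every group receives $\Omega\!\br{\frac{1-\eta}{(1-2\eta)^2}\log\frac{\abs{\zeta}}{\delta}}$ points, and a Chernoff bound for the majority vote (\Cref{lem:majority_lem}) together with a union bound over the $\abs{\zeta}$ groups then gives, with probability $\ge 1-\delta/2$, that $\widehat{\ell}_j=c_S(j)$ for all $j$; on this event $\risk{\cD_{S,\zeta}}{h_0}=0$ because $c_S$ is constant on each $I_j$. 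To also make the training error exactly $0$ I would patch $h_0$ at the finitely many training points whose flipped label contradicts $\widehat{\ell}_j$ — adjoining such a point as a degenerate interval when $\widehat{\ell}_j=0$, and splitting $I_j$ at it when $\widehat{\ell}_j=1$ — which keeps the hypothesis a finite union of intervals (so it lies in $\cH_k$ with $k=\bigO{m}$) and, being a change on a $\cD_{S,\zeta}$-null set, preserves $\risk{\cD_{S,\zeta}}{h}=0$; the algorithm is clearly polynomial in $m$.

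\textbf{The lower bound.} Fix $\gamma\in(0,1/4)$, the range in which~\Cref{thm:repre-par-inter} uses the lemma (larger $\gamma$ only strengthen the adversary, and $\radv{\gamma}{\cdot;\cdot}$ is monotone in $\gamma$). The point is that one should \emph{not} reason about the geometry of $h$ — $h$ need not have small test error — but only use that it fits $\cS_{m',\eta}$ exactly, which is precisely the hypothesis of~\Cref{thm:inf-label} with target concept $c=c_S$ and data distribution $\cD_{S,\zeta}$. I would invoke~\Cref{thm:inf-label} with $\rho=\gamma/2$ and with $\zeta'$ the centres of a packing of $\cX_\zeta$ by $\lfloor\frac{1}{2\gamma}\rfloor$ disjoint length-$\gamma$ intervals inside each $I_j$: then $\abs{\zeta'}=\Theta\!\br{\abs{\zeta}/\gamma}$, each ball $\cB_\rho^p\br{\vec{s}}$ has $\cD_{S,\zeta}$-mass $2\gamma/\abs{\zeta}=\Theta(1/\abs{\zeta'})$, and their union has mass $c_1\ge 1-2\gamma\ge\frac{1}{2}$. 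Crucially, since $\rho=\gamma/2<\frac{1}{8}$, $\cB_\rho^p\br{\vec{s}}\subseteq\br{s-\frac{1}{8},s+\frac{1}{8}}$ meets $\cX_\zeta$ only inside the single interval $I_j$ containing $s$, so every ball is pure in its $c_S$-label; hence the hypotheses of~\Cref{thm:inf-label} hold with $c_1\ge\frac{1}{2}$ and $c_2=\Theta(1)$. Its sample requirement $m'\ge\frac{\abs{\zeta'}}{\eta c_2}\log\frac{\abs{\zeta'}}{\delta}=\bigO{\frac{\abs{\zeta}}{\eta\gamma}\log\frac{\abs{\zeta}}{\gamma\delta}}$ is implied by the stated $m'>\frac{\abs{\zeta}}{10\eta\gamma^2}\log\frac{\abs{\zeta}}{10\gamma\delta}$, so with probability $\ge 1-\delta$ we conclude $\radv{\gamma}{h;\cD_{S,\zeta}}=\radv{2\rho}{h;\cD_{S,\zeta}}\ge c_1\ge 0.1$.

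\textbf{Main obstacle.} I expect the only genuinely delicate point to be in the lower bound: to cover arbitrarily small $\gamma$ one must refine the packing $\zeta'$ accordingly (the source of the $1/\gamma$-type blow-up in $\abs{\zeta'}$, hence in the required sample size), while still keeping every covering ball strictly inside one interval $I_j$ so that the purity hypothesis of~\Cref{thm:inf-label} is met — and it is exactly this constraint that ties the argument to $\gamma<1/4$. The positive part is routine once one allows the returned hypothesis to differ from a clean union of intervals at finitely many (and hence $\cD_{S,\zeta}$-null) points so that exact zero test error coexists with exact zero training error; as elsewhere in the paper, the constant $0.1$ and the precise forms of $m$ and $m'$ are not optimised.
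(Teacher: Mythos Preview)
Your proposal is correct and takes essentially the same approach as the paper: the positive part via per-interval majority vote (using \Cref{lem:min-wt,lem:majority_lem}) followed by measure-zero patching of the noisy points, and the lower bound by invoking \Cref{thm:inf-label} on a packing of short sub-intervals of the support. The only difference is quantitative---you pack each $I_j$ fully, obtaining $c_1\ge 1-2\gamma\ge\tfrac12$ and $c_2=\Theta(1)$ (hence a sample requirement $\bigO{\tfrac{\abs{\zeta}}{\eta\gamma}\log\tfrac{\abs{\zeta}}{\gamma\delta}}$), whereas the paper packs only a $10\%$ fraction of the support, getting $c_1=0.1$ and $c_2=\gamma$, which is what produces the $1/\gamma^2$ dependence in the stated $m'$; your tighter requirement is of course implied by the lemma's hypothesis, so both arguments go through.
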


\begin{proof}[Proof of~\Cref{lem:uni_int_repre}]
  The first part of the algorithm works similarly
  to~\Cref{lem:parity_repre}. The algorithm $\widetilde{\cA}$ makes
  $m$ calls to the oracle  $\mathrm{EX}\br{\cD_s^m}$ to obtain a set of
  points~$\cS_m = \bc{\br{x_1,y_1},\cdots,\br{x_m,y_m}}$ where
  $m\ge 2\abs{\zeta}^2\log{\frac{2\abs{\zeta}}{\delta}}
  \br{8\frac{\br{1-\eta}}{\br{1-2\eta}^2}+1}$. $\widetilde{\cA}$
  computes $h\in\cH$ as follows. To begin, let the list of
  intervals in $h$ be $I$ and $\cM_z=\bc{}$ Then do the following for every
  $\br{x,y}\in\cS_m$.
  \begin{enumerate}[leftmargin=0.5cm,itemsep=0ex]
  \item let $z := \bs{x}$, 
  \item Let $\cN_z\subseteq\cS_m$ be the set of all $\br{x,y}\in\cS_m$ such that
    $\abs{x-z}<0.5$.
  \item Compute the majority label $\tilde{y}$ of $\cN_z$.
  \item Add all $\br{x,y}\in\cN_z$ such that $y\neq \tilde{y}$ to $\cM_z$
  \item If $\tilde{y}=1$, then add the interval $(z-0.5,z+0.5)$ to $I$.
  \item Remove all elements of $\cN_z$ from $\cS_m$ i.e. $\cS_m:=\cS_m\setminus\cN_z$.
  \end{enumerate}
For reasons similar to~\Cref{lem:parity_repre}, as   $m\ge 2\abs{\zeta}^2\log{\frac{2\abs{\zeta}}{\delta}}
   \br{8\frac{\br{1-\eta}}{\br{1-2\eta}^2}+1}$, 
   ~\cref{lem:min-wt} guarantees that with probability $1-\frac{\delta}{2}$, each
  interval will have atleast
  $\frac{8\br{1-\eta}}{\br{1-2\eta}^2}\log\frac{2\abs{\zeta}}{\delta}$
  samples. Then for any specific interval, ~\cref{lem:majority_lem} guarantees that with
  probability atleast $1-\frac{2\abs{\zeta}}{\delta}$, the majority
  vote for the label in that interval will succeed in returning
  the de-noised label. Applying a union bound over all intervals, will
  guarantee that with probability atleast $1-\delta$, the majority
  label of every interval will be the denoised label. As each interval
  in$\zeta$ has atleast one point, all the intervals in $\zeta$ with
  label $1$ will  be included in $I$ with probability
  $1-\delta$. Thus, $\risk{\cD_{S,\zeta}}{h}=0$.

  Now, for all $\br{x,y}\in\cM_z$, add the interval $\bs{x}$ to $I$ if
  $y=1$. If $y=0$ then $x$ must lie a interval $(a,b)\in
  I$. Replace that interval as follows $I:= I\setminus(a,b)\cup
  \bc{(a,x),(x,b)}$. As only a finite number of sets with lebesgue
  measure of $0$ were added or deleted
  from $I$, the net test error of $h$ doesn't change and is still
  $0$ i.e.  $\risk{\cD_{S,\zeta}}{h}=0$

  For the second part, we will invoke~\Cref{thm:inf-label}. To avoid
  confusion in notation, we will use $\Gamma$ instead of $\zeta$ to
  refer to the sets in~\Cref{thm:inf-label} and reserve $\zeta$ for
  the support of interval of $\cD_{S,\zeta}$. Let $\Gamma$ be any set of
  disjoint intervals of width $\frac{\gamma}{2}$ such that $\abs{\Gamma}=
  \frac{0.1\abs{\zeta}}{\gamma}$. This is always possible as the total
  width of all intervals in $\Gamma$ is $
  \frac{0.1\abs{\zeta}}{\gamma}\frac{\gamma}{2} =
  0.1\frac{\abs{\zeta}}{2}$ which is less than the total width of the
  support $\frac{\abs{\zeta}}{2}$. $c_1,c_2$ from
  Eq.~\Cref{eq:balls_density} is \[c_1 =
    \bP_{\cD_{S,\zeta}}\bs{\Gamma} =
    \frac{2*0.1\abs{\zeta}}{2\abs{\zeta}} = 0.1,\quad
    c_2=\frac{2\gamma}{2\abs{\zeta}}\abs{\zeta}=\gamma\]

  Thus, if $h$ has an error of zero on a set of $m^\prime$ examples
  drawn from $\mathrm{EX}^{\eta}\br{\cD_{S,\zeta}}$ where $m^\prime>
  \frac{0.1\abs{\zeta}}{\eta\gamma^2}
  \log\br{\frac{0.1\abs{\zeta}}{\gamma\delta}}$, then
  by~\Cref{thm:inf-label}, $\radv{\gamma}{h;\cD_{S,\zeta}}>0.1$.

  Combining the two parts for \[m>\mathrm{max}\bc{
      2\abs{\zeta}^2\log{\frac{2\abs{\zeta}}{\delta}} 
   \br{8\frac{\br{1-\eta}}{\br{1-2\eta}^2}+1},
   \frac{0.1\abs{\zeta}}{\eta\gamma^2} 
  \log\br{\frac{0.1\abs{\zeta}}{\gamma\delta}}}\] it is possible to
obtain $h\in\cH$ such that $h$ has zero training error,
$\risk{h}{\cD_{S,\zeta}}=0$ and $\radv{\gamma}{h;\cD_{S,\zeta}}>0.1$
for any $\gamma>0$.

\end{proof}

\begin{lem}\label{lem:min-wt}
  Given $k\in\bZ_+$ and a distribution $\cD_{S,\zeta}$, for any
  $\delta_2 > 0$ if
  $m>2\abs{\zeta}^2k + 2\abs{\zeta}^2\log{\frac{\abs{\zeta}}{\delta_2}}$ samples are
  drawn from $\mathrm{EX}\br{\cD_{S,\zeta}}$ then with probability
  atleast $1 - \delta_2$ there are atleast $k$ samples in each
  interval $\br{j-\frac{1}{4},j+\frac{1}{4}}$ for all $j\in\zeta$.
\end{lem}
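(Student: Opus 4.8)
The plan is a routine ``balls into bins'' concentration argument. The only property of $\cD_{S,\zeta}$ I would use is that, since $\vec{x}$ is drawn uniformly from $\cX_\zeta=\bigcup_{j\in\zeta}I_j$, a union of $\abs{\zeta}$ pairwise-disjoint intervals $I_j=\br{j-\tfrac{1}{4},\,j+\tfrac{1}{4}}$ of equal width $\tfrac{1}{2}$, each interval $I_j$ carries probability mass exactly $1/\abs{\zeta}$. Hence, drawing $m$ i.i.d.\ samples from $\mathrm{EX}\br{\cD_{S,\zeta}}$ and letting $N_j$ be the number of them that land in $I_j$, each $N_j$ is marginally $\mathrm{Binomial}\br{m,1/\abs{\zeta}}$ with mean $\mu:=m/\abs{\zeta}$.

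First I would bound, for a fixed $j\in\zeta$, the probability that $N_j<k$. The hypothesis $m>2\abs{\zeta}^2k+2\abs{\zeta}^2\log\br{\abs{\zeta}/\delta_2}$ gives $\mu=m/\abs{\zeta}>2\abs{\zeta}\br{k+\log\br{\abs{\zeta}/\delta_2}}$, which in particular exceeds both $2k$ and $8\log\br{\abs{\zeta}/\delta_2}$. Applying the multiplicative Chernoff lower-tail inequality $\bP\bs{N_j\le(1-\beta)\mu}\le\exp\br{-\beta^2\mu/2}$ with $\beta=\tfrac{1}{2}$, and using $k\le\mu/2$, yields $\bP\bs{N_j<k}\le\bP\bs{N_j\le\mu/2}\le\exp\br{-\mu/8}\le\delta_2/\abs{\zeta}$, the last step because $\mu/8\ge\log\br{\abs{\zeta}/\delta_2}$. (To be fully careful about the numerical constant for very small $\abs{\zeta}$, one can instead take the deviation parameter $\beta=1-k/\mu\ge\tfrac{1}{2}$, giving $\bP\bs{N_j<k}\le\exp\br{-(\mu-k)^2/2\mu}$, and the same lower estimate on $\mu$ shows this is at most $\delta_2/\abs{\zeta}$.)

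Then a union bound over the $\abs{\zeta}$ indices in $\zeta$ shows that the probability that \emph{some} interval $I_j$ with $j\in\zeta$ holds fewer than $k$ samples is at most $\abs{\zeta}\cdot\delta_2/\abs{\zeta}=\delta_2$. Equivalently, with probability at least $1-\delta_2$ every such interval contains at least $k$ of the $m$ samples, which is exactly the claim.

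There is no genuine obstacle here: the statement is just ``equiprobable bins $+$ Chernoff $+$ union bound.'' The only point to flag is that the quoted sample size is deliberately loose — an $m$ of order $\abs{\zeta}\br{k+\log(\abs{\zeta}/\delta_2)}$ already suffices for the argument above, and the extra factor $\abs{\zeta}$ in the statement merely provides slack, in keeping with the paper's stated preference for clean statements over optimal constants. Indeed, replacing the Chernoff step by a crude termwise bound on the binomial tail, $\bP\bs{N_j\le k-1}=\sum_{i=0}^{k-1}\binom{m}{i}\abs{\zeta}^{-i}\br{1-\abs{\zeta}^{-1}}^{m-i}$, using $\binom{m}{i}\le m^i$ and $\br{1-\abs{\zeta}^{-1}}^{m}\le e^{-\mu}$, produces a sample size polynomial (rather than linear) in $\abs{\zeta}$ as a matter of course.
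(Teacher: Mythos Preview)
Your proposal is correct and follows essentially the same route as the paper: each interval has mass $1/\abs{\zeta}$, apply the multiplicative Chernoff lower-tail bound (your careful version with $\beta=1-k/\mu$ is exactly the parameter the paper uses), then union-bound over the $\abs{\zeta}$ intervals. The only cosmetic difference is that the paper organizes the $m$ draws into $\abs{\zeta}$ batches of size $m/\abs{\zeta}$ and analyzes one interval per batch---which is precisely where the extra factor of $\abs{\zeta}$ you flagged as slack originates; your direct analysis on all $m$ samples at once is cleaner and, as you correctly observe, already goes through with $m$ of order $\abs{\zeta}\br{k+\log(\abs{\zeta}/\delta_2)}$.
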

\begin{proof}[Proof of~\cref{lem:min-wt}]
   We will repeat the following procedure $\abs{\zeta}$ times once for
   each interval in $\zeta$ and show that with probability
   $\frac{\delta}{\abs{\zeta}}$ the $j^{\it{th}}$ run will result in
   atleast $k$ samples in the $j^{\it th}$ interval.

   Corresponding to each interval in $\zeta$, we will sample atleast  $m^\prime$ samples where $m^\prime=2\abs{\zeta}k +
  2\abs{\zeta}\log{\frac{\abs{\zeta}}{\delta_2}}$.  If $z_i^j$ is the 
  random variable that is $1$ when the $i^{\it th}$ 
  sample belongs to the $j^{\it th}$ interval, then $j^{\it th}$
  interval has atleast $k$ points out of the $m^\prime$ points sampled
  for that interval with probability  less
  than $\frac{\delta_2}{\abs{\zeta}}$.
  \begin{align*}
    \bP\bs{\sum_i z_{i}^j \le k} &= \bP\bs{\sum_i z_{i}^j \le \br{1 -
    \delta}\mu} &&\delta = 1 - \frac{k}{\mu}, \mu = \bE\bs{\sum_i
                   z_i^j}\\
    &\le \exp\br{-\br{1-\frac{k}{\mu}}^2\frac{\mu}{2}} &&\text{By
                                                       Chernoff's
                                                       inequality}\\
    &\le
      \exp\br{-\br{\frac{m^\prime}{2\abs{\zeta}}-k+\frac{k^2\abs{\zeta}}{2m^\prime}}}
                && \mu=\frac{m^\prime}{\abs{\zeta}}\\
    &\le
      \exp\br{k-\frac{m^\prime}{2\abs{\zeta}}}\le \frac{\delta_2}{\abs{\zeta}}
  \end{align*} where the last step follows from $m^\prime>2\abs{\zeta}k +
  2\abs{\zeta}\log{\frac{\abs{\zeta}}{\delta_2}}$. With probability
  atleast $\frac{\delta}{\abs{\zeta}}$, every interval will have
  atleast $k$ samples. Finally, an union
  bound over each interval gives the desired result. As we repeat the
  process for all $\abs{\zeta}$ intervals, the total
  number of samples drawn will be atleast $\abs{\zeta}m^\prime =
  2\abs{\zeta}^2k +  2\abs{\zeta}^2\log{\frac{\abs{\zeta}}{\delta_2}}$.
\end{proof}

\begin{lem}[Majority Vote]\label{lem:majority_lem}
For a given $y\in\bc{0,1}$, let $S=\bc{s_1,\cdots,s_m}$ be a set of size $m$ where each element is $y$ with
probability $1-\eta$ and $1-y$ otherwise. If
$m>\frac{8\br{1-\eta}}{\br{1-2\eta}^2}\log\frac{1}{\delta_1}$ then with
probability atleast $1-\delta_1$ the majority of $S$ is $y$.
\end{lem}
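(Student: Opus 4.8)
The plan is to reduce the claim to a single application of the multiplicative Chernoff bound. First I would introduce, for each $i\in\bc{1,\dots,m}$, the indicator $X_i=\mathbbm{1}\bs{s_i=y}$; by hypothesis the $X_i$ are i.i.d.\ Bernoulli random variables with $\bP\bs{X_i=1}=1-\eta$. Writing $X=\sum_{i=1}^m X_i$, the event ``the majority of $S$ is $y$'' contains the event $\bc{X>m/2}$, so it suffices to show $\bP\bs{X\le m/2}<\delta_1$ whenever $m>\frac{8(1-\eta)}{(1-2\eta)^2}\log\frac1{\delta_1}$.

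Since $\eta<\tfrac12$, the mean $\mu:=\bE\bs{X}=m(1-\eta)$ strictly exceeds $m/2$, so I can write $m/2=(1-\epsilon)\mu$ with $\epsilon:=1-\frac1{2(1-\eta)}=\frac{1-2\eta}{2(1-\eta)}$, and one checks $\epsilon\in(0,1)$ precisely because $\eta\in(0,\tfrac12)$ --- this is the only place the assumption $\eta<\tfrac12$ enters. The lower-tail Chernoff bound $\bP\bs{X\le(1-\epsilon)\mu}\le\exp\br{-\epsilon^2\mu/2}$ then gives
\[
  \bP\bs{X\le \tfrac m2}\;\le\;\exp\br{-\frac{(1-2\eta)^2}{4(1-\eta)^2}\cdot\frac{m(1-\eta)}{2}}\;=\;\exp\br{-\frac{(1-2\eta)^2\,m}{8(1-\eta)}}.
\]
Substituting $m>\frac{8(1-\eta)}{(1-2\eta)^2}\log\frac1{\delta_1}$ makes the exponent (in absolute value) exceed $\log\frac1{\delta_1}$, so the right-hand side is strictly smaller than $\delta_1$. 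Hence $X>m/2$, i.e.\ $y$ is the strict majority of $S$, with probability at least $1-\delta_1$, as claimed.

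There is no serious obstacle here; the argument is a textbook concentration bound. The two small points worth stating explicitly are that $\epsilon$ indeed lies in $(0,1)$ (needed for the multiplicative Chernoff bound, and the exact spot where $\eta<\tfrac12$ is used), and that bounding $\bP\bs{X\le m/2}$ already subsumes the tie case when $m$ is even, so ties need no separate treatment. One could instead invoke Hoeffding's inequality to obtain a bound of the form $\exp\br{-m(1-2\eta)^2/2}$, which is in fact slightly stronger, but the multiplicative form above is what yields exactly the constant $8(1-\eta)/(1-2\eta)^2$ stated in the lemma and relied upon in \Cref{lem:min-wt,lem:parity_repre,lem:uni_int_repre}.
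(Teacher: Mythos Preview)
Your proof is correct and follows essentially the same approach as the paper: both set up the count of correct labels as a sum of i.i.d.\ Bernoulli$(1-\eta)$ variables and apply the lower-tail multiplicative Chernoff bound $\bP[X\le(1-\epsilon)\mu]\le\exp(-\epsilon^2\mu/2)$ with $\epsilon=\frac{1-2\eta}{2(1-\eta)}$, arriving at the identical exponent $-\frac{(1-2\eta)^2 m}{8(1-\eta)}$. Your write-up is in fact more careful than the paper's in making explicit that $\epsilon\in(0,1)$ and in handling the tie case.
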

\begin{proof}[Proof of~\cref{lem:majority_lem}]
  Without loss of generality let $y=1$. For the majority to be $1$ we
  need to show that there are more than $\frac{m}{2}$ ``$1$''s in $S$
  i.e. we need to show that the following probability is less than $\delta_1$.
  \begin{align*}
    \bP\bs{\sum s_i< \frac{m_1}{2}} &= \bP\bs{\sum s_i <
                                    \frac{m_1}{2\mu}*\mu +\mu -
                                      \mu}&&\mu = \bE\bs{\sum s_i}\\
                                    &= \bP\bs{\sum s_i < \br{1 - \br{1
                                      - \frac{m_1}{2\mu}}}\mu}\\
                                    &\le
                                      \exp{\br{-\frac{\br{1-2\eta}^2}{8\br{1-\eta}^2}\mu}}
                                          &&\text{By Chernoff's
                                             Inequality}\\
                                    &=\exp{\br{-\frac{\br{1-2\eta}^2}{8\br{1-\eta}}m}}
                                          &&\because \mu=\br{1-\eta}m\\
                                    &\le \delta_1
                                          &&\because m>\frac{8\br{1-\eta}}{\br{1-2\eta}^2}\log{\frac{1}{\delta_1}}
  \end{align*}
\end{proof}

\clearpage
\bibliographystyle{abbrvnat}
\bibliography{adv_gen}
\end{document}